\documentclass{article}

 \usepackage[preprint]{neurips_2026}

\usepackage{wrapfig}
\usepackage[utf8]{inputenc} 
\usepackage[T1]{fontenc}    
\usepackage{hyperref}       
\usepackage{url}            
\usepackage{booktabs}       
\usepackage{amsfonts}       
\usepackage{nicefrac}       
\usepackage{microtype}      
\usepackage{xcolor}         
\usepackage{titletoc}       
\usepackage{graphicx}
\usepackage{subcaption}
\usepackage{thmtools,thm-restate,mathtools,amsthm}
\usepackage{comment,algorithm}
\newtheorem{theorem}{Theorem}

\newtheorem{assumption}{Assumption}

\newtheorem{definition}{Definition}

\newenvironment{proofsketch}{%
  \proof}{\endproof}

\newcommand\independent{\protect\mathpalette{\protect\independenT}{\perp}}
    \def\independenT#1#2{\mathrel{\rlap{$#1#2$}\mkern2mu{#1#2}}}

\usepackage{tikz}

\newdimen\arrowsize
\pgfarrowsdeclare{arcsq}{arcsq}
{
	\arrowsize=0.2pt
	\advance\arrowsize by .5\pgflinewidth
	\pgfarrowsleftextend{-4\arrowsize-.5\pgflinewidth}
	\pgfarrowsrightextend{.5\pgflinewidth}
}
{
	\arrowsize=1.5pt
	\advance\arrowsize by .5\pgflinewidth
	\pgfsetdash{}{0pt} 
	\pgfsetroundjoin   
	\pgfsetroundcap    
	\pgfpathmoveto{\pgfpoint{0\arrowsize}{0\arrowsize}}
	\pgfpatharc{-90}{-140}{4\arrowsize}
	\pgfusepathqstroke
	\pgfpathmoveto{\pgfpointorigin}
	\pgfpatharc{90}{140}{4\arrowsize}
	\pgfusepathqstroke
}

\tikzset{
    block/.style={
        circle, draw, fill=white
        },
    myarrow/.style={
        single arrow,  
        draw, 
        single arrow head extend=2mm, minimum width=30pt,
        },    
    myar/.style={
        rounded corners=2pt, fill=black!20, 
        },
    mytri/.style={
        isosceles triangle, anchor=apex,
        isosceles triangle apex angle=90,
        minimum width=50pt
        },
    }

\title{Structural Extrapolated Data Generation}

\author{%
    Kun Zhang$^{\dagger,2,1}$ \quad
    Jiaqi Sun$^{\dagger,1,2}$ \quad
    Yiqing Li$^{2}$ \quad
    Ignavier Ng$^{1}$ \quad
    Namrata Deka$^{1}$ \quad
    Shaoan Xie$^{1,2}$ \\
    $^1$Carnegie Mellon University \\
    $^2$Mohamed bin Zayed University of Artificial Intelligence\\
    \texttt{\{kunz1,jiaqisun\}@cmu.edu} \\
}

\begin{document}

\maketitle

\footnotetext[2]{Equal contribution.}

\begin{abstract}
This paper aims to address the challenge of data generation beyond the training data and proposes a framework for Structural Extrapolated Data GEneration (SEDGE) based on suitable assumptions on the underlying data-generating process. We provide conditions under which  data satisfying novel specifications can be generated reliably, together with the approximate identifiability of the distribution of such data under certain ``conservative" assumptions, as well as the inherent non-identifiability of this distribution without such assumptions.  On the algorithmic side, we develop practical methods to achieve extrapolated data generation, based on a structure-informed optimization strategy or diffusion posterior sampling, respectively.  We verify the extrapolation performance on synthetic data and also consider extrapolated image generation as a real-world scenario to illustrate the validity of the proposed framework.
\end{abstract}
\section{Introduction}
\vspace{-2mm}
Generative AI, designed to generate new content in the form of images, text, audio, or videos, has seen major breakthroughs and attracted much attention in recent years. For instance, in the image domain, well-known generative AI tools, including the latent diffusion model \cite{Rombach2021HighResolutionIS}, DALL-E by OpenAI \cite{Ramesh2021ZeroShotTG}, Google's Imagen \cite{Saharia2022PhotorealisticTD}, can generate high-quality images in a controllable manner.

However, in many real-world problems,  we expect generative AI tools to be able to produce novel content and automate creative tasks in a reliable way. That is, they are required not only to interpolate within known regimes, but also to extrapolate beyond the support of available data. For instance, in AI-generated images for creative design, the desired content or visual specifications often lie outside the support of the training data. Similarly, in drug discovery, candidate molecules are optimized toward target properties to address new diseases or previously unseen combinations of diseases, for which no direct training examples exist.  In such scenarios, the ability to generate reliable extrapolated data is essential, but as a classical problem in machine learning, especially in deep learning, it still remains a fundamental challenge.

Extrapolated data generation differs in theory from standard data augmentation \cite{Shorten2019ASO} or interpolation (estimating unknown data points within the range of a given dataset).  Interpolation can be achieved, for instance, by making use of suitable smoothness assumptions on the function to be estimated from data; in contrast, extrapolation often requires a deeper understanding of the process behind the given data and applying it to novel scenarios. Otherwise, if one directly uses the learned flexible, deep data-generation model for extrapolation, the value of the input of the model is out of support of the training data, and the output of the model often does not have correctness guarantees. In fact, existing generative models, despite their expressive power, frequently fail when tasked with producing data that lie outside the training distribution \cite{gokhale2022benchmarking, luo2024text_mol, vatsa2025right}.

Researchers have developed several types of assumptions or constraints to deal with the extrapolated generation issues in machine learning.  A typical one is to directly constrain the functional class of the prediction function to be linear (or parametric) or with an additive form. Some other methods exploit compositional properties of generative functions to reduce instability during generation \cite{liu2022compositional_diffusion, wiedemer2023compositional, lachapelle2023additive}.  Some work derives extrapolation guarantees from regularities on distribution shifts, for instance, smoothness \cite{kong2024towards}. These developments do not provide a general framework for extrapolated data generation given novel specifications when properties of the involved functions are unknown. 

In this paper, we study extrapolated data generation from a principled structural perspective, without directly imposing rather strong assumptions on the underlying functions.
First, we provide an open analysis of two different data-generating processes, represented by graphical models, to formulate the relationship between {\it specifications} and {\it features} of the data to be generated: modeling specifications as conditioning variables that generate features, as commonly adopted in conditional generation frameworks (as seen in Fig. \ref{fig:two_structures}(a), in which $Z_i$ and $X_i$ denote specifications and features, respectively), or modeling the process as features generating or satisfying specifications (see Fig. \ref{fig:two_structures}(b)). We show that the latter structure, in which features point to specifications in the graphical representation, offers a more natural explanation for unseen data.
%
We further improve the graphical representation to
explain why the novel scenarios do not appear in the training data, which also enables a principled treatment to go beyond such data. 
Interestingly, this structural view reveals a key source of difficulty in extrapolation: features directly involved in multiple specifications are inherently harder to extrapolate, since each specification imposes additional constraints on the admissible data. In contrast, in scenarios where such shared features are absent, extrapolation becomes substantially more straightforward.

The contributions of this work are summarized as follows. 
1) We propose a structural view of the extrapolated data generation problem and conduct a systematic investigation to characterize the conditions under which valid extrapolated data can be generated and those that enable the whole distribution of the extrapolated data to be approximately constructed.
2) We develop practical algorithms to optimize for the extrapolated data or
sample from the constructed conditional distribution given the novel combinations of specifications. 
3) We further demonstrate the effectiveness of the proposed framework on both synthetic extrapolation tasks and extrapolated image generation.
\section{Initial Thoughts}\label{sec:motivation}
\vspace{-2mm}
\begin{wrapfigure}{r}{0.4\textwidth}
\centering
\vspace{-3mm}
\subcaptionbox{Specification generates features.}{
\resizebox{0.18\columnwidth}{!}{
\begin{tikzpicture}[scale=.65, line width=0.5pt, inner sep=0.1mm, shorten >=.1pt, shorten <=.1pt]
	\draw (0, 0) node(X1) [circle, draw]  {{\footnotesize\,${X}_1$\,}};
	\draw (-2*0.7, -1*0.7) node(Z1)[circle, draw]  {{\footnotesize\,${Z}_1$\,}};
    \draw (0, -2*0.7) node(X2) [circle, draw]  {{\footnotesize\,${X}_2$\,}};
	\draw (-2*0.7, -3*0.7) node(Z2)[circle, draw]  {{\footnotesize\,${Z}_2$\,}};
    \draw (0, -4*0.7) node(X3) [circle, draw]  {{\footnotesize\,${X}_3$\,}};
    
    \node at (2*0.7, -2*0.7) {}; 

	\draw[-arcsq] (Z1) -- (X1); 
	\draw[-arcsq] (Z1) -- (X2); 
	\draw[-arcsq] (Z2) -- (X2); 
	\draw[-arcsq] (Z2) -- (X3);
\end{tikzpicture}
}}
\hfill
\subcaptionbox{Features produce specifications.}{
\resizebox{0.18\columnwidth}{!}{
\begin{tikzpicture}[scale=.65, line width=0.5pt, inner sep=0.2mm, shorten >=.1pt, shorten <=.1pt]
	\draw (0, 0) node(X1) [circle, draw]  {{\footnotesize\,${X}_1$\,}};
	\draw (2*0.7, -1*0.7) node(Z1)[circle, draw]  {{\footnotesize\,${Z}_1$\,}};
    \draw (0, -2*0.7) node(X2) [circle, draw]  {{\footnotesize\,${X}_2$\,}};
	\draw (2*0.7, -3*0.7) node(Z2)[circle, draw]  {{\footnotesize\,${Z}_2$\,}};
    \draw (0, -4*0.7) node(X3) [circle, draw]  {{\footnotesize\,${X}_3$\,}};
    
    \node at (4*0.7, -2*0.7) {};

	\draw[-arcsq] (X1) -- (Z1); 
	\draw[-arcsq] (X2) -- (Z1); 
	\draw[-arcsq] (X2) -- (Z2); 
	\draw[-arcsq] (X3) -- (Z2);
\end{tikzpicture}
}}
\caption{The two generating processes as initial thoughts.}
\label{fig:two_structures}
\vspace{-4mm}
\end{wrapfigure}
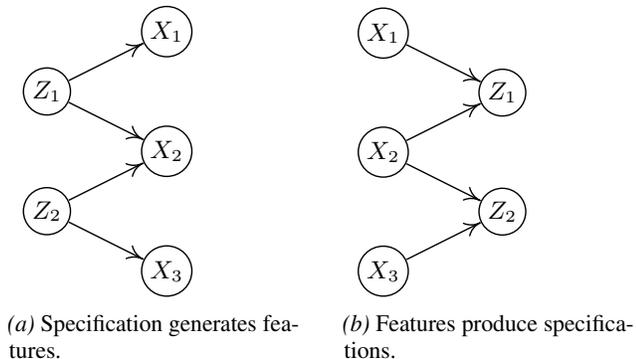


Let us use a simple example to illustrate the setting and our idea. Suppose we are required to generate a sensible image of a peacock eating ice cream, assuming that there is no such image in the training set.  For illustrative purposes, let us simply suppose it involves only two specfications, ``peacock" ($Z_1 = 1$) and ``eating ice cream" ($Z_2 = 1$).\footnote{In practice, this example may involve more specifications, such as ``eating" and ``ice cream"; in  this case, they are merged into a macro specification.}
As an initial idea, one may adopt the graphical model in Fig.~\ref{fig:two_structures}(a) to describe the dependence relations, where features $\mathbf{X}=(X_1,X_2,X_3)$ are generated by specifications $\mathbf{Z}=(Z_1,Z_2)$. Here, we may think of $X_2$ as indicating the presence of a “beak”. Under this model, given  a novel combination of $\mathbf{Z}$ values, say, $(Z_1 = 1, Z_2 = 1)$, which never happened in the training data,  unless strong assumptions are enforced on the functions (for instance, a parametrtic model is assumed for $p(\mathbf{x}\,|\,z_1, z_2))$, 
the  distribution of feature $X_2$, which is adjacent to both $Z_1$ and $Z_2$, is undefined, and hence we cannot find the distribution or values of $\mathbf{X}$  to satisfy the novel combination of $\mathbf{Z}$. In other words, the model does not know how to generate the feature “beak” for the unseen combination “peacock eating ice cream”, since this combination is not present in the training data.  With this graphical model, without additional assumptions on the functions mapping $\mathbf{Z}$ to $\mathbf{X}$, extrapolated data generation for novel specifications is not possible if $Z_i$ with novel value combinations are influenced by the same features.

In contrast, Fig.~\ref{fig:two_structures}(b) seems to provide a more reasonable explanation of the relationship between $\mathbf{X}$ and $\mathbf{Z}$, where features point to specifications. This structure implies the conditional independence $Z_1 \independent Z_2 \,|\, X_2$. For example, upon observing a “beak” ($X_2=1$), inferring whether the object is “eating ice cream” ($Z_2=1$) does not depend on whether it is a “peacock” ($Z_1=1$) or a chicken. Such a conditional independence relationship is not captured by the graphical model in Fig.~\ref{fig:two_structures}(a), which instead generally implies conditional dependence between $Z_1$ and $Z_2$ given $X_2$.
%
%
However, if one adopts the graphical representation in Fig.~\ref{fig:two_structures}(b) for the data-generating process, why is there no such scenario in the training data?  How can we go beyond the training data in a principled way?

Now we are ready to introduce our framework, which models the different distributions across the training data and the data to be generated, figures out their connections, and allows us to go from training data to the construction of new data following novel combinations of the specifications.


\section{SEDGE: Structural Extrapolated Data Generation}
\vspace{-2mm}
Let us now demonstrate how the structural properties can be leveraged for data generation under novel specifications. We first formalize the data generating process in Sec. \ref{sec:formulation}, and then study the case of two specifications in Sec. \ref{sec:two_specifications}, which serves to illustrate our key ideas of extrapolated data generation, including how to generate novel samples and recover the novel data distribution. Lastly, we present theoretical results, together with the involved assumptions, for the general setting with multiple specifications in Sec. \ref{sec:many_specifications}.

\subsection{Formulation}\label{sec:formulation}
\vspace{-1.5mm}
Let $\mathbf{X} = (X_1,\dots,X_d)$ denote the feature variables to be generated, and let $\mathbf{Z} = (Z_1,\dots,Z_n)$ denote the specification variables. Let $S$ be a selection variable indicating whether a sample is contained in the given dataset \cite{Heckman1979SampleSB}. A sample is contained in the given dataset if and only if $S=1$; otherwise, $S=0$. That is, the given training data correspond to the distribution $p(\mathbf{X} \,|\, S=1)$.

We assume that the variables $\mathbf{X}$, $\mathbf{Z}$, and $S$ form a Bayesian network with respect to a directed acyclic graph (DAG) $\mathcal{G}$~\citep{pearl1985bayesian,koller09probabilistic}. That is, the Markov property is satisfied: each variable is conditionally independent of its non-descendants in $\mathcal{G}$ given its parents in $\mathcal{G}$. Consistent with the discussion in Sec. \ref{sec:motivation}, we impose the following structural assumptions: (i) there are no edges from specifications $\mathbf{Z}$ to features $\mathbf{X}$, (ii) there are no edges among the specifications $\mathbf{Z}$, and (iii) selection variable $S$ has no children, and its parents are restricted to the specifications $\mathbf{Z}$. These assumptions formalize our idea of feature generating specification and ensure that the specification variables $\mathbf{Z}$ are conditionally independent of one another given the features $\mathbf{X}$.

Given data points observed as samples from distribution $p(\mathbf{X}\,|\, S=1)$, our goal is to generate samples from $p(\mathbf{X}\,|\, \mathbf{Z}=\mathbf{z})$, where the combination of the specifications values $\mathbf{Z}=\mathbf{z}$ might be novel, i.e., unseen in the given/training data. Note that if the combination of their values already appears in the training data, then this conditional distribution is already implied by the training data, since we have $p(\mathbf{X} \,|\, \mathbf{Z}=\mathbf{z}) = p(\mathbf{X} \,|\, \mathbf{Z}=\mathbf{z}, S=1)$, implied by the conditional independence between $S$ and $\mathbf{X}$ given $\mathbf{Z}$ according to our structural assumptions discussed above.

Our central question is therefore: can we generate new data that satisfy novel combinations of specifications, and, if yes, what conditions make it possible, and under what conditions can the corresponding conditional distributions of $\mathbf{X}$ be constructed?

In this section, we assume that both the features $\mathbf{X}$ and the specifications $\mathbf{Z}$ are given. Our framework, however, is general. For instance, the features $\mathbf{X}$ can represent variables from tabular data or latent representations learned from images, while the specifications $\mathbf{Z}$ can represent textual concepts extracted from text. We also assume that the specifications $\mathbf{Z}$ are binary, which is natural in many settings where specifications correspond to different concepts. The framework can be straightforwardly extended to the continuous case.  In practice, one may need to learn $\mathbf{X}$ and $\mathbf{Z}$  from data; their identifiability under our structural assumptions is discussed in Appendix \ref{Sec:identi_X_Z} and their estimation procedure will be introduced in Sec. \ref{Sec:estimation}.

\paragraph{Notations.}
We use $\mathbf{1}$ to denote an all-ones vector of appropriate dimension. To simplify notation, and without loss of generality, we treat $\mathbf{Z}=\mathbf{1}$ as the novel specification combination to which we aim to extrapolate in this section; the analysis extends straightforwardly to other specification values. We use uppercase letters to denote random variables and lowercase letters to denote their realizations. When the random variables are unambiguous from context, we omit variables in density functions, e.g., writing $p(\mathbf{x})$ instead of $p(\mathbf{X} =  \mathbf{x})$. We use $p^\mathcal{D}(\cdot)$ to denote the distribution implied by the given/training data, which satisfies $S=1$, e.g., $p^\mathcal{D}(\mathbf{X} \,|\, \mathbf{Z}=\mathbf{1})\coloneqq p(\mathbf{X}\,|\, \mathbf{Z}=\mathbf{1}, S=1)$. We occasionally abuse notation by treating a vector as the corresponding set of its entries when the meaning is clear from context. We also write $[n]\coloneqq \{1,\dots,n\}$ for integer $n$ and use $\propto$ to denote equality up to a positive multiplicative constant.


\input{}

\subsection{Extrapolation with Two Specifications}\label{sec:two_specifications}
\vspace{-1.5mm}
For illustrative purposes, we start with the case of two specification variables, $\mathbf{Z}=(Z_1,Z_2)$, to illustrate our key idea of leveraging structural properties to generate data that satisfy novel combinations of specifications. Later in Section~\ref{sec:many_specifications}, we extend this setting to general scenarios with more than two specification variables which can also be continuous. 

To be more precise about novel specifications, suppose we observe only $(Z_1=1, Z_2=0)$ and $(Z_1=0, Z_2=1)$, and possibly $(Z_1=0, Z_2=0)$, in the training data, and aim to generate sensible data $\mathbf{X}$ for $(Z_1 = 1, Z_2 = 1)$, i.e., $p(Z_1=1, Z_2=1 \,|\, S=1) = 0$.
The novel combination has positive probability (and thus is feasible) in the underlying population, i.e., $p(Z_1=1, Z_2=1) > 0$. If each specification value $Z_1=1$ and $Z_2=1$ has never appeared in any form in the given data, i.e., $p(Z_i=1 \,|\, S=1)= 0$, it is impossible to generate the corresponding samples, since $p(\mathbf{x}\,|\, Z_i=1)$ is completely unknown. It is thus necessary to assume that each specification value has been observed in some combination within the dataset. These conditions are formalized in the following assumption.
\begin{assumption}[Basic conditions for specifications and selection]\label{assumption:given_specifications_toy}
We have
\begin{flalign}
P(S=1) > 0,~~~
P(Z_1=1,Z_2=1) > 0,~~~
\textrm{and~}P(Z_i=1 \,|\, S=1)> 0,~~i=1,2.\label{eq:assumption_given_specifications_toy_1}
\end{flalign}
\end{assumption}

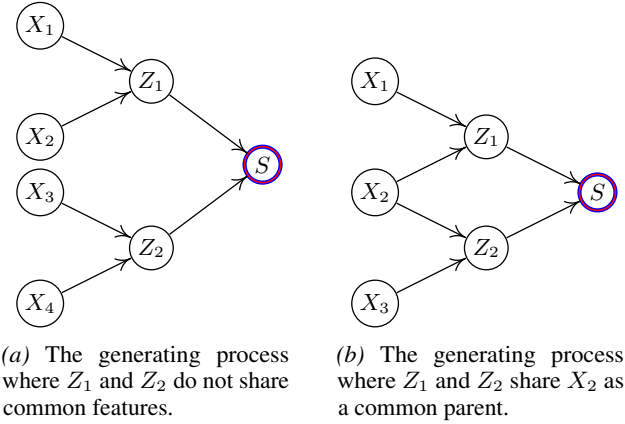
\begin{wrapfigure}{r}{0.4\textwidth}
\centering
\vspace{-3mm}
\subcaptionbox{The generating process where $Z_1$ and $Z_2$ do not share common features.}{
\resizebox{0.18\columnwidth}{!}{
\begin{tikzpicture}[scale=.65, line width=0.5pt, inner sep=0.1mm, shorten >=.1pt, shorten <=.1pt]
	\draw (0, 0) node(2) [circle, draw]  {{\footnotesize\,${X}_1$\,}};
	\draw (2*0.7, -1*0.7) node(1)[circle, draw]  {{\footnotesize\,${Z}_1$\,}};
    \draw (0, -2*0.7) node(3) [circle, draw]  {{\footnotesize\,${X}_2$\,}};
	\draw (2*0.7, -3*0.7-1) node(4)[circle, draw]  {{\footnotesize\,${Z}_2$\,}};
    \draw (0, -2*0.7-1) node(5) [circle, draw]  {{\footnotesize\,${X}_3$\,}};
    \draw (4*0.7, -2*0.7 - 0.5) node(6)[draw=blue,double=red, circle, inner sep=1pt]{{\footnotesize\,${S}$\,}};
    \draw (0, -4*0.7 - 1) node(7) [circle, draw]  {{\footnotesize\,${X}_4$\,}};
	\draw[-arcsq] (2) -- (1); 
	\draw[-arcsq] (3) -- (1); 
	\draw[-arcsq] (7) -- (4); 
	\draw[-arcsq] (5) -- (4);
    \draw[-arcsq] (4) -- (6);
    \draw[-arcsq] (1) -- (6);
\end{tikzpicture}
}}
\hfill
\subcaptionbox{The generating process where $Z_1$ and $Z_2$ share $X_2$ as a common parent.}{
\resizebox{0.18\columnwidth}{!}{
\begin{tikzpicture}[scale=.65, line width=0.5pt, inner sep=0.1mm, shorten >=.1pt, shorten <=.1pt]
	\draw (0, 0) node(2) [circle, draw]  {{\footnotesize\,${X}_1$\,}};
	\draw (2*0.7, -1*0.7) node(1)[circle, draw]  {{\footnotesize\,${Z}_1$\,}};
    \draw (0, -2*0.7) node(3) [circle, draw]  {{\footnotesize\,${X}_2$\,}};
	\draw (2*0.7, -3*0.7) node(4)[circle, draw]  {{\footnotesize\,${Z}_2$\,}};
    \draw (0, -4*0.7) node(5) [circle, draw]  {{\footnotesize\,${X}_3$\,}};
    \draw (4*0.7, -2*0.7) node(6)[draw=blue,double=red, circle, inner sep=1pt]{{\footnotesize\,${S}$\,}};
	\draw[-arcsq] (2) -- (1); 
	\draw[-arcsq] (3) -- (1); 
	\draw[-arcsq] (3) -- (4); 
	\draw[-arcsq] (5) -- (4);
	\draw[-arcsq] (1) -- (6);
    \draw[-arcsq] (4) -- (6);
\end{tikzpicture}
}}
\caption{The generating processes where features generate the specification and the observed data satisfy the constraint that the selection variable $S = 1$.}
\vspace{-0.8em}
\label{fig:setting_toy_two_specifications}
\vspace{-4mm}
\end{wrapfigure}


\subsubsection{Identifiable Case: Specifications do not Share Common Features} 
Let us first consider the generating process in Fig.  \ref{fig:setting_toy_two_specifications}(a). in which the specifications $Z_1$ and $Z_2$ do not share common features. It is worth noting the framework also applies when the features generating the same specification variable, such as $X_1$ and $X_2$, which generate $Z_1$, have dependence relations in between. 
In this case, one can successfully recover the distribution with novel specifications, $p(\mathbf{X} \,|\, Z_1=1,Z_2=1)$, from the given data, as seen below.


\begin{restatable}{proposition}{PropositionRecoverDistributionNoCommonFeaturesToy}\label{proposition:recover_distribution_no_common_features_toy}
Suppose that Assumption \ref{assumption:given_specifications_toy} holds and the data generating process follows Fig. \ref{fig:setting_toy_two_specifications}(a). Then, the novel data distribution $p(\mathbf{X}  \,|\, Z_1 = 1, Z_2 = 1)$ is identifiable from the given data.
\end{restatable}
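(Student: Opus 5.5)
The plan is to factor the target distribution using the graph in Fig.~\ref{fig:setting_toy_two_specifications}(a) and show that each factor is available from $p(\cdot\,|\,S=1)$. Write $\mathbf{X}_A=(X_1,X_2)$ for the parents of $Z_1$ and $\mathbf{X}_B=(X_3,X_4)$ for the parents of $Z_2$. In Fig.~\ref{fig:setting_toy_two_specifications}(a) there is no edge between $\mathbf{X}_A$ and $\mathbf{X}_B$, and no path connects them except through the collider $S$. Hence marginally $\mathbf{X}_A\independent\mathbf{X}_B$, and $(\mathbf{X}_A,Z_1)\independent(\mathbf{X}_B,Z_2)$ in the unselected population. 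Consequently
\begin{equation*}
p(\mathbf{x}\,|\,Z_1=1,Z_2=1)=p(\mathbf{x}_A\,|\,Z_1=1)\,p(\mathbf{x}_B\,|\,Z_2=1).
\end{equation*}
Here $P(Z_1=1,Z_2=1)>0$ by Assumption~\ref{assumption:given_specifications_toy}, so the left-hand side is well defined. The problem is therefore reduced to identifying the two single-specification conditionals.

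Next I would identify $p(\mathbf{x}_A\,|\,Z_1=1)$ from the selected data. The selection mechanism depends only on $(Z_1,Z_2)$, so $S\independent\mathbf{X}\,|\,(Z_1,Z_2)$. Together with $\mathbf{X}_A\independent Z_2\,|\,Z_1$, this gives $p^{\mathcal D}(\mathbf{x}_A\,|\,Z_1=1,Z_2=z_2)=p(\mathbf{x}_A\,|\,Z_1=1)$ for any $z_2$ such that $P(Z_1=1,Z_2=z_2,S=1)>0$. Assumption~\ref{assumption:given_specifications_toy} gives $P(Z_1=1\,|\,S=1)>0$, so such a $z_2$ exists; in the motivating setting it is $z_2=0$. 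It follows that
\begin{equation*}
p(\mathbf{x}_A\,|\,Z_1=1)=p^{\mathcal D}(\mathbf{x}_A\,|\,Z_1=1).
\end{equation*}
Equivalently, $p^{\mathcal D}(\mathbf{x}_A\,|\,Z_1=1)$ is a mixture over $z_2$ of components that are all equal to $p(\mathbf{x}_A\,|\,Z_1=1)$. The same argument identifies $p(\mathbf{x}_B\,|\,Z_2=1)$. Multiplying the two pieces then gives the target distribution in terms of the training distribution alone.

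The main obstacle is the d-separation bookkeeping. Conditioning on $S=1$ opens the collider, so $\mathbf{X}_A$ and $\mathbf{X}_B$ are typically dependent in the training data. The argument must therefore avoid any factorization of the \emph{selected} joint distribution and use $S=1$ only inside the conditionals on $(Z_1,Z_2)$, where $S$ is screened off. I would verify carefully that $\mathbf{X}_A\independent Z_2\,|\,Z_1$ holds in the population graph. The only paths between them run either through the collider at $S$ or through $\mathbf{X}_B$, and those through $\mathbf{X}_B$ are blocked because there is no edge between $\mathbf{X}_A$ and $\mathbf{X}_B$. I would also check that each conditioning event has positive probability under selection, which is exactly what Assumption~\ref{assumption:given_specifications_toy} provides.
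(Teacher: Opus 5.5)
Your proposal is correct and follows essentially the same route as the paper: factor $p(\mathbf{X}\mid Z_1=1,Z_2=1)$ into $p(X_1,X_2\mid Z_1=1)\,p(X_3,X_4\mid Z_2=1)$ using the absence of shared features, then replace each factor by its counterpart under selection, with Assumption~\ref{assumption:given_specifications_toy} guaranteeing the conditioning events have positive probability. The only cosmetic difference is in the second step. The paper invokes $\{X_1,X_2\}\independent S\mid Z_1$ directly, whereas you combine $S\independent\mathbf{X}\mid(Z_1,Z_2)$ with $\{X_1,X_2\}\independent Z_2\mid Z_1$ through a mixture over $z_2$, which amounts to the contraction of these two independence statements.
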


A proof is given in Appendix A.2. The idea is to make use of specific conditional independence relations implied by the structural properties of Fig. \ref{fig:setting_toy_two_specifications}(a), where the specifications do not share common features, i.e., $Z_1\independent Z_2$. 

Note that after recovering the novel data distribution $p(\mathbf{X} \,|\, Z_1=1, Z_2=1)$, one can generate data points with the novel specifications $(Z_1 = 1, Z_2 = 1)$, thus achieving extrapolated data generation.

\subsubsection{Non-Identifiable Case: Specifications Share Common Features} 
We now consider the scenario in Fig. \ref{fig:setting_toy_two_specifications}(b), where the specifications $Z_1$ and $Z_2$ share a common parent $X_2$. This case is more complex than the previous scenario in Fig. \ref{fig:setting_toy_two_specifications}(a), where specifications do not share common features, because here $X_2$ is a direct parent of both $Z_1$ and $Z_2$.

\paragraph{Non-identifiability of novel data distribution.}
We show in the following proposition that the distribution $p(\mathbf{X} \,|\, Z_1 = 1, Z_2 = 1)$ is, in general, non-identifiable for the generating process in Fig. \ref{fig:setting_toy_two_specifications}(b). 
A full proof is given in Appendix \ref{app:proof_proposition_non_identifiability_distribution_toy}.

\begin{restatable}{proposition}{PropositionNonIdentifiabilityDistributionToy}\label{proposition:non_identifiability_distribution_toy}
Suppose that Assumption \ref{assumption:given_specifications_toy} holds and the data generating process follows Fig. \ref{fig:setting_toy_two_specifications}(b). Then, the novel data distribution $p(\mathbf{X} \,|\, Z_1 = 1, Z_2 = 1)$ is not identifiable from the given data.
\end{restatable}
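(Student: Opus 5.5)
The plan is to prove non-identifiability by constructing two joint distributions $p$ and $\tilde p$ over $(\mathbf{X}, Z_1, Z_2, S)$. Both will factorize according to Fig.~\ref{fig:setting_toy_two_specifications}(b) and satisfy Assumption~\ref{assumption:given_specifications_toy}. They will induce the same observable distribution $p(\mathbf{X}, Z_1, Z_2 \,|\, S=1)$, but different novel distributions $p(\mathbf{X} \,|\, Z_1=1, Z_2=1)$.

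Since the observed data carry no information about $p(Z_1=1,Z_2=1,\cdot)$, the freedom has to come from the conditional mechanisms at the shared parent $X_2$. I will take all variables binary for concreteness. I will fix $p(X_1)$, $p(X_2)$, $p(X_3)$ (mutually independent) and a selection mechanism with
\[
P(S=1 \,|\, z_1,z_2) = c \quad\text{for } (z_1,z_2)\neq(1,1), \qquad P(S=1 \,|\, 1,1)=0 .
\]
This choice gives $P(S=1)>0$ and makes the combination $(1,1)$ unseen, as required by the setting.

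Observed data then determine, up to the constant $c$, exactly the quantities $p(\mathbf{x}, z_1, z_2)$ for $(z_1,z_2)\in\{(1,0),(0,1),(0,0)\}$, that is,
\[
p(\mathbf{x})\, p(z_1\,|\,x_1,x_2)\, p(z_2\,|\,x_2,x_3).
\]
Write $a(x_1,x_2)=p(Z_1=1\,|\,x_1,x_2)$ and $b(x_2,x_3)=p(Z_2=1\,|\,x_2,x_3)$. The observables are then
\[
a(1-b)\,p(\mathbf{x}), \qquad (1-a)\,b\,p(\mathbf{x}), \qquad (1-a)(1-b)\,p(\mathbf{x}),
\]
while the target is proportional to $a\, b\, p(\mathbf{x})$.

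The key step is to find a second triple $(\tilde p(\mathbf{x}), \tilde a, \tilde b)$ that reproduces these three products pointwise but changes $a\,b\,p(\mathbf{x})$ by more than a normalizing constant. Setting $r_1=a/(1-a)$ and $r_2=b/(1-b)$, the observables fix $r_1$, $r_2$ and $(1-a)(1-b)\,p(\mathbf{x})$ pointwise. The target, relative to the $(0,0)$ term, equals $r_1 r_2$ and would then be fixed too. So matching every conditional ratio pointwise is too rigid, and I need to use that the observed distribution is normalized only within $S=1$.

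I will therefore exploit the unknown selection mechanism. I will allow $P(S=1\,|\,z_1,z_2)$ to take different values $c_{10}, c_{01}, c_{00}$ in the alternative model. Then only the three slices up to separate positive constants $c_{10}$, $c_{01}$, $c_{00}$ (jointly normalized) are identified. In ratio terms, the observables determine $r_1 r_2^{-1}$ and $\lambda_1 r_1$, $\lambda_2 r_2$ only up to constants. Scaling the odds $r_1 \mapsto \kappa_1 r_1$ and $r_2 \mapsto \kappa_2 r_2$, and compensating with the $c$'s and with $\tilde p(\mathbf{x}) \propto p(\mathbf{x})(1-a)(1-b)/\bigl((1-\tilde a)(1-\tilde b)\bigr)$, keeps the observed distribution fixed.

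The target becomes
\[
\tilde p(\mathbf{x}\,|\,1,1) \propto \kappa_1\kappa_2\, r_1 r_2\, (1-a)(1-b)\, p(\mathbf{x}),
\]
which is the same density. So constant odds scaling fails, and I instead need $\mathbf{x}$-dependent changes that cancel in all three observed slices. This is the main obstacle. The first option I will try is to abandon the independence of the $X$'s: the graph permits arbitrary $p(x_1,x_2,x_3)$, and $\mathbf{x}$-dependent reweighting of $p(\mathbf{x})$ is absorbed only jointly. If the algebra stays rigid for strictly positive selection on seen cells, I will use the remaining freedom that $S$ may depend on $\mathbf{Z}$ while $p(S=1\,|\,0,0)$ could be $0$. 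In that case only the two slices $(1,0)$ and $(0,1)$ are observed, and $a(1-b)p$ and $(1-a)bp$ leave one free function, which directly changes $ab\,p$.

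I will then verify that both models satisfy Assumption~\ref{assumption:given_specifications_toy}. Finally, I will give an explicit numerical instance showing that $p(X_2\,|\,Z_1=1,Z_2=1)$ differs between the two models.
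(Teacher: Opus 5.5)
Your fallback route is correct, and it differs genuinely from the paper. That route uses two explicit models that agree on $p(\mathbf{X},Z_1,Z_2\,|\,S=1)$ but differ on $p(\mathbf{X}\,|\,Z_1=1,Z_2=1)$, with $P(S=1\,|\,Z_1=0,Z_2=0)=0$.

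The paper instead factorizes $p(\mathbf{X}\,|\,1,1)=p(X_2\,|\,1,1)\,p^\mathcal{D}(X_1\,|\,X_2,Z_1=1)\,p^\mathcal{D}(X_3\,|\,X_2,Z_2=1)$ and writes $p(X_2\,|\,1,1)\propto p(X_2\,|\,Z_1=1)\,p(X_2\,|\,Z_2=1)/p(X_2)$. It then concludes non-identifiability because each factor mixes in unknown $S=0$ components. That is shorter, but it only shows that one formula uses unidentified ingredients, which does not imply the target is unidentified.

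Your ``too rigid'' computation shows why this matters. Suppose $(0,0)$ is a seen cell and $a,b<1$. The identity $p(\mathbf{x},1,1)\,p(\mathbf{x},0,0)=p(\mathbf{x},1,0)\,p(\mathbf{x},0,1)$, together with $p^\mathcal{D}(\mathbf{x}\,|\,\mathbf{z})=p(\mathbf{x}\,|\,\mathbf{z})$ on seen cells, gives $p(\mathbf{x}\,|\,1,1)\propto p^\mathcal{D}(\mathbf{x}\,|\,1,0)\,p^\mathcal{D}(\mathbf{x}\,|\,0,1)/p^\mathcal{D}(\mathbf{x}\,|\,0,0)$. So the target is identified in that subcase. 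The proposition therefore holds only in the ``in general'' sense. A proof has to exhibit a counterexample exploiting the missing $(0,0)$ cell (or a positivity failure); your construction does this, and the paper's argument does not.

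Two corrections to your plan. First, drop your first option of abandoning independence of the $X$'s. The identity above holds for every $p(\mathbf{x})$, so no $\mathbf{x}$-dependent reweighting can break the rigidity when all three seen cells have positive selection probability. Also, Fig.~\ref{fig:setting_toy_two_specifications}(b) as drawn has no edges among features, so its Markov property forces $X_1,X_2,X_3$ to be mutually independent; it does not permit an arbitrary $p(x_1,x_2,x_3)$.

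Second, in the fallback you must keep $\tilde p(\mathbf{x})$ a product. Multiplying both odds by $g(x_2)$ forces $\tilde p\propto p\,(1+g r_1)(1+g r_2)/\bigl((1+r_1)(1+r_2)\,g\bigr)$. This couples $x_1$ with $x_2$ unless $r_1$ is free of $x_1$, and similarly for $x_3$. The easy fix is to let $a,b$ depend on $x_2$ only, which the Markov property allows.

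Concretely, take binary $X_2$, with $X_1,X_3$ arbitrary and independent. Put selection probability $c>0$ on $(1,0),(0,1)$ and $0$ on $(0,0),(1,1)$.
\begin{itemize}
\item Model 1: $P(X_2=1)=1/2$ and $a\equiv b\equiv 1/2$.
\item Model 2: $P(X_2=1)=4/7$ and $\tilde a=\tilde b$, with $\tilde a(0)=1/2$ and $\tilde a(1)=1/4$.
\end{itemize}
Each seen cell $(x_2,z_1,z_2)$ then has joint probability $1/8$ in model 1 and $3/28$ in model 2. So both models give the uniform observed distribution and both satisfy Assumption~\ref{assumption:given_specifications_toy}. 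Yet $P(X_2=1\,|\,Z_1=1,Z_2=1)$ is $1/2$ in model 1 and $1/4$ in model 2.
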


Intuitively, this is because $p(\mathbf{X}\,|\, Z_1=1,Z_2=1)$ involves distributions conditioning on $S=0$, which are unknown.

\paragraph{Existence of novel data sample.}
Although the distribution corresponding to novel specifications, $p(\mathbf{X}\,|\, Z_1 = 1, Z_2 = 1)$, is not identifiable, one may still ask whether it is possible to generate samples that satisfy these specifications. Surprisingly, this is indeed possible under mild conditions. We begin by stating the following assumption.


\begin{assumption}[Common features in separate specifications]\label{assumption:exist_sample_toy}
There exists value $\tilde{x}_2$ of $X_2$ such that
\[
p^\mathcal{D}(X_2 = \tilde{x}_2 \,|\, Z_i=1) > 0, \quad i=1,2.
\]
\end{assumption}
This assumption requires that there exists at least one value of the common feature $X_2$ that has a positive density under each specification individually in the given data. Intuitively, it ensures that the marginal distributions of the specifications overlap in the space of the shared feature, making it possible to generate samples satisfying novel combinations of $Z_1$ and $Z_2$. Under this assumption, we obtain the following result, which shows that one can construct data points satisfying the novel specifications from the given data. 
A full proof deferred to Appendix \ref{app:proof_proposition_extrapolation_sample_toy}.

\begin{restatable}{proposition}{PropositionExtrapolationSampleToy}\label{proposition:extrapolation_sample_toy}
Suppose Assumptions \ref{assumption:given_specifications_toy} and \ref{assumption:exist_sample_toy} hold, and that the data generating process follows Fig. \ref{fig:setting_toy_two_specifications}(b). Then, there exists a value $\tilde{\mathbf{x}}$ of $\mathbf{X}$ that can be constructed from the given data such that 
\[
p(\mathbf{X}=\tilde{\mathbf{x}} \,|\, Z_1=1, Z_2=1) > 0.
\]
\end{restatable}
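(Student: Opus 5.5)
The construction pieces $\tilde{\mathbf{x}}$ together from the two observed specification regimes, stitched along the shared feature $X_2$. Fix $\tilde{x}_2$ as in Assumption~\ref{assumption:exist_sample_toy}. Since $p^\mathcal{D}(\tilde{x}_2 \,|\, Z_1=1)>0$, the conditional $p^\mathcal{D}(X_1 \,|\, X_2=\tilde{x}_2, Z_1=1)$ is well defined from the training data, and we pick $\tilde{x}_1$ with positive density there. Symmetrically, we pick $\tilde{x}_3$ with $p^\mathcal{D}(X_3=\tilde{x}_3 \,|\, X_2=\tilde{x}_2, Z_2=1)>0$. We then set $\tilde{\mathbf{x}}=(\tilde{x}_1,\tilde{x}_2,\tilde{x}_3)$. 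Everything used here is computable from the given data, so it remains to show $p(\tilde{\mathbf{x}} \,|\, Z_1=1,Z_2=1)>0$.

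\textbf{Step 1: remove the selection.} In Fig.~\ref{fig:setting_toy_two_specifications}(b), $S$ is a child of $\mathbf{Z}$ only, so $S \independent \mathbf{X} \,|\, \mathbf{Z}$. Hence
\[
p^\mathcal{D}(\mathbf{x}\,|\, Z_1=1)=\sum_{z_2} p(\mathbf{x}\,|\, Z_1=1,z_2)\,p(z_2\,|\, Z_1=1,S=1).
\]
By the assumption that $(1,1)$ never occurs in the data, the weight on $z_2=1$ vanishes, so $p^\mathcal{D}(\mathbf{x}\,|\, Z_1=1)=p(\mathbf{x}\,|\, Z_1=1,Z_2=0)$. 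Likewise $p^\mathcal{D}(\mathbf{x}\,|\, Z_2=1)=p(\mathbf{x}\,|\, Z_1=0,Z_2=1)$. Both conditioning events have positive probability by Assumption~\ref{assumption:given_specifications_toy} and because $S$ depends only on $\mathbf{Z}$.

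\textbf{Step 2: translate positivity into positivity of the structural terms.} By Bayes' rule and the factorization $p(\mathbf{x},z_1,z_2)=p(\mathbf{x})\,p(z_1\,|\, x_1,x_2)\,p(z_2\,|\, x_2,x_3)$, we have
\[
p(\mathbf{x}\,|\, Z_1=1,Z_2=0)\propto p(\mathbf{x})\,p(Z_1=1\,|\, x_1,x_2)\,p(Z_2=0\,|\, x_2,x_3).
\]
Integrating out $x_3$ and conditioning on $x_2$ requires care. I would instead argue pointwise: the choice of $\tilde{x}_1$ gives a point $(\tilde{x}_1,\tilde{x}_2,x_3)$ with $p(x_1,x_2,x_3)\,p(Z_1=1\,|\, x_1,x_2)>0$. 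This yields $p(Z_1=1\,|\, \tilde{x}_1,\tilde{x}_2)>0$ and $p(\tilde{x}_1,\tilde{x}_2)>0$. Symmetrically, $p(Z_2=1\,|\, \tilde{x}_2,\tilde{x}_3)>0$ and $p(\tilde{x}_2,\tilde{x}_3)>0$.

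\textbf{Step 3: combine.} The target is
\[
p(\tilde{\mathbf{x}}\,|\, 1,1)\propto p(\tilde{\mathbf{x}})\,p(Z_1=1\,|\,\tilde{x}_1,\tilde{x}_2)\,p(Z_2=1\,|\,\tilde{x}_2,\tilde{x}_3),
\]
with normalizer $p(Z_1=1,Z_2=1)>0$ by Assumption~\ref{assumption:given_specifications_toy}. The two conditional factors are positive by Step 2, so it remains to show $p(\tilde{\mathbf{x}})>0$.

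\textbf{Main obstacle.} The hard step is the joint positivity $p(\tilde{x}_1,\tilde{x}_2,\tilde{x}_3)>0$. Positivity of each pairwise marginal does not imply positivity of the triple unless the features are suitably independent given $X_2$ or the density is positive. I would first try to exploit the graph: Fig.~\ref{fig:setting_toy_two_specifications}(b) draws no edges among the $X_i$, so $X_1\independent X_3 \,|\, X_2$ in $p(\mathbf{X})$. Then
\[
p(\tilde{\mathbf{x}})=\frac{p(\tilde{x}_1,\tilde{x}_2)\,p(\tilde{x}_2,\tilde{x}_3)}{p(\tilde{x}_2)}>0.
\]
If dependencies among the features are allowed, I would fall back on a positivity (full-support) assumption on $p(\mathbf{X})$, which also makes Step 2 immediate.
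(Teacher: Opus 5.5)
Your proposal is correct. Your construction of $\tilde{\mathbf{x}}$ is exactly the paper's: fix $\tilde{x}_2$, then pick $\tilde{x}_1$ and $\tilde{x}_3$ from the training conditionals given $\tilde{x}_2$. The positivity argument, however, runs through a different decomposition.

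The paper uses conditional independences, together with $X_1\independent S\mid\{X_2,Z_1\}$ and $X_3\independent S\mid\{X_2,Z_2\}$, to write $p(\mathbf{x}\mid Z_1{=}1,Z_2{=}1)=p(x_2\mid Z_1{=}1,Z_2{=}1)\,p^\mathcal{D}(x_1\mid x_2,Z_1{=}1)\,p^\mathcal{D}(x_3\mid x_2,Z_2{=}1)$. The last two factors are then positive by the very choice of $\tilde{x}_1,\tilde{x}_3$, so only the shared-feature term needs work. It lower-bounds $p(\tilde{x}_2\mid Z_i{=}1)\ge p^\mathcal{D}(\tilde{x}_2\mid Z_i{=}1)\,P(S{=}1\mid Z_i{=}1)>0$ via the mixture over $S$. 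It then applies $p(x_2\mid Z_1{=}1,Z_2{=}1)\propto p(x_2\mid Z_1{=}1)\,p(x_2\mid Z_2{=}1)/p(x_2)$.

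You instead work with the prior-times-likelihood form $p(\mathbf{x})\,p(Z_1{=}1\mid x_1,x_2)\,p(Z_2{=}1\mid x_2,x_3)$. You read off positivity of each likelihood factor, and of the pairwise feature marginals, from the observed regimes via $p^\mathcal{D}(\cdot\mid Z_1{=}1)=p(\cdot\mid Z_1{=}1,Z_2{=}0)$. This route then needs a separate prior-positivity step.

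The ``main obstacle'' you flag is not an extra gap relative to the paper. The paper's own steps $X_3\independent\{Z_1,X_1\}\mid\{X_2,Z_2\}$ and $Z_1\independent Z_2\mid X_2$ silently rely on the same absence of $X_1$--$X_3$ dependence beyond $X_2$ in Fig.~\ref{fig:setting_toy_two_specifications}(b). So your use of $X_1\independent X_3\mid X_2$ is legitimate, and the full-support fallback is unnecessary.

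The paper's route yields a formula whose only non-training factor is the shared-feature marginal. That is precisely what feeds Proposition~\ref{proposition:recover_conservative_distribution_toy} and Theorems~\ref{theorem:extrapolation_sample}--\ref{theorem:recover_conservative_distribution}. Your route instead gives a certificate phrased in terms of the likelihood $p(\mathbf{Z}\mid\mathbf{X})$ and the feature prior, which is what OPT and DPS actually target. It also makes the needed feature independence explicit rather than buried in the factorization.
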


Intuively, it is because it can be derived that
\begin{small}\begin{equation*}
    p(\mathbf{X}\,|\, Z_1 = 1, Z_2 = 1) = \frac{p^\mathcal{D}(X_2\,|\, Z_1 = 1)p^\mathcal{D}(X_2\,|\, Z_2 = 1)}{p(X_2)}  \cdot p^\mathcal{D}(X_1\,|\, X_2, Z_1 = 1) p^\mathcal{D}(X_3\,|\,  X_2,  Z_2 = 1),
\end{equation*} \end{small}
and all the terms involved above are positive for $\mathbf{X}=\tilde{\mathbf{x}}$. 
Specifically, given estimates of $p^\mathcal{D}(X_3 \,|\, X_2, Z_2 = 1)$, $p^\mathcal{D}(X_1  \,|\, X_2, Z_1 = 1)$, and $p^\mathcal{D}(X_2 \,|\, Z_i = 1),i=1,2$, all of which can be learned from the given/training data, one can explicitly construct a data point $\tilde{\mathbf{x}}$ that satisfies the novel specifications.

Moreover, if the relevant density functions are further assumed to be continuous, then one can construct a set of data points with nonzero Lebesgue measure satisfying the inequality in Proposition \ref{proposition:extrapolation_sample_toy} (or Assumption \ref{assumption:exist_sample_toy}), since it will hold throughout an open neighborhood of the point $\tilde{\mathbf{x}}$~\citep{royden2018real}.

\paragraph{Conservative solutions for non-identifiable distributions.}
We also propose a way to circumvent the inherent non-identifiability issue with $p(\mathbf{X}\,|\, Z_1=1, Z_2=1)$ described in Proposition \ref{proposition:non_identifiability_distribution_toy}. Let us assume that 
$p(S=0\,|\, Z_1=1)$ and $p(S=0)$ are very small, which means that in the original data generating process, the chance of having novel scenarios, which are not contained in the distribution of the given data, is low.
\begin{assumption}\label{assumption:conservative_distribution_toy}
We have
$P(S=0 \,|\, Z_i=1)\approx 0$, $i=1,2$, and $P(S=0)\approx 0$.
\end{assumption}
With the assumption above, the novel data distribution $p(\mathbf{X}\,|\, Z_1=1, Z_2=1)$ becomes approximately identifiable, in the sense that it is approximately equal to a distribution constructed from certain distribution components implied by the training data, formally stated below.
\begin{restatable}{proposition}{PropositionRecoverConservativeDistributionToy}\label{proposition:recover_conservative_distribution_toy}
Suppose  Assumptions \ref{assumption:given_specifications_toy} and \ref{assumption:conservative_distribution_toy} hold. Then, the novel data distribution $p(\mathbf{X} \,|\, Z_1 = 1, Z_2 = 1)$ is approximately identifiable from the given data.
\end{restatable}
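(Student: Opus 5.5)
The plan is to factor $p(\mathbf{X}\,|\,Z_1=1,Z_2=1)$ using the conditional independence of the specifications given $\mathbf{X}$, and then use Assumption \ref{assumption:conservative_distribution_toy} to replace each population quantity by its counterpart in the training data. The structural assumptions give $Z_1 \independent Z_2 \,|\, \mathbf{X}$, whatever the feature sharing pattern, so Fig. \ref{fig:setting_toy_two_specifications}(b) is covered. By Bayes' rule,
\begin{equation*}
p(\mathbf{x}\,|\,Z_1=1,Z_2=1)\propto p(\mathbf{x})\,p(Z_1=1\,|\,\mathbf{x})\,p(Z_2=1\,|\,\mathbf{x})\propto \frac{p(\mathbf{x}\,|\,Z_1=1)\,p(\mathbf{x}\,|\,Z_2=1)}{p(\mathbf{x})}.
\end{equation*}
The normalizing constant is finite and positive because $P(Z_1=1,Z_2=1)>0$ by Assumption \ref{assumption:given_specifications_toy}. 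It therefore suffices to show that each of $p(\mathbf{x}\,|\,Z_i=1)$ and $p(\mathbf{x})$ is approximately equal to a quantity identifiable from $p^\mathcal{D}$.

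\textbf{Replacing $p(\mathbf{x}\,|\,Z_i=1)$.} I would decompose over $S$:
\begin{equation*}
p(\mathbf{x}\,|\,Z_i=1)=P(S=1\,|\,Z_i=1)\,p^\mathcal{D}(\mathbf{x}\,|\,Z_i=1)+P(S=0\,|\,Z_i=1)\,p(\mathbf{x}\,|\,Z_i=1,S=0).
\end{equation*}
The factor $p^\mathcal{D}(\mathbf{x}\,|\,Z_i=1)$ is well defined by Assumption \ref{assumption:given_specifications_toy}. Setting $\epsilon_i=P(S=0\,|\,Z_i=1)$, the difference between $p(\cdot\,|\,Z_i=1)$ and $p^\mathcal{D}(\cdot\,|\,Z_i=1)$ is at most $2\epsilon_i$ in total variation, so it vanishes as $\epsilon_i\to 0$.

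\textbf{Replacing $p(\mathbf{x})$.} The same decomposition gives $p(\mathbf{x})=P(S=1)\,p^\mathcal{D}(\mathbf{x})+P(S=0)\,p(\mathbf{x}\,|\,S=0)$. Hence $p(\mathbf{x})\approx p^\mathcal{D}(\mathbf{x})$ when $P(S=0)\approx 0$.

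Substituting both approximations gives the candidate
\begin{equation*}
\hat p(\mathbf{x})\propto \frac{p^\mathcal{D}(\mathbf{x}\,|\,Z_1=1)\,p^\mathcal{D}(\mathbf{x}\,|\,Z_2=1)}{p^\mathcal{D}(\mathbf{x})},
\end{equation*}
which is built entirely from training-data components. Under Fig. \ref{fig:setting_toy_two_specifications}(b), the local Markov factorization reduces it to the form displayed after Proposition \ref{proposition:extrapolation_sample_toy}, with $p^\mathcal{D}(X_2)$ replacing $p(X_2)$.

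\textbf{Main obstacle.} The hard step is making ``approximately'' precise. The ratio contains $p(\mathbf{x})$ in the denominator, and closeness in total variation does not directly control the ratio where densities are small. My first attempt would use a pointwise relative bound instead. Since $p(\mathbf{x})\ge P(S=1)\,p^\mathcal{D}(\mathbf{x})$, the ratio $p^\mathcal{D}(\mathbf{x})/p(\mathbf{x})$ is bounded above by $1/P(S=1)\to 1$. For the reverse direction and for the numerator I would assume the $S=0$ components have densities bounded relative to the $S=1$ ones, or work on a region where they do. With that, one gets
\begin{equation*}
\frac{\hat p(\mathbf{x})}{p(\mathbf{x}\,|\,Z_1=1,Z_2=1)}\in[1-\delta,\,1+\delta]
\end{equation*}
before normalization, with $\delta\to 0$ as $\epsilon_1,\epsilon_2,P(S=0)\to 0$. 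Dividing the two normalizing constants then shows they are also within a factor $1\pm O(\delta)$, which yields convergence in total variation. If no such relative control is available, I would fall back to stating the result as the limit statement $\hat p\to p(\cdot\,|\,Z_1=1,Z_2=1)$ pointwise on the support of $p^\mathcal{D}(\mathbf{x}\,|\,Z_1=1)\,p^\mathcal{D}(\mathbf{x}\,|\,Z_2=1)$. This is the sense in which the paper's ``$\approx$'' is most naturally read.
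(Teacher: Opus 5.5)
Your route differs from the paper's in where the approximation is applied. The paper first uses Fig.~\ref{fig:setting_toy_two_specifications}(b) to write $p(\mathbf{X}\,|\,Z_1=1,Z_2=1)=p(X_2\,|\,Z_1=1,Z_2=1)\,p^\mathcal{D}(X_1\,|\,X_2,Z_1=1)\,p^\mathcal{D}(X_3\,|\,X_2,Z_2=1)$. The last two factors there are identified \emph{exactly}, via $X_1\independent S\,|\,\{X_2,Z_1\}$ and $X_3\independent S\,|\,\{X_2,Z_2\}$. The paper then substitutes $p^\mathcal{D}$ for $p$ only in the three $X_2$-marginals of $p(X_2\,|\,Z_1=1,Z_2=1)\propto p(X_2\,|\,Z_1=1)\,p(X_2\,|\,Z_2=1)/p(X_2)$. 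You make the same substitution for the full densities of $\mathbf{X}$, using only $Z_1\independent Z_2\,|\,\mathbf{X}$. This gains generality: no graph is needed, which matches the statement as written. The cost is that you approximate components the structure identifies exactly; for instance, in Fig.~\ref{fig:setting_toy_two_specifications}(a) your $\hat p$ need not be exact, whereas Proposition~\ref{proposition:recover_distribution_no_common_features_toy} is. Relatedly, your remark that the Markov factorization reduces $\hat p$ to the paper's form is false. $p^\mathcal{D}$ conditions on the collider $S$, so it does not factorize according to the population DAG; for example, the path $X_3\to Z_2\to S\leftarrow Z_1$ is open given $S=1$. In fact your $\hat p$ equals the paper's construction times $p^\mathcal{D}(x_3\,|\,x_1,x_2,Z_1=1)\,p^\mathcal{D}(x_1\,|\,x_2,x_3,Z_2=1)/p^\mathcal{D}(x_1,x_3\,|\,x_2)$. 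This factor is not constant in general, so the two constructions agree only in the conservative limit.

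Your ``main obstacle'' is a real one, and the paper's proof does not address it; it simply plugs the marginal approximations into the ratio. Assumption~\ref{assumption:conservative_distribution_toy} alone does not suffice. Take selection that removes exactly the cell $(1,1)$, and write $p_i(\mathbf{x})=P(Z_i=1\,|\,\mathbf{x})$. Then $\hat p$ equals the target times a factor proportional to $(1-p_1)(1-p_2)/(1-p_1p_2)$. Now let the overlap region where $p_1p_2>0$ have mass $\eta\to 0$, while $p_1,p_2$ keep fixed, non-constant profiles on it (after rescaling) and $P(Z_i=1)$ stays bounded away from zero. All quantities in Assumption~\ref{assumption:conservative_distribution_toy} are then $O(\eta)$, yet the total-variation error stays bounded away from zero. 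So your fallback pointwise statement also fails at this level of generality. The relative-density bound is what actually carries the argument; in the cell-removal case it amounts to $p_{3-i}(\mathbf{x})=O(\epsilon_i)$ on the support of $p(\cdot\,|\,Z_i=1)$. With that bound stated as an added hypothesis, your argument is complete. Its integrated version on the $X_2$-marginals, which your bound implies, would also make the paper's construction rigorous.
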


The proof, provided in Appendix \ref{app:proof_proposition_recover_conservative_distribution_toy}, is constructive and provides a way to find the distribution $p(\mathbf{X}\,|\, Z_1 = 1, Z_2 = 1)$ under Assumption \ref{assumption:conservative_distribution_toy}. We call this solution a \emph{conservative solution} since it does not enforce assumptions on the distribution of the (potential) data that were not in the given dataset. 
For instance, the calculation of $p(X_2)$ involves $p(X_2 \,|\, S= 0)$, which is unknown as $S=0$ corresponds to (potential) data that were not included in the given dataset. Fortunately, it can be ignored by assuming that such data are actually rare in conservative solutions, i.e., $P(S=0)\approx 0$.

\subsection{SEDGE with Multiple Specifications}\label{sec:many_specifications}
\vspace{-1.5mm}

We now extend the results from the previous section to show how structural properties can be exploited for extrapolated data generation in the setting with multiple specifications. Due to page limits, we provide a summary of the theoretical extension here, and readers may refer to the details in Appendix~\ref{Sec:multiple}.

We first consider the multi-specification setting in which specifications can be partitioned into disjoint subsets that do not share features. Under the extended basic coverage conditions (Assumption 4 in Appendix), which are analogous to Assumption 1 in the two-specification setting, we know that every specification subset appears with positive probability under selection. The novel distribution is therefore identifiable by recovering each component separately and recombining them; see Theorem 1 in Appendix \ref{Sec:multiple}.

When specifications share common features, $p(\mathbf{X}\mid\mathbf{Z}=\mathbf{1})$ is generally not identifiable. Nevertheless, analogous to the two-specification setting, extrapolated data points can still be constructed. Leveraging conditional independence between specification subsets given the shared features, together with a positivity condition on the shared features, one can combine observed samples to form feature vectors satisfying the novel specification combination. The existence of valid novel data points is therefore guaranteed; see Theorem 2 in Appendix \ref{Sec:multiple}.

Finally, analogous to the conservative recovery result in the two-specification case, non-identifiability can be addressed similarly to Proposition 4. Assuming that the probability of unobserved scenarios is low, the novel distribution $p(\mathbf{X}\mid\mathbf{Z}=\mathbf{1})$ becomes approximately identifiable and can be estimated from the given data, as seen from Theorem 3 in Appendix \ref{Sec:multiple}.

\vspace{-1.5mm}
\paragraph{Overview of the theory.}
To enable extrapolated data generation for novel combinations of specifications, our theory identifies three key conditions.
First, specification variables $Z_i$ are conditionally independent from each other given the features $\mathbf{X}$, allowing joint satisfaction to decompose into individual constraints.
Second, novelty arises only from unseen specification combinations, but not their separate values: every value of the specification $Z_i$ must already appear in the training data.
Third, the mapping from features $\mathbf{X}$ to specifications $\mathbf{Z}$ should be sparse, so that as few features as possible are shared across specifications. This reduces potential conflicts among specifications and facilitates sampling from the constructed distribution; in the extreme case of no shared features (Fig.~\ref{fig:setting_toy_two_specifications}(a)), extrapolation is straightforward.
All the results above can be directly generalized to continuous case.
In the next section, we will rely on these conditions to design practical algorithms for extrapolated data generation.

\vspace{-1.5mm}
\paragraph{On the assumptions.} Please note that an extensive discussion of the plausibility of the involved assumptions is given in Appendix \ref{app:discussion}. In addition, In Appendix~\ref{Sec:F1}, we illustrate how the results of the proposed approaches on synthetic data are robust to violations of each assumption, and we also report ablation study results.

\section{Model Estimation and  Data Generation}
\label{sec: method}
\vspace{-2mm}
We now describe how to estimate the conditional distribution $p(\mathbf{Z}\,|\,\mathbf{X})$ from data as constraint to be required to be satisfied for extrapolated data generation and how to efficiently generate samples under novel combinations of specifications under different data availability settings. 


\subsection{Model Estimation} \label{Sec:estimation}
\vspace{-1.5mm}

\paragraph{Specifications and features are given.}
When both $\mathbf{X}$ and $\mathbf{Z}$ are given (i.e., tabular data), we estimate each conditional distribution term $p^{\mathcal{D}}(Z_i|\mathbf{V}_i)$ separately. We denote $\mathbf{V}_i \subseteq \mathbf{X}$ as the parent set of specification $Z_i$. We assume each specification $Z_i$ follows a Gaussian distribution specified by $\mathbf{V}_i$, such that $Z_i \sim \mathcal{N}(h^{\mu}_i(\mathbf{V}_i), h^{\sigma}_i(\mathbf{V}_i))$, which yields the closed-form log-likelihood: $\log p^{\mathcal{D}}(Z_i = z_i\,|\,\mathbf{V}_i = \mathbf{v}_i) = -\frac{1}{2}\|\frac{Z_i - h^{\mu}_i(\mathbf{V}_i)}{h^{\sigma}_i(\mathbf{V}_i)}\|^2_2 - \frac{1}{2} \log h^{\sigma}_i(\mathbf{v}_i) -\frac{1}{2}\log 2\pi$. Both $h_i^{\mu}$ and $h_i^{\sigma}$ are parameterized by flexible Multi-Layer Perceptrons (MLPs), with $\sigma_i$ constrained to be non-negative. When the parent sets $\mathbf{V}_i$ are known, the networks $h_i^{\mu}$ and $h_i^{\sigma}$ are restricted to only take $\mathbf{V}_i$ as input, allowing the model to exploit the potential sparsity in the  structure. The objective function is
\begin{align}
\mathcal{L}_{g} =
\mathbb{E}_{\mathbf{x}, \mathbf{z}\sim p^{\mathcal{D}}(\mathbf{X},\mathbf{Z})}
\bigg[-\sum_{i\in[n]}\log p^{\mathcal{D}}(Z_i = z_i\,|\, \mathbf{V}_i = \mathbf{v}_i)\bigg],
\label{eq: loss likelihood}
\end{align}
where the subscript $g$ of $\mathcal{L}_{g}$ indicates that both features and specifications are given.

If the parent sets are unknown, we introduce learnable masks $\mathbf{M}_i$ to make the links between specification $Z_i$ and features as sparse as possible. In this case, we first adapt $\mathcal{L}_g$ to encourage sparsity: 
\begin{align}
\mathcal{L}_{m} =
\mathbb{E}_{\mathbf{x,z}}
\bigg[-\sum_{i\in[n]}\log p^{\mathcal{D}}_{Z_i|\mathbf{X}_i}(Z_i = z_i\,|\,\mathbf{V}_i = \mathbf{x} \odot \mathbf{M}_i)\bigg], 
\label{eq: loss likelihood with mask}
\end{align}
where the subscript for the summation is simplified (same as follows). The full objective becomes $\mathcal{L}_{g^{\prime}} = \mathcal{L}_{m} + \beta \mathcal{L}_s$, where $\beta$ controls the strength of the $L_1$ sparsity constraint $ \mathcal{L}_s = \sum_{i\in[n]}\|\mathbf{M}_i\|_1$. 
\paragraph{Specifications and features are not given.}
For modalities such as images or text, where $\mathbf{X}$ and $\mathbf{Z}$ are latent, we estimate them from the respective observations $\mathbf{Y_X}$ and $\mathbf{Y_Z}$. We employ variational autoencoders and define a generic VAE loss for an observation $\mathbf{a}$ and its estimated latent $\mathbf{b}$ as
\begin{align}
\mathcal{L}_{\mathrm{VAE}}(\mathbf{a})
=
- \mathbb{E}_{q_{\phi}(\mathbf{b}\mid \mathbf{a})}\!\left[\log p_{\theta}(\mathbf{a}\mid \mathbf{b})\right] + \mathrm{KL}\!\left(q_{\phi}(\mathbf{b}\mid \mathbf{a}) \,\|\, p(\mathbf{b})\right),
\end{align}
where $q_{\phi}(\mathbf{b}\mid \mathbf{a})$ is the encoder, $p_{\theta}(\mathbf{a}\mid \mathbf{b})$ is the decoder, and $p(\mathbf{b})$ is a prior over latents. We then define the combined objective as
\begin{align}
\mathcal{L}_{r}
=
\mathbb{E}_{\mathbf{Y_X},\mathbf{Y_Z}}
\Big[
\mathcal{L}_{\mathrm{VAE}}(\mathbf{Y_X})
+
\mathcal{L}_{\mathrm{VAE}}(\mathbf{Y_Z})
\Big].
\label{eq:vae_recon_sum}
\end{align}

Reconstruction alone does not enforce the conditional independence and sparsity that facilitate the generation of extrapolated data as we conclude in the end of previous section. These properties are imposed by combining the likelihood (as it assumes conditional independence) and sparsity terms: $\mathcal{L}_{ng} = \mathcal{L}_{r} + \alpha \mathcal{L}_{m} + \beta \mathcal{L}_{s}$, where $\alpha$ adjusts the strength for the negative log-likelihood term and $\beta$ controls the weight for the sparsity constraint. 

One may wonder whether the solutions for $\mathbf{X}$ and $\mathbf{Z}$ are asymptotically unique and, under our conditions of extrapolation, whether the underlying true representations, if they exist, are identifiable. We defer a formal discussion of identifiability and related conditions to Appendix~\ref{app: identify feat and specs}.
\begin{figure*}
    \centering
    \includegraphics[width=0.98\linewidth]{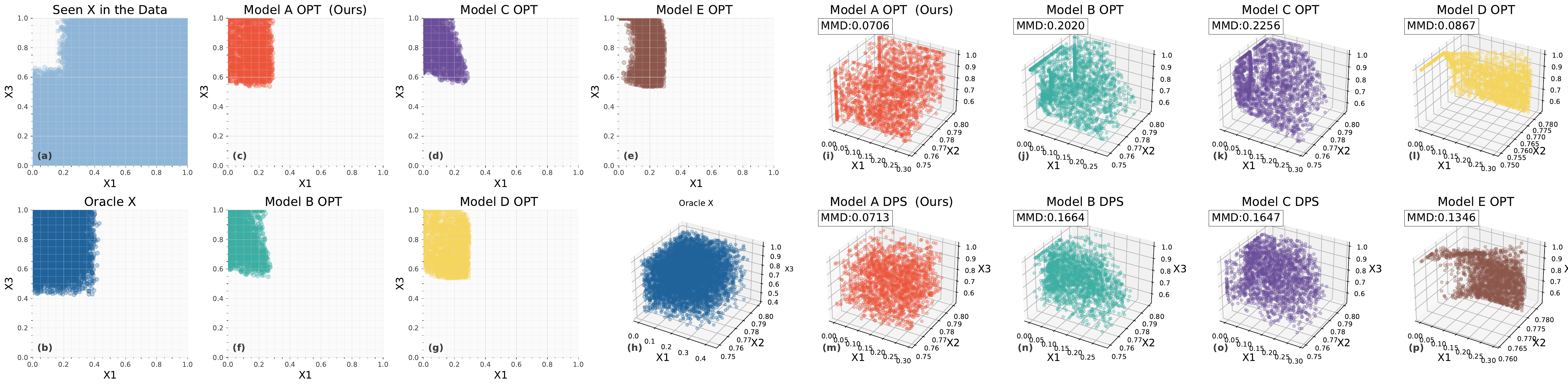}
    \caption{Synthetic experiment results with given $\mathbf{X}$ and $\mathbf{Z}$. Panels (a–b) illustrate the data split in a two-dimensional view over the $X_1$ and $X_3$ axes. The novel specifications induce a previously unseen joint distribution over $(X_1, X_3)$, ensuring that successful performance requires true extrapolation rather than interpolation. Panels (c–h) present the optimization-based generation (OPT) results for all five models. Panel (h) shows the three-dimensional view of the Oracle $\mathbf{X}$. Panels (i–p) visualize the three-dimensional generated $\mathbf{X}$ using OPT and diffusion posterior sampling (DPS) for models A, B, and C, and OPT for models D and E, with the corresponding MMD values to the Oracle $\mathbf{X}$ reported.}
    \label{fig:synthetic} 
\end{figure*}

\begin{figure*}[h]
  \centering
  \setlength{\tabcolsep}{1pt}      
  \renewcommand{\arraystretch}{0.7}  
 
  \newcommand{\imgw}{0.099\linewidth}

\begin{tabular}{@{}*{9}{c}@{}}
{SANA} & {Aligner} & SD3.5-L & QwenImage & Z-Image & GPT5.2 & w/o Spar & w/o CI & \textbf{SEDGE} \\
\addlinespace[1pt]
\hline

\addlinespace[1pt]
\multicolumn{9}{c}{%
\scriptsize\texttt{a peacock is eating ice cream}
} \\
\addlinespace[1pt]

\includegraphics[width=\imgw]{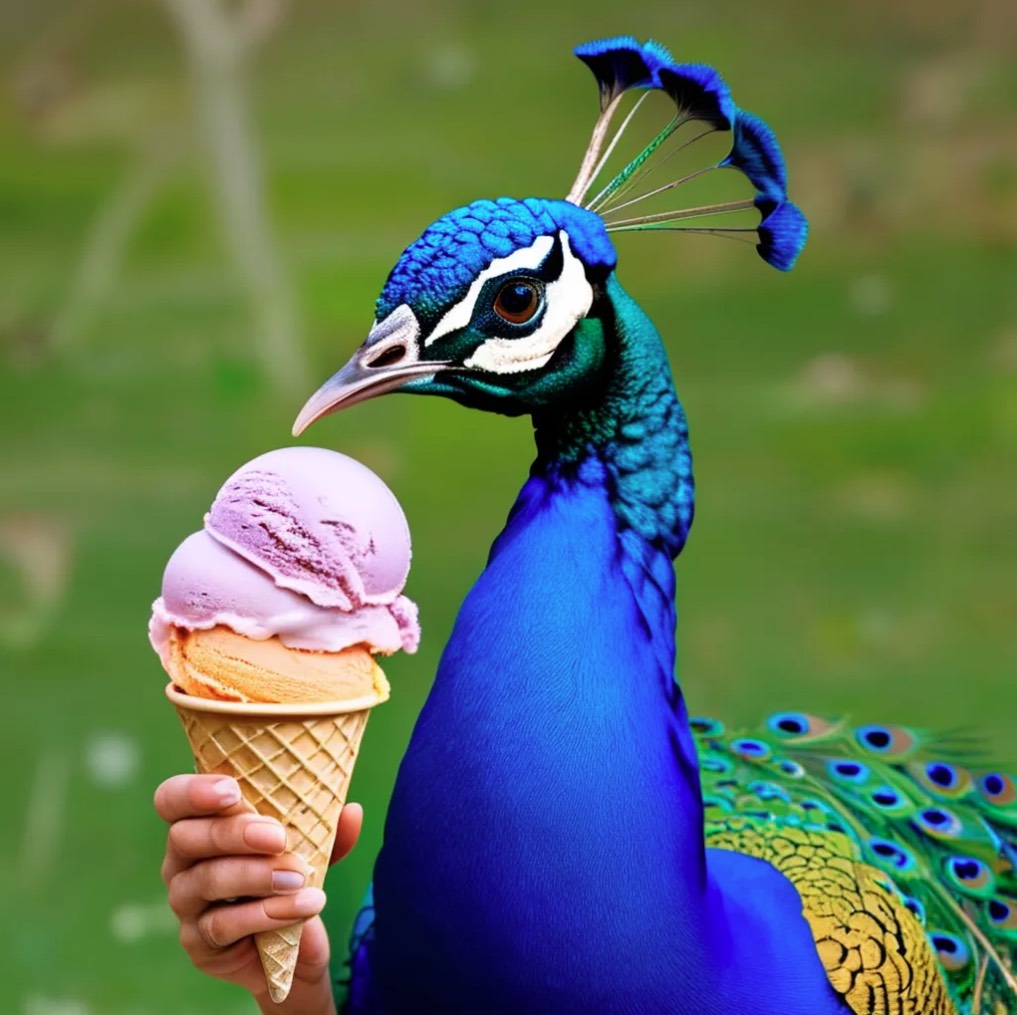} &
\includegraphics[width=\imgw]{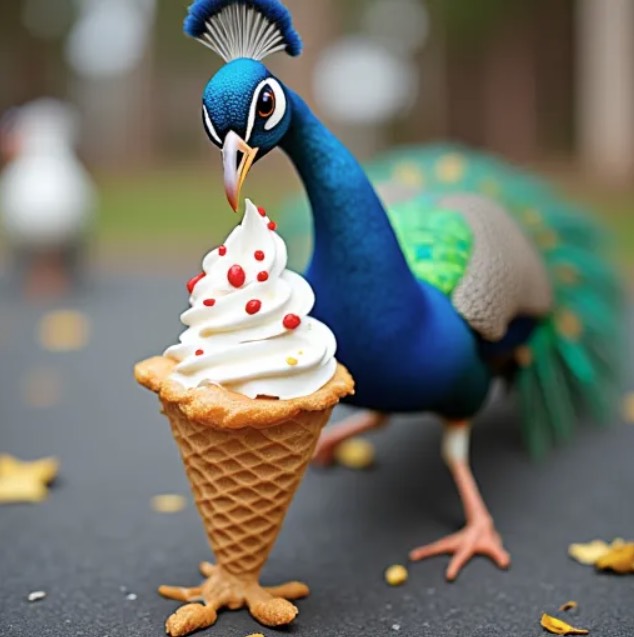} &
\includegraphics[width=\imgw]{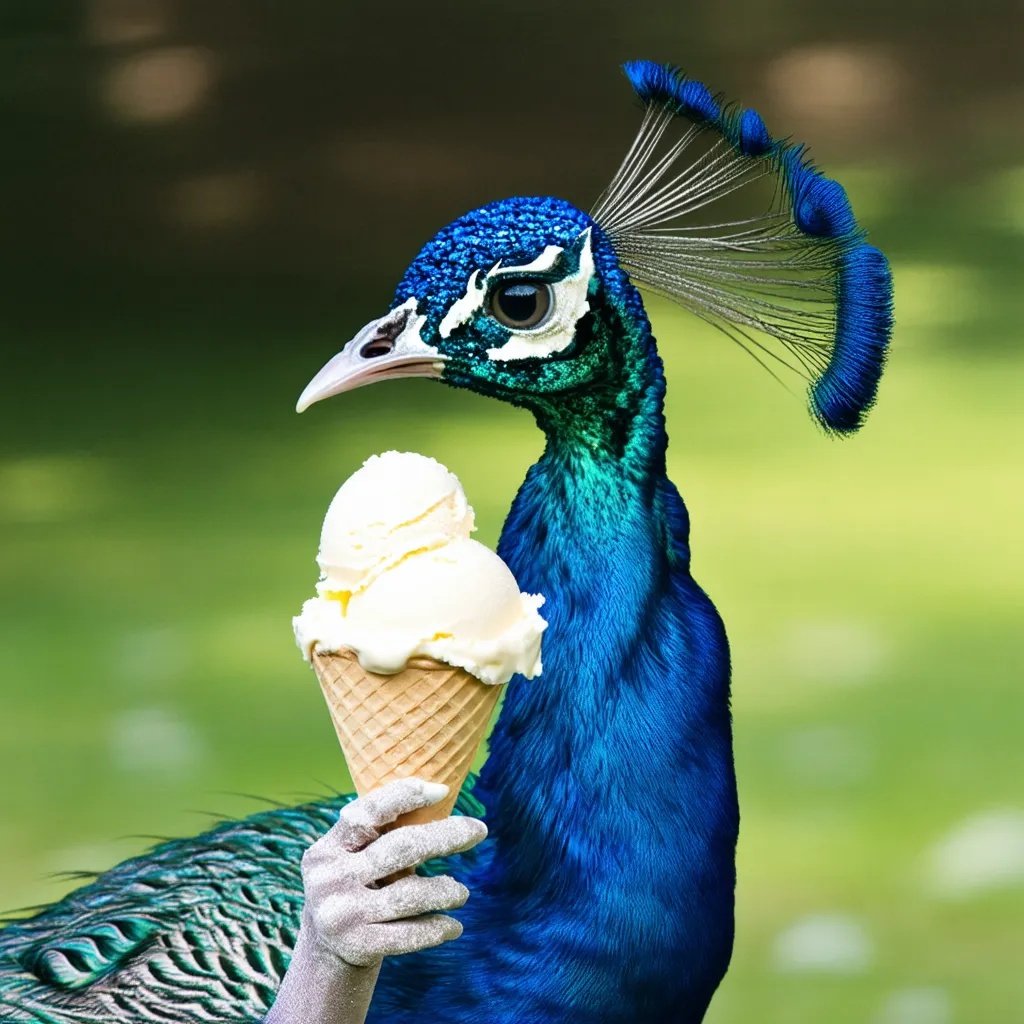} &
\includegraphics[width=\imgw]{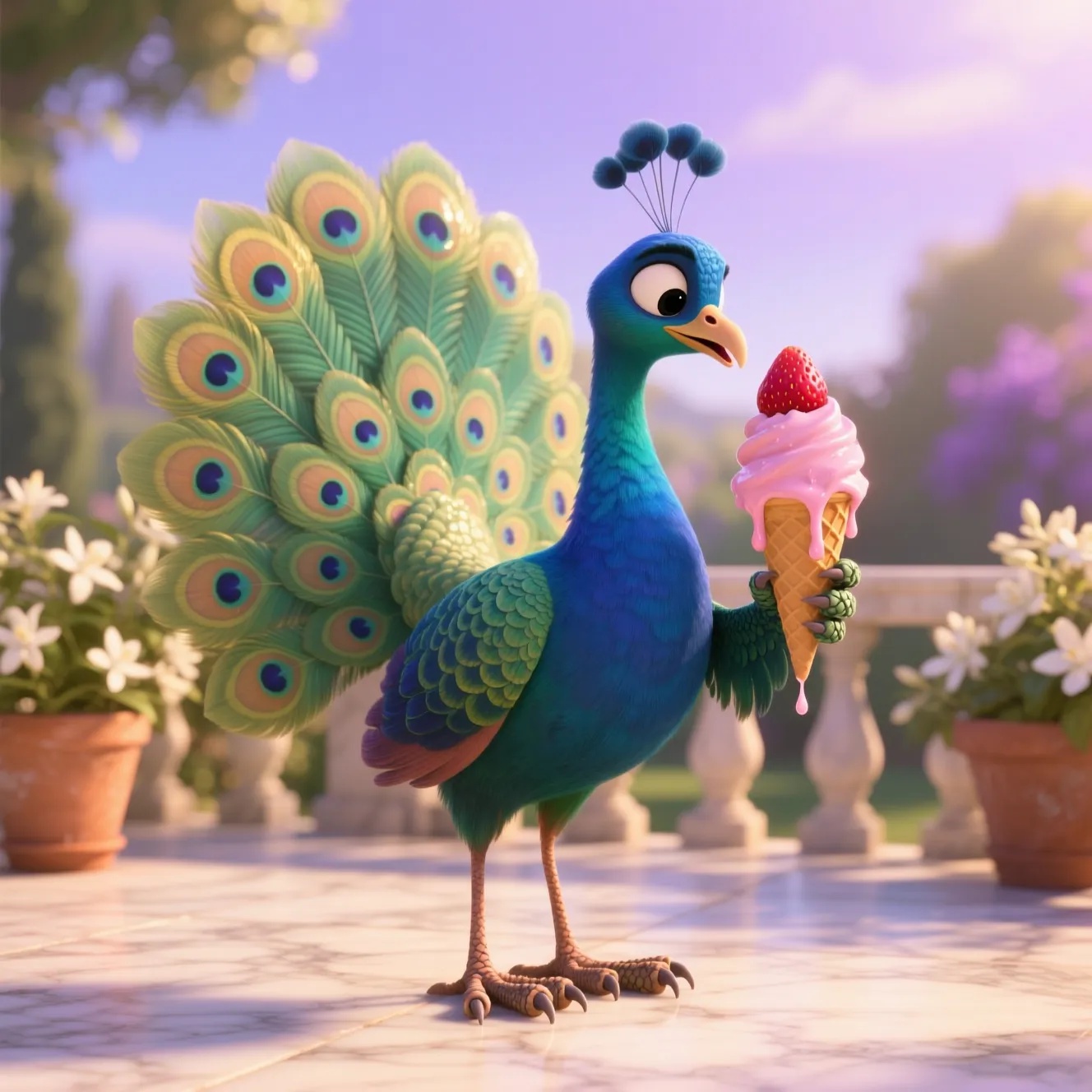} &
\includegraphics[width=\imgw]{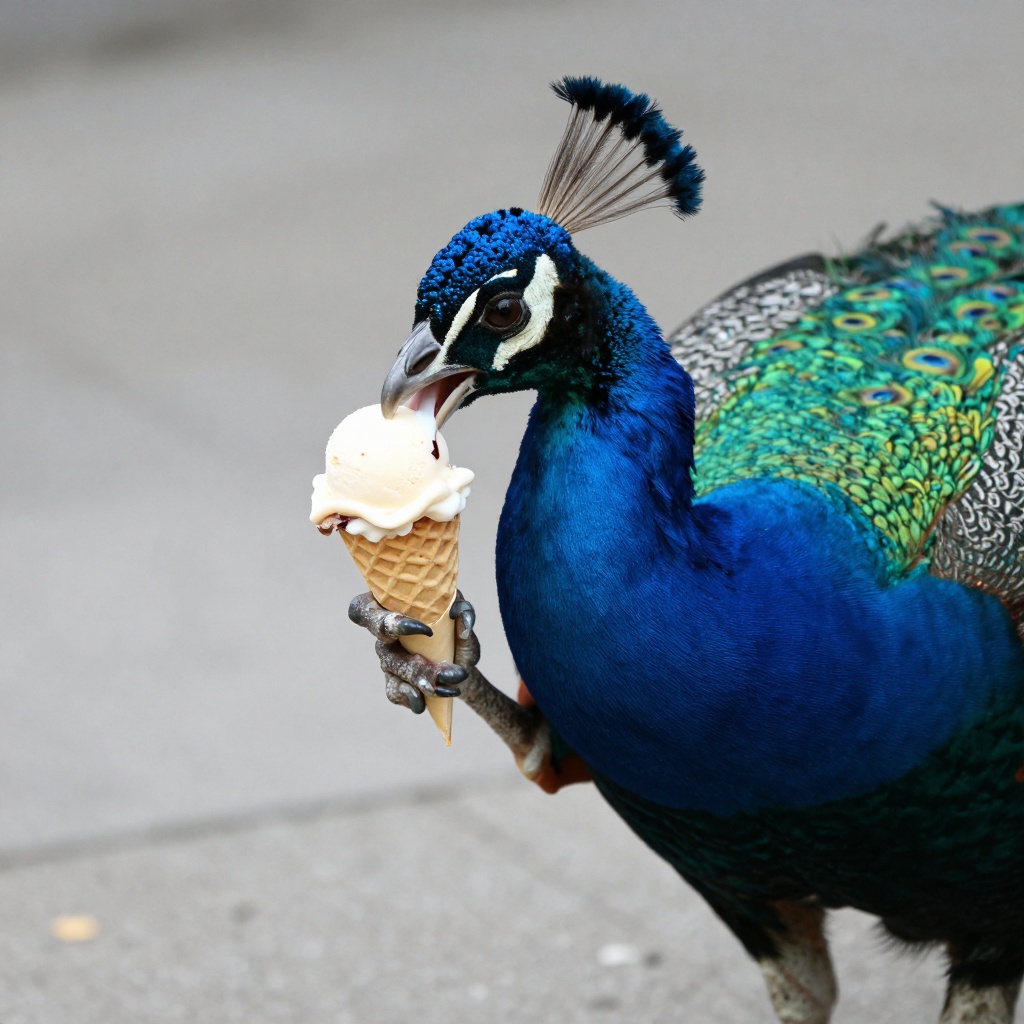} &
\includegraphics[width=\imgw]{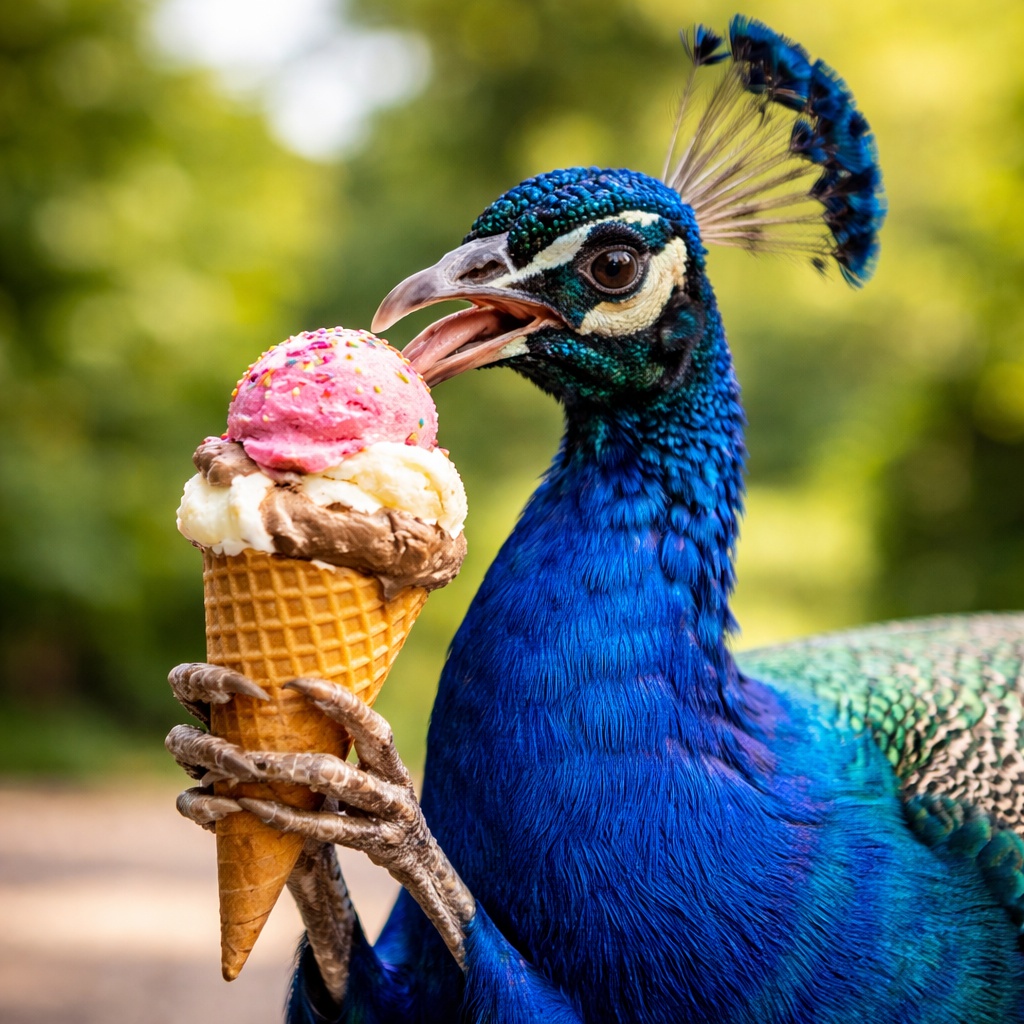} &
\includegraphics[width=\imgw]{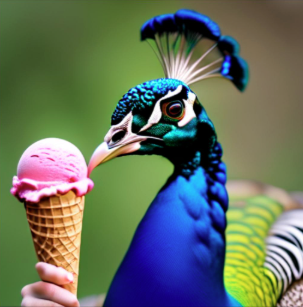} &
\includegraphics[width=\imgw]{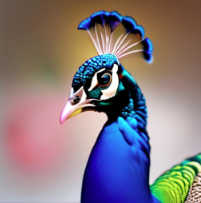} &
\includegraphics[width=\imgw]{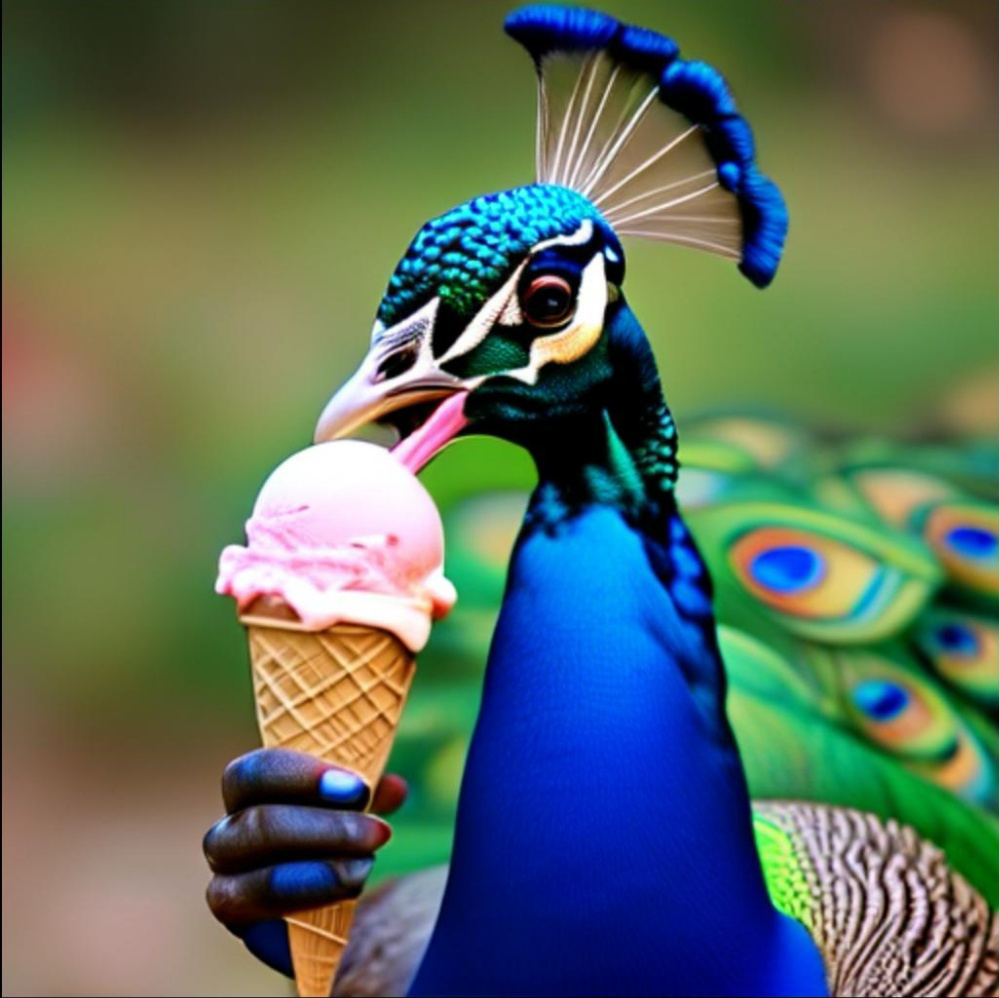} 
\\

\addlinespace[1pt]
\multicolumn{9}{c}{%
\scriptsize\texttt{a suitcase with bird wings}
} \\
\addlinespace[1pt]

\includegraphics[width=\imgw]{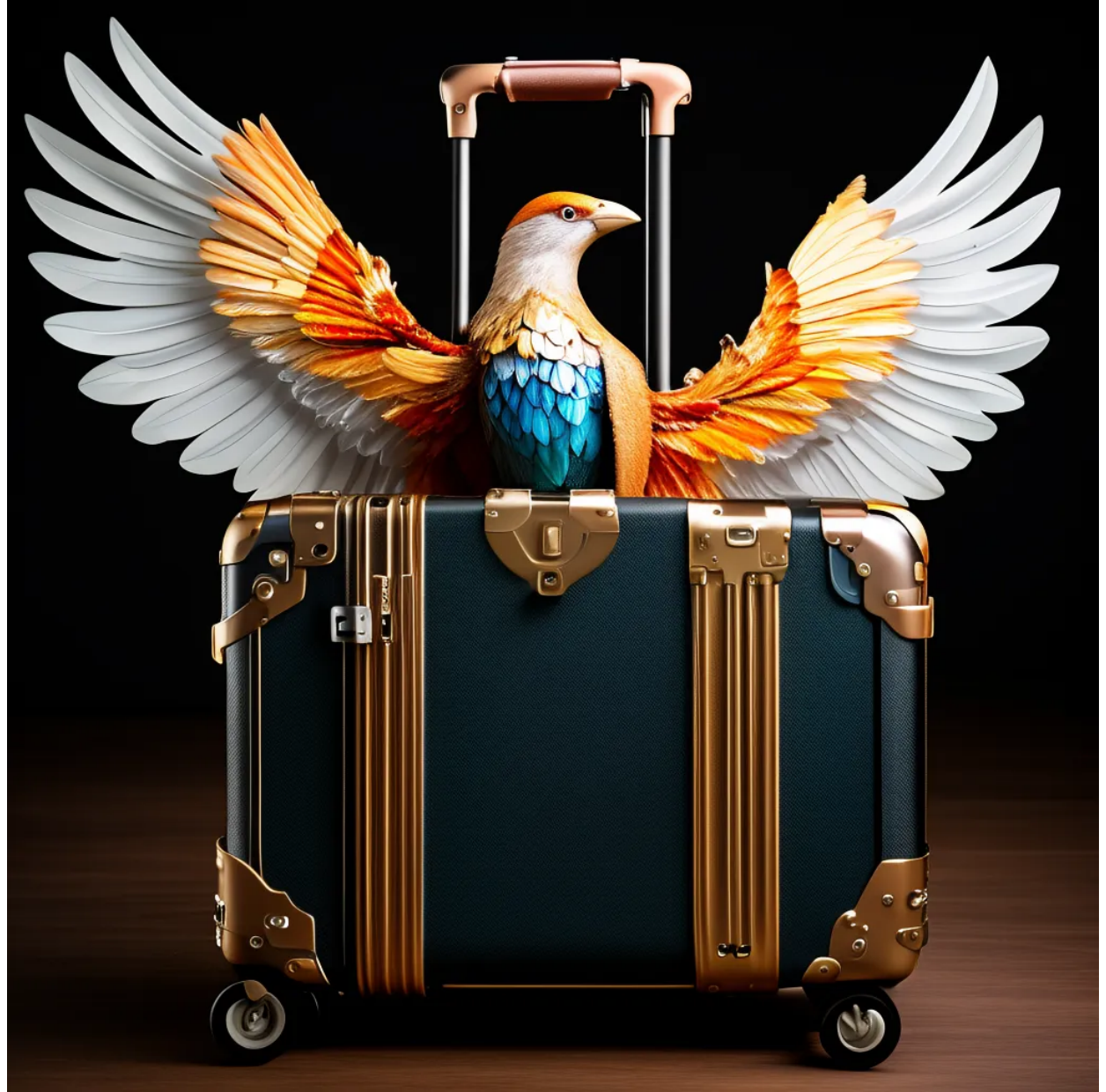} &
\includegraphics[width=\imgw]{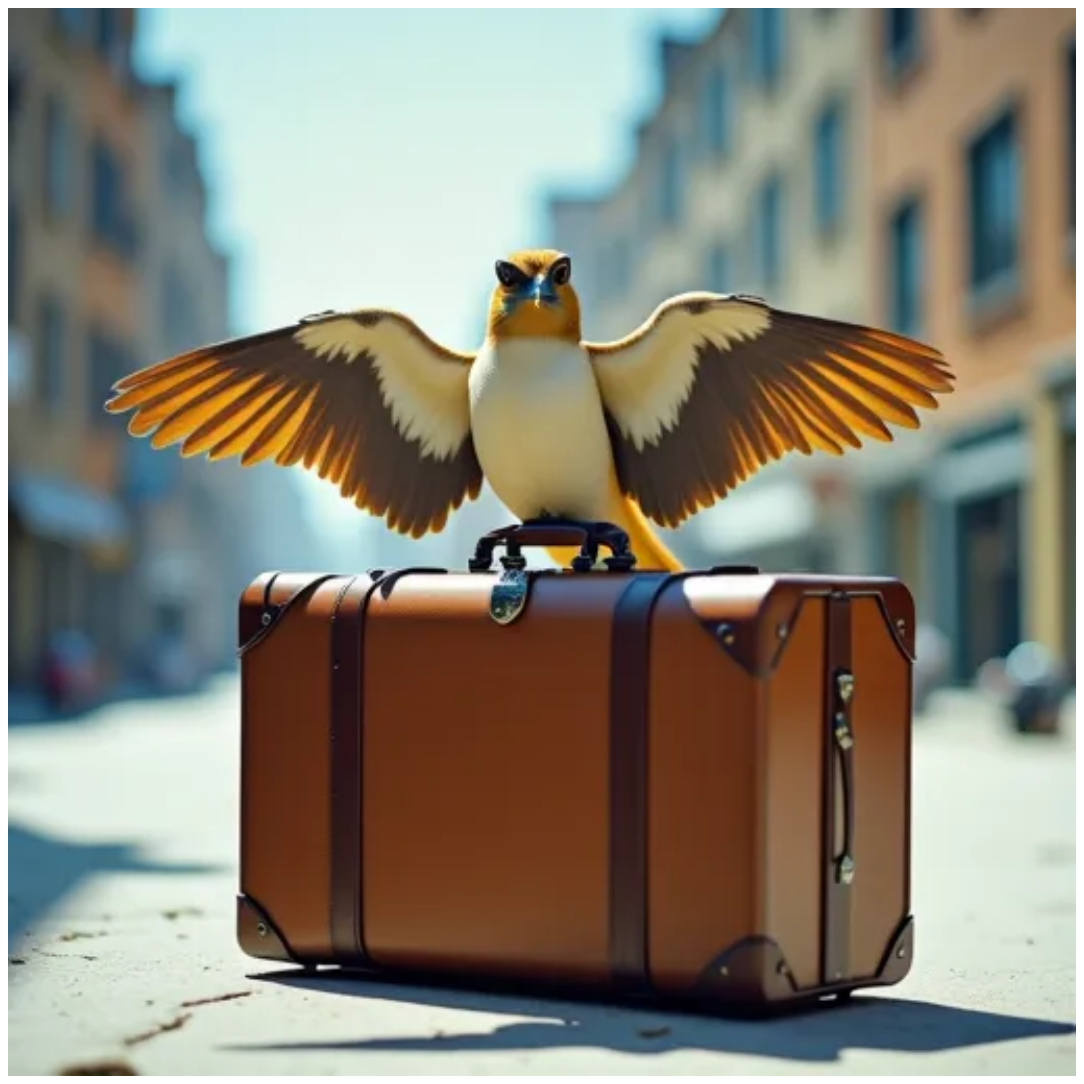} &
\includegraphics[width=\imgw]{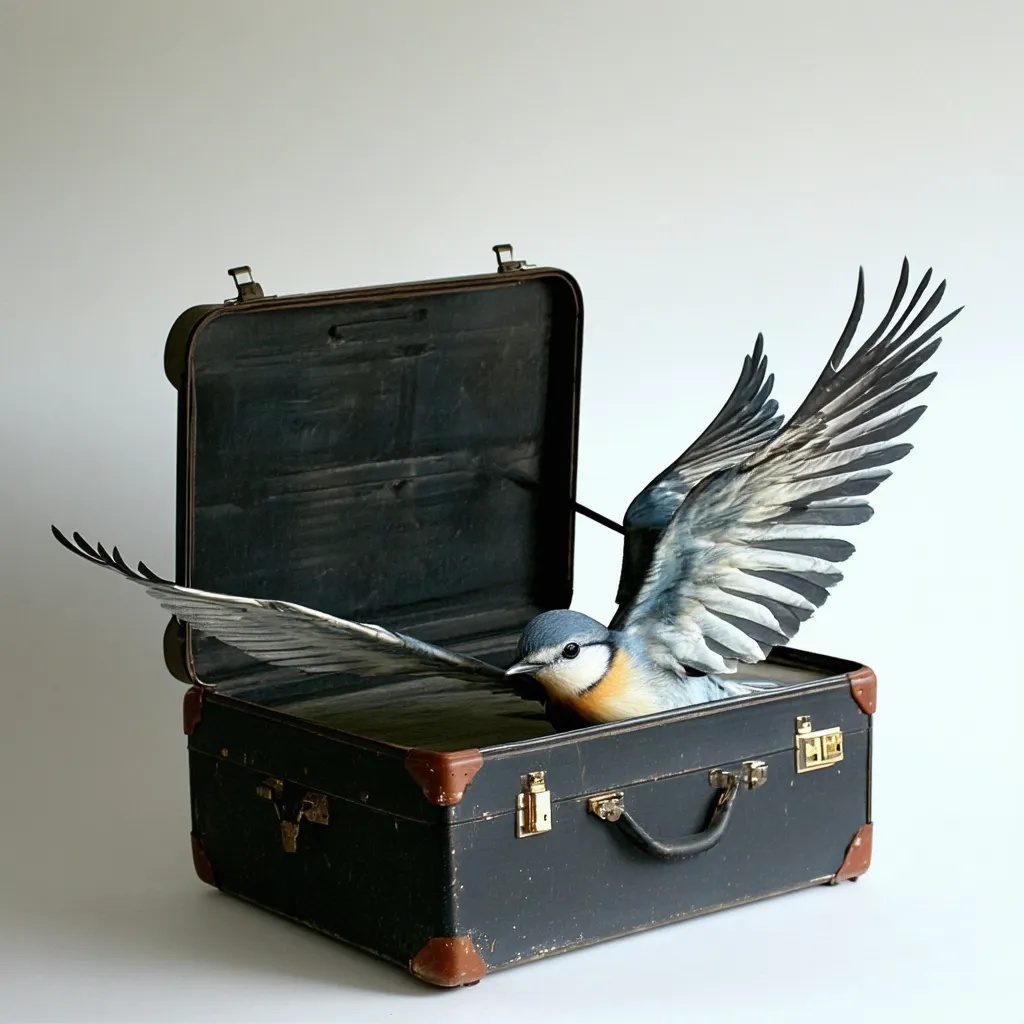} &
\includegraphics[width=\imgw]{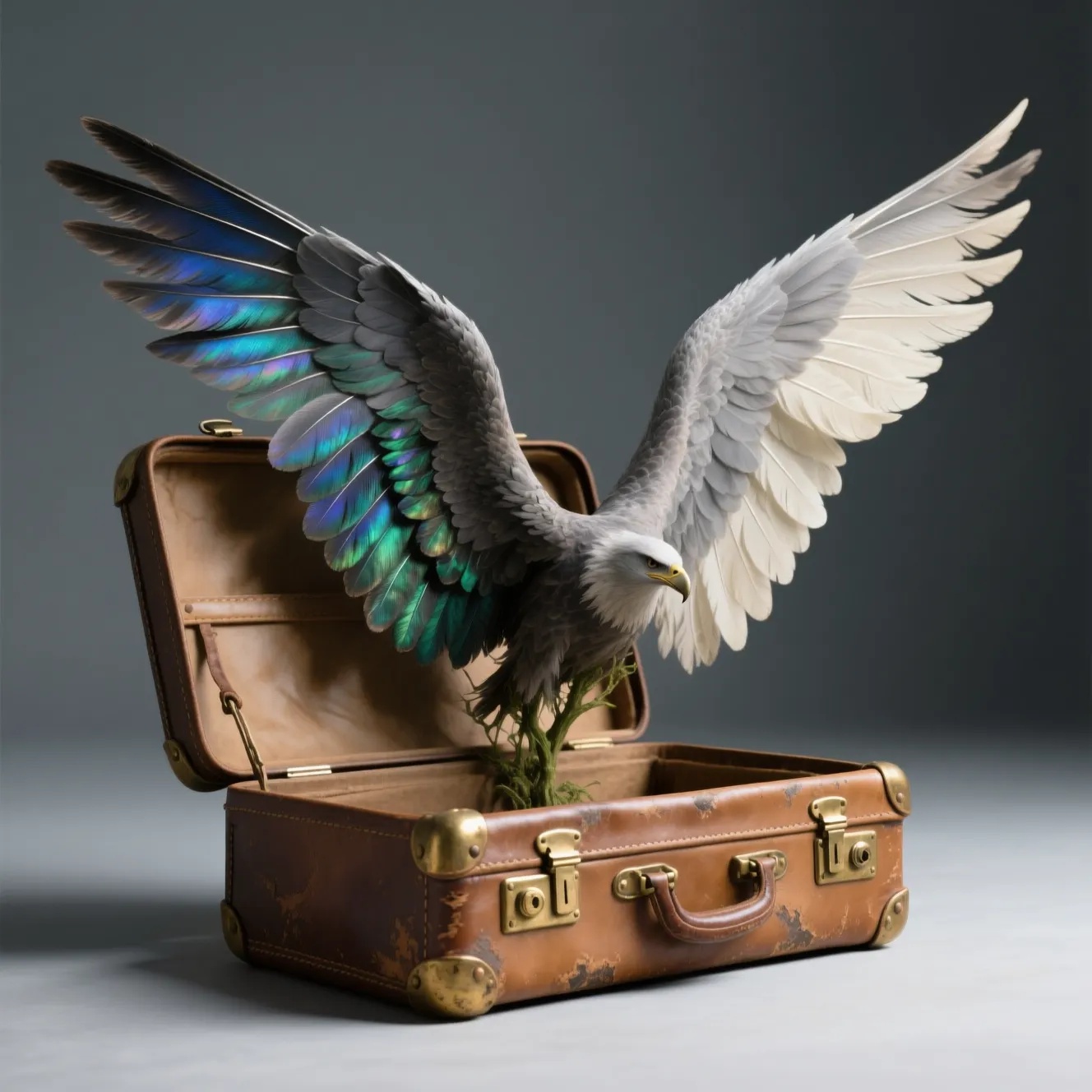} &
\includegraphics[width=\imgw]{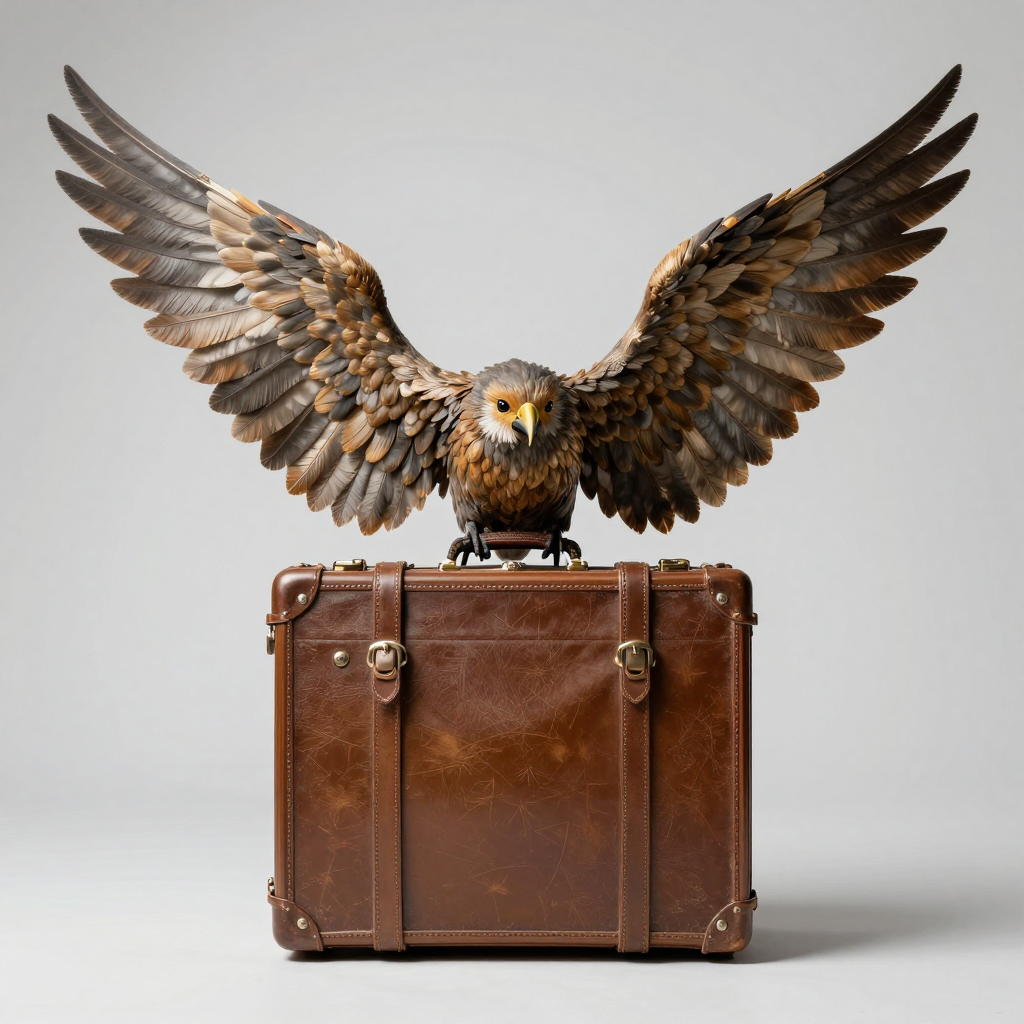} &
\includegraphics[width=\imgw]{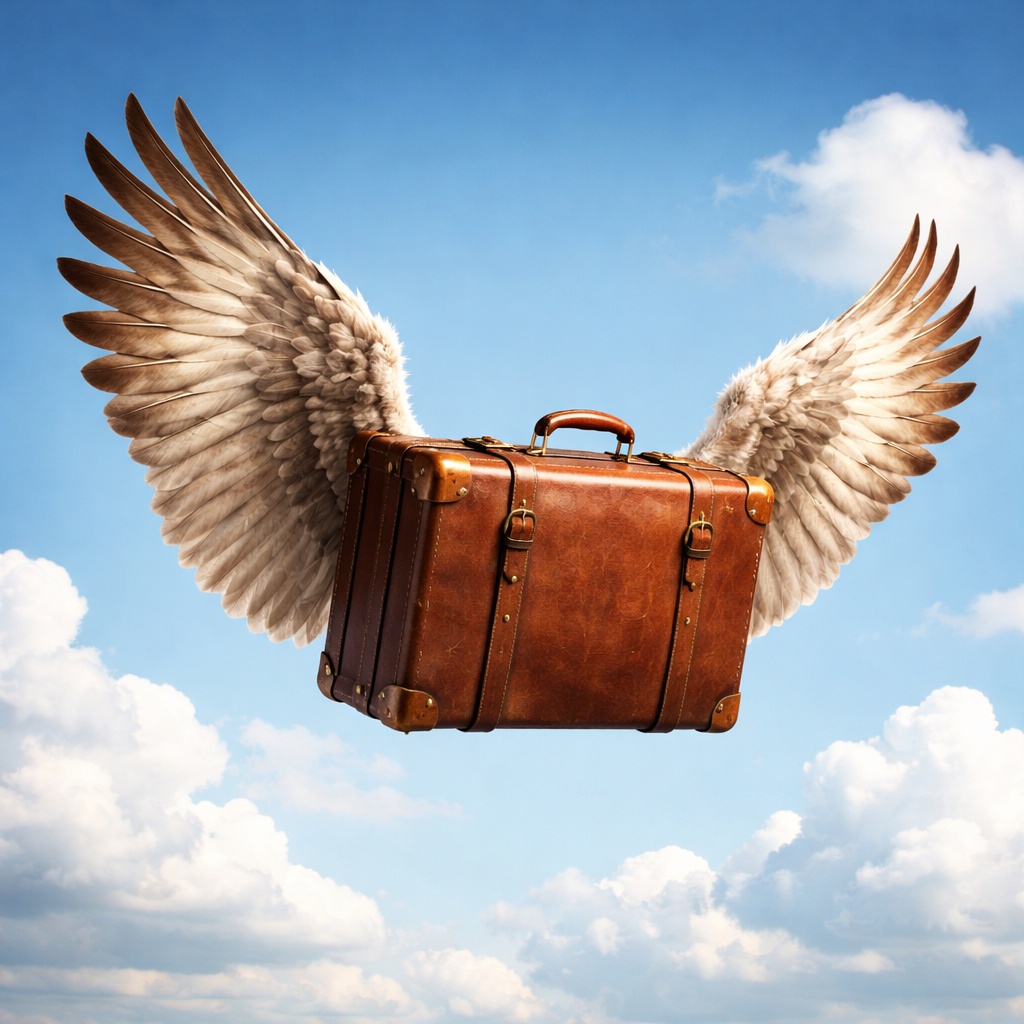} &
\includegraphics[width=\imgw]{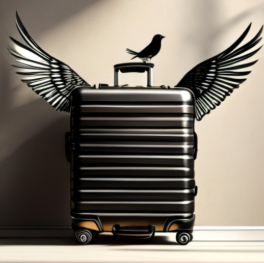} &
\includegraphics[width=\imgw]{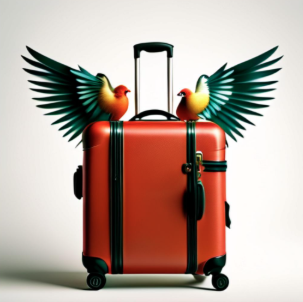} &
\includegraphics[width=\imgw]{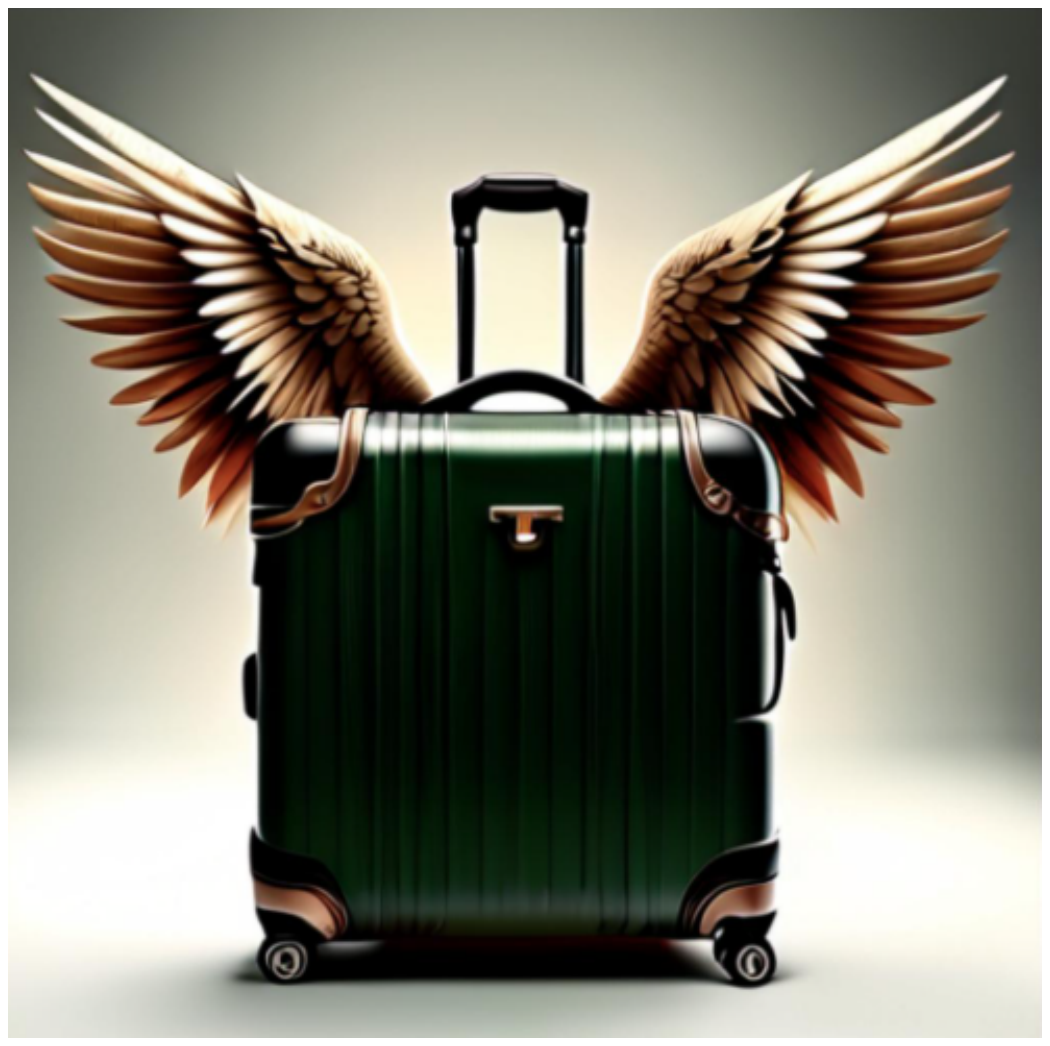} 
\\

\addlinespace[1pt]

\multicolumn{9}{c}{%
\scriptsize\texttt{a camera with insect antennae}
} \\
\addlinespace[1pt]

\includegraphics[width=\imgw]{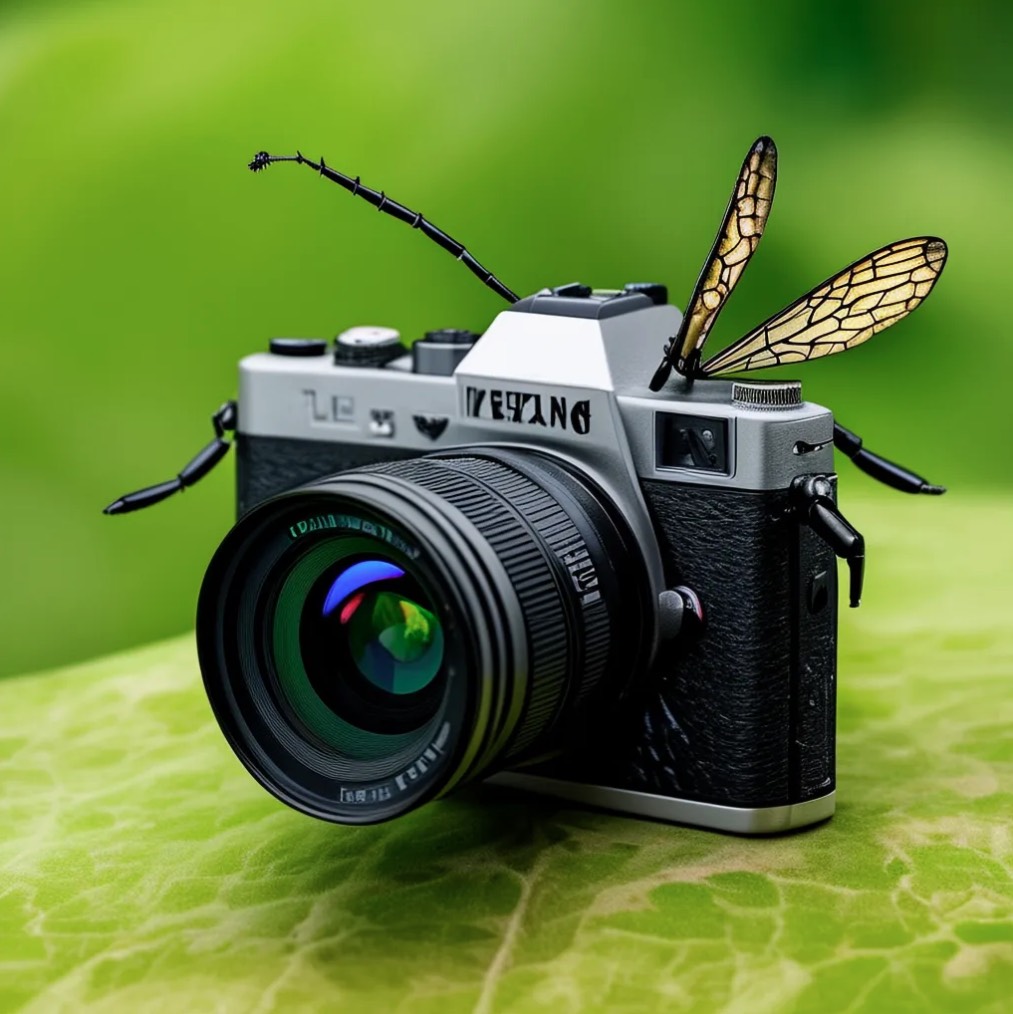} &
\includegraphics[width=\imgw]{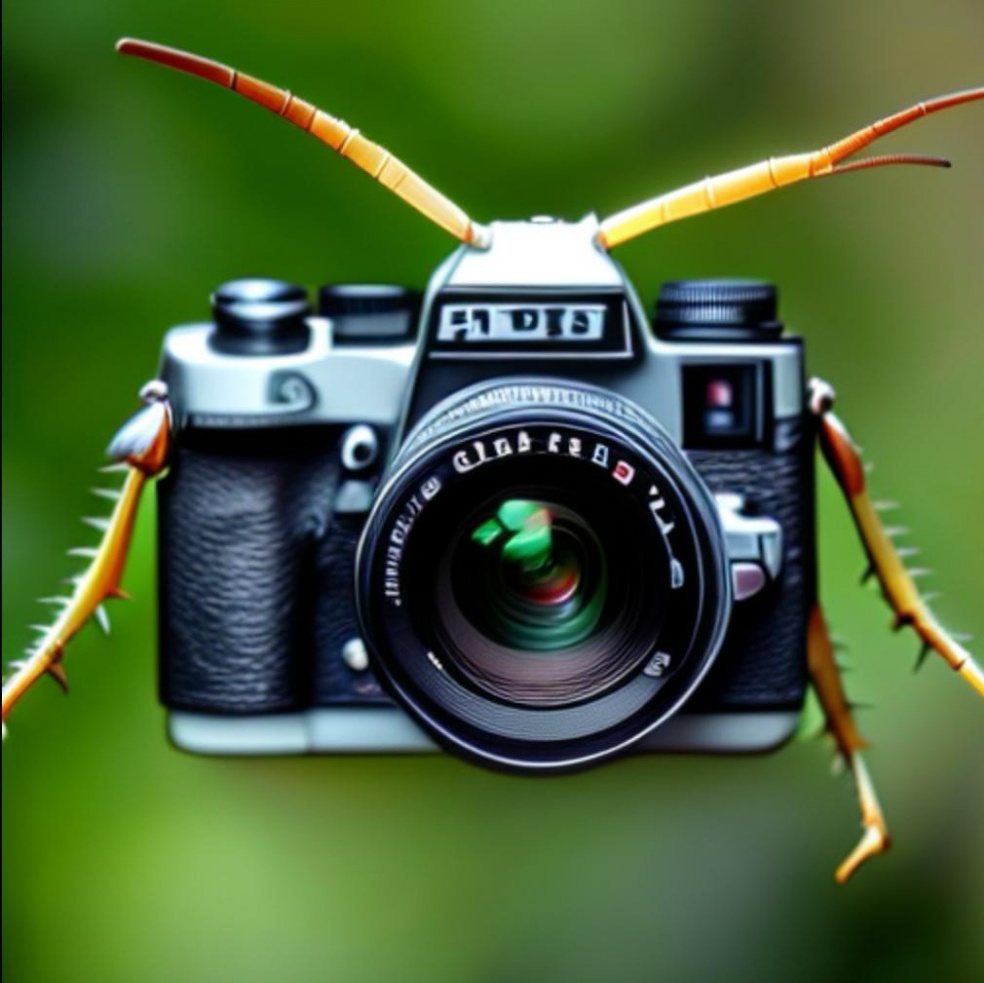} &
\includegraphics[width=\imgw]{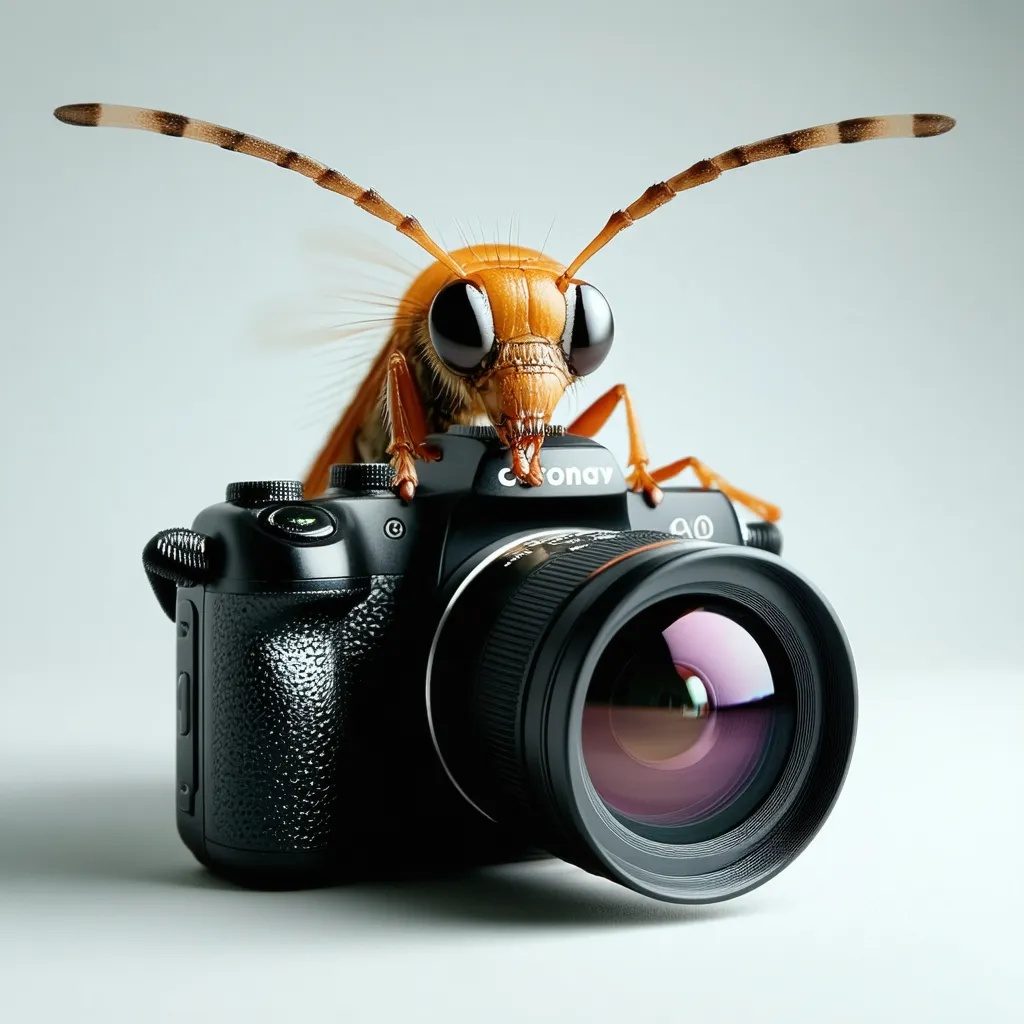} &
\includegraphics[width=\imgw]{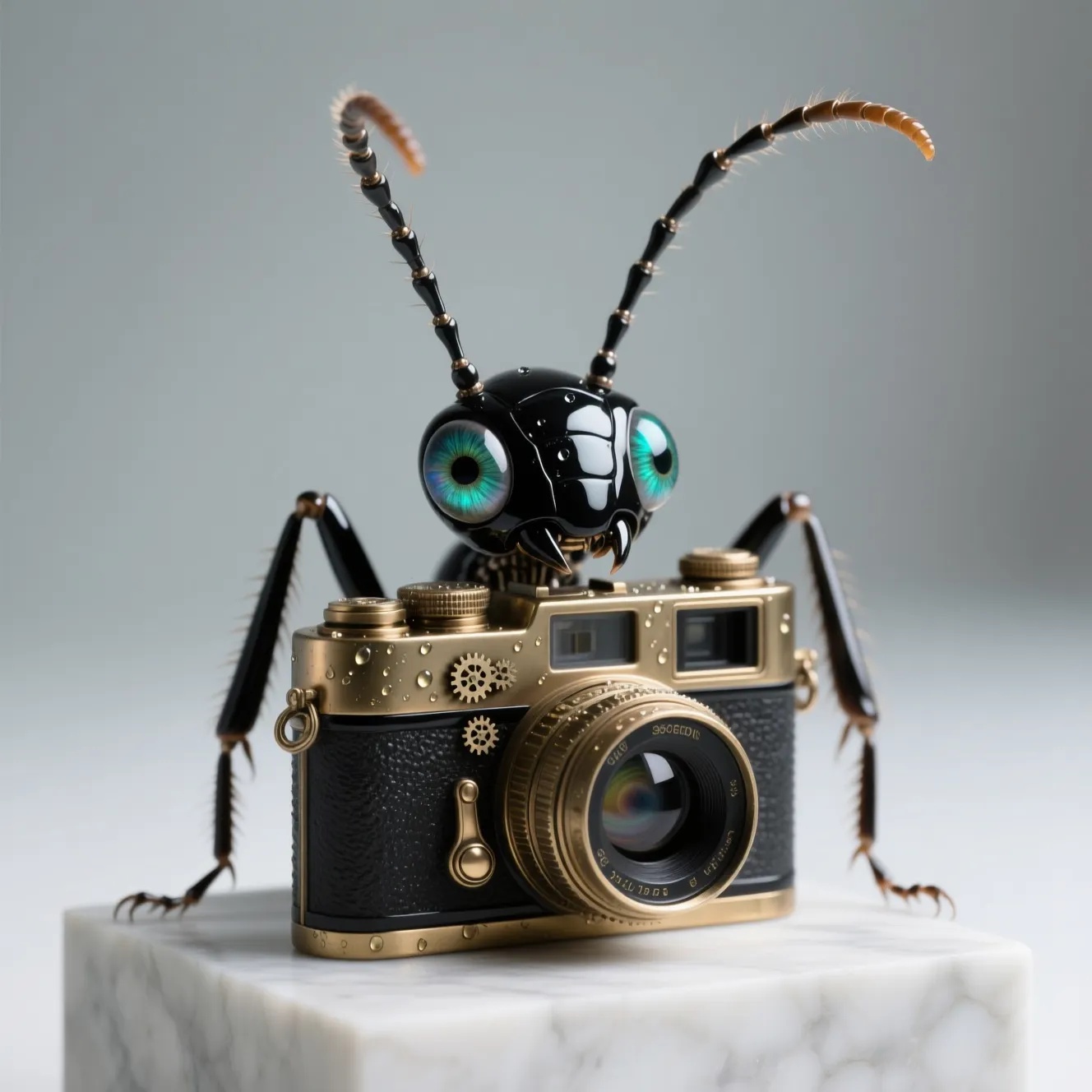} &
\includegraphics[width=\imgw]{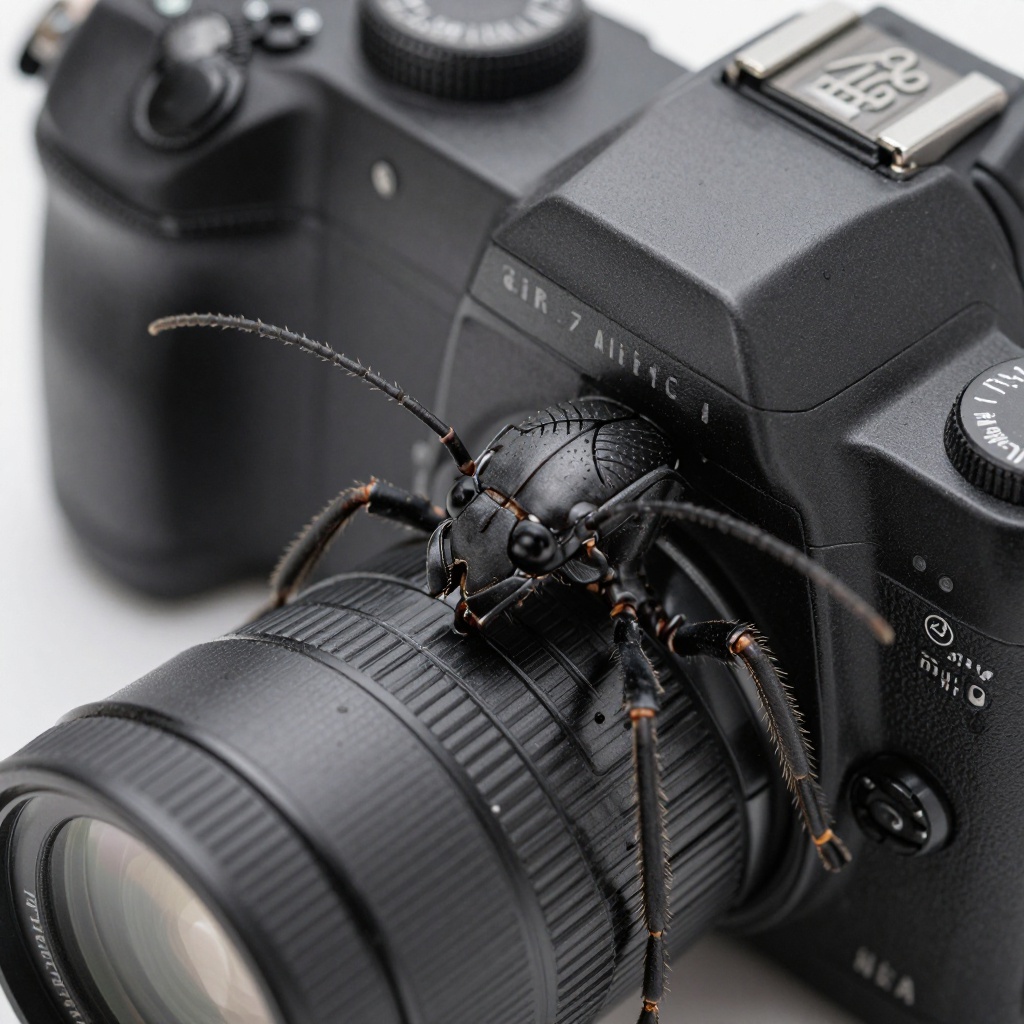} &
\includegraphics[width=\imgw]{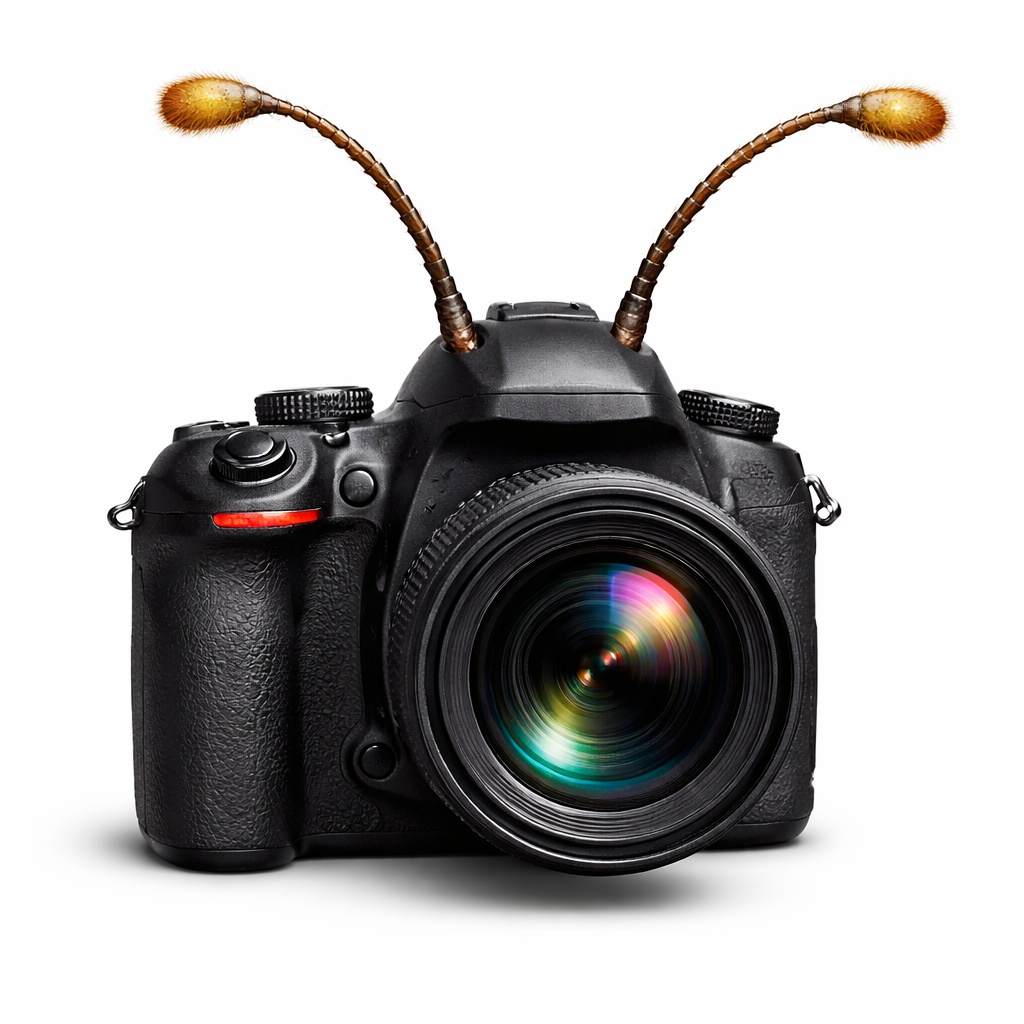} &
\includegraphics[width=\imgw]{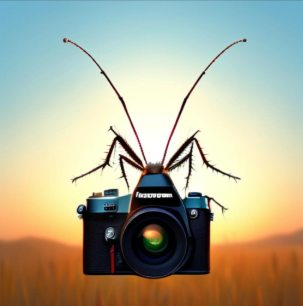} &
\includegraphics[width=\imgw]{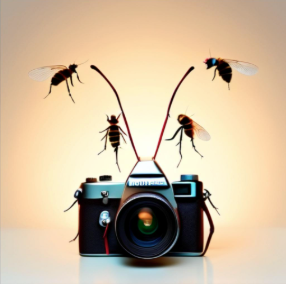} &
\includegraphics[width=\imgw]{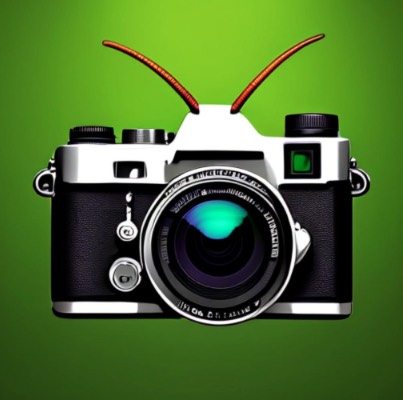} 
\\

\end{tabular}

  \caption{Comparison of image generation results under different compositional prompts. 
  Baseline models include SANA~\citep{xie2024sana}, Aligner~\citep{conceptaligner}, Stable Diffusion 3.5-Large~\citep{esser2024scaling}, QwenImage~\citep{wu2025qwen}, Z-Image~\citep{cai2025z}, and GPT5.2. 
  w/o Spar and w/o CI denote SEDGE without sparsity loss and conditional independence structure, respectively.
  Each row displays generated image for a prompt, and each column presents results from either SEDGE or baseline models. Despite using only 1.6B parameters, SEDGE achieves performance comparable to GPT5.2 and outperforms the 20B QwenImage model, demonstrating its effectiveness for extrapolation. Without the proposed sparsity and conditional independence structure, the model struggles to identify distinct concepts, often producing entangled representations (e.g., a full bird instead of a bird wing) and failing to adhere to the prompt.
  }
  \label{fig:qual-suitcase}
  \vskip -0.1in
\end{figure*}

\subsection{Structure-Informed Extrapolated Data Generation}
\vspace{-1.5mm}
Having estimated  conditional distribution $p(\mathbf{Z}\,|\,\mathbf{X})$ (called likelihood) under different data availability settings, we now describe how to generate samples to satisfy  novel combinations of specifications.
\paragraph{Optimization-based generation.}
When the numbers of $\mathbf{X}$ and $\mathbf{Z}$ are relatively small, we can directly generate extrapolated samples via optimization. Usually in these cases the specifications and features are given. We initialize from the empirical prior, i.e., $ \mathbf{X}_{init} \sim p^{\mathcal{D}}_{\mathbf{X}}$, and perform gradient ascent to maximize the conditional probability $p(\mathbf{Z}\,|\,\mathbf{X})$ to achieve extrapolation on novel specification combinations. The sparsity and conditional independence structure implied by the model significantly stabilizes optimization, as each feature is influenced by only a limited subset of specifications, reducing gradient conflicts. 

\paragraph{Diffusion-based generation.} 
In more general settings with high-dimensional feature and specification spaces, optimization-based generation can be sensitive to initialization, as the region of attraction for high-likelihood solutions may be small. To address this issue, we adopt the diffusion posterior sampling framework of \cite{chung2022dps} to guide the step-wise sampling process to satisfy the novel combination of specification characterized by the {\it structured} $p(\mathbf{Z}\,|\,\mathbf{X})$. To align with the algorithm, we train a diffusion model on $\mathbf{X}$ (if the features are not given, we train the diffusion model on the estimated features $\hat{\mathbf{X}} = f(\mathbf{Y_X})$). We then generate data points from Gaussian noise and gradually apply the constraint of the likelihood in the reversed denoising process, enabling extrapolated data generation under novel specification combinations.  Complete algorithmsare given in Appendix~\ref{app: sedge algorithm}.

\section{Experiments}
 On the synthetic experiments below, we used NVIDIA RTX A6000 GPUs to parallelize independent runs. 
 Each experiment completes within a few minutes on a standard CPU/GPU setup. For the text-to-image experiments, we used AMD Instinct MI210 GPUs for the prior and likelihood models, with 64 GB of memory per GPU. The prior and likelihood model training runs each took on the order of half a day.
\subsection{Synthetic Experiment}
We verify the extrapolation conditions proposed in the previous sections. To make the synthetic setting informative, we design the data generation process and the baselines as follows. 
\textbf{Data:} We follow the structure in Fig.~\ref{fig:two_structures}(b) to sample $\mathbf{X}$ and $\mathbf{Z}$, where $X_1, X_3 \sim \mathcal{U}(0,1)$, $X_2 \sim \mathcal{U}(0.75, 0.8)$, $Z_1 = 0.8 X_1 + 0.6 Z_2 + \epsilon_1$, and $Z_2 = 0.6 X_3 + 0.8 Z_2 + \epsilon_2$, with $\epsilon_1, \epsilon_2 \sim \mathcal{U}(0, 0.2)$. We then split the dataset into training and test sets, corresponding to $S=1$ and $S=0$, respectively. Since the selection variable $S$ depends on $\mathbf{Z}$, the Oracle version of the data to be generated is defined by the condition $(Z_1 < 0.8) \cap (Z_2 > 1.0)$, and the remaining samples constitute Seen X (training set). This construction ensures that the evaluation genuinely tests extrapolation ability.
\textbf{Our model and baselines:} We compare five models, denoted as models A, B, C, D, and E. The first three models adopt the ``features pointing to specifications'' structure shown in Fig. \ref{fig:setting_toy_two_specifications}, while the last two adopt the ``specifications pointing to features'' structure shown in Fig.~\ref{fig:two_structures}(a). Model A (ours) exploits the sparsity of the links and conditional independence among $Z_i$ by estimating the forward likelihood using the exact form of (\ref{eq: loss likelihood}). Model B ignores the sparse structure and models the likelihood by conditioning each $Z_i$ on all of $\mathbf{X}$. Model C further drops the conditional independence assumption and jointly models $p(\mathbf{Z}\mid\mathbf{X})$. Model D reverses the causal direction and learns $p(\mathbf{X}\mid\mathbf{Z})$, as assumed in many conditional generation models. Model E also reverses the direction but assumes each feature $X_i$ is conditionally independent given $\mathbf{Z}$, modeling $p(X_i\mid\mathbf{Z})$, which reflects text-conditioned image generation models \cite{conceptaligner}. For models A, B, and C, we apply both optimization-based inference and diffusion posterior sampling for extrapolated generation. For models C and E, we directly generate samples given novel specifications, as no inverse problem is involved.
\textbf{Metric:} We evaluate the Maximum Mean Discrepancy (MMD)~\cite{NIPS2006_e9fb2eda} with a Gaussian kernel between the generated samples and the test-set features $\mathbf{X}$. The MMD is zero if and only if the two distributions are identical.

Fig.~\ref{fig:synthetic}(a--b) illustrates the data split. 
As seen from the data distributions, the Oracle data (to be generated) are clearly out of support of the Seen data. 
Across all models, our proposed model achieves the smallest MMD, indicating the closest match between the generated data and the oracal data. The improved performance over models B and C verifies that our extrapolation conditions, namely, conditional independence among specifications and sparse structure, are effective. For models D and E, where specifications point to features, extrapolation ability is further limited, as the generated manifolds are more constrained. We refer readers to Appendix~\ref{app: synthetic given 2d} for a more complete two-dimensional visualization of the generation results.


For scenarios in which the specifications and features are not given but have to be learned from the observations, the only change is in their estimation, and the extrapolated data generation process stays the same once the mapping from the observations to the features or the specifications is learned. We refer readers to Appendix~\ref{app: identify feat and specs} for a discussion of the recoverability of $\mathbf{X}$ and $\mathbf{Z}$, even though this is not the main focus of this work. Please refer to Appendix \ref{Sec:F1} for more settings and results on synthetic data, especially on the estimation results of the features and specifications, robustness of the methods to assumption violations, and ablation studies.

\subsection{Extrapolated Text-to-Image Generation}
\vspace{-1.5mm}


To further validate our theory, we conduct extrapolation experiments on text-to-image generation to evaluate the model's ability to synthesize unseen compositional concepts, e.g. ``a peacock eating ice cream." As an initial implementation of SEDGE, we leverage concepts from Aligner~\citep{conceptaligner} to learn a graphical model with links from features (image concepts) to specifications (text concepts) and a likelihood predictor. 
We use diffusion-based generative backbone (here we used SANA~\citep{xie2024sana}) since images are of high dimensions. 
For further implementation details please refer to Appendix~\ref{app:experiment_results} .


We compare our model against several state-of-the-art baselines, including the original SANA,
Aligner, Stable Diffusion 3.5-Large~\citep{esser2024scaling}, 
QwenImage~\citep{wu2025qwen}, Z-Image~\citep{cai2025z}, and GPT5.2 
(Fig.~\ref{fig:qual-suitcase}). 
These baselines often suffer from concept leakage or the generation of spurious components, 
such as a peacock merely touching ice cream instead of eating it, or suitcases accompanied by redundant wings or birds. 
In contrast, SEDGE produces semantically coherent images that better preserve the required compositional attributes. 
We further conduct an ablation study by removing the sparsity loss and conditional independence structure, which results in more entangled compositions (e.g., generating a full bird instead of bird wings) and poorer adherence to the prompt (e.g., the ice cream is missing).
Notably, our 1.6B-parameter model achieves more faithful generation than the 20B QwenImage model and performs comparably to GPT5.2, demonstrating its extrapolation ability and efficiency under limited training data.  For additional results, including quantitative evaluations, please see Appendix~\ref{app:experiment_results}.
\vspace{-0.1cm}

\section{Conclusion and Discussions}
\vspace{-2mm}
For generative AI to reliably produce novel content, the ability to extrapolate beyond the training data is essential. Unlike traditional approaches that often rely on assumptions about the functional forms of the underlying relationships, we investigate this problem from a structural perspective, by employing a graphical model to represent how features and property specifications are related, as well as how the training data are limited. This framework enables us to characterize which components in the decomposition of the joint distribution of the training data can be leveraged to inform extrapolated data generation, and how this can be achieved in practice. 
We further provide conditions under which novel data can be generated reliably, as well as conditions under which the distribution of the novel data can be approximately constructed. As future work, we plan to explore how these ideas can address important problems such as drug discovery and novel protein structure prediction.

{\bf Limitations and Impact.} This paper assumes particular structural properties to make extrapolated data generation feasible and reliable. However, there are situations in which these assumptions do not hold, in which case the proposed framework may fail. In addition, the finite-sample properties of the method are not well understood, although this is a common limitation in deep learning. This work aims to advance the field of machine learning, particularly in achieving reliability and trustworthiness in extrapolated data generation. We hope that this line of research will improve generative AI's ability to support reliable creativity and benefit scientific discovery, drug discovery, and materials design. At the same time, we should be mindful that it may also make it easier to generate fake or misleading novel content.

\bibliography{extrapolation}
\bibliographystyle{abbrv}

\newpage
\appendix
\onecolumn
{\section*{Appendices for \emph{``SEDGE: Structural Extrapolated Data Generation''}}
\addcontentsline{toc}{section}{Appendices}

\startcontents[appendix]

\section*{Appendices Contents}
\printcontents[appendix]{l}{1}{}

\section{Related Work}

\textbf{Extrapolation} 
Existing approaches for extrapolation largely fall into two categories, distinguished by where the assumption are imposed. The first category make assumptions on the data distribution, typically assuming the target distribution shares the same support as the source. It includes most work on the topic of out-of-distribution generalization, which focuses on invariance or reweighting rather than extrapolation on novel specifications \cite{yang2024generalized}. There is also work in regression and statistical inference studying prediction or uncertainty guarantees outside the observed support \cite{dong2022first, shen2025engression} without addressing novel combinations of specifications. 

The second category introduce functional assumptions on the generation process to regularize the behavior ouside training support. Special restrictions are introduced to data generating process including linear, additive, slot-wise, compositional generators, or other specific parametric relations \cite{liu2022compositional_diffusion, lachapelle2023additive, wiedemer2023compositional,  saengkyongam2023linear_intervene}. Other work derives extrapolation guarantees from regularities on distribution shifts, for instance, smoothness \cite{kong2024towards}. These assumptions are hard to be verified from observed data and do not provide a general framework for extrapolated data generation on novel specifications. 
In contrast, we provide a general structural view to the problem that does not assume a particular functional form of the generating mechanism or smoothness of shifts.

\textbf{Text-to-Image Generation}
In recent years, diffusion-based approaches have become the dominant paradigm for text-to-image generation due to their strong performance~\citep{rombach2022high,esser2024scaling,cai2025z,wu2025qwen, flux2024}. One line of research focuses on improving the efficiency of training and sampling~\citep{chen2023pixart,chen2024pixart,xie2024sana,xie2025sana,chen2025sana,chen2025sana-video}. Another line of work aims to enhance controllability by incorporating additional conditioning signals, such as structural or semantic guidance~\citep{zhang2023adding,ye2023ip,zhao2023uni,xie2023omnicontrol}. A further line of research studies disentanglement in text-to-image generation, seeking to separate and manipulate semantic factors within the generative process~\citep{wu2023uncovering,liu2023unsupervised,conceptaligner,dalva2024noiseclr}. In this paper, we consider text-to-image generation as a real problem to address, but in the context of extrapolation.



\section{Main Proofs for Extrapolated Generation}
\subsection{Proof of Proposition \ref{proposition:recover_distribution_no_common_features_toy}}\label{app:proof_proposition_recover_distribution_no_common_features_toy}
\PropositionRecoverDistributionNoCommonFeaturesToy*

\begin{proof}
By Bayes Rule, we have
\begin{flalign}
    &p(\mathbf{X} \,|\, Z_1=1, Z_2=1) \nonumber \\
    &= p(X_1,X_2\,|\, Z_1=1, Z_2=1) \nonumber \\
    &\qquad \cdot p(X_3,X_4 \,|\,  X_1,X_2,Z_1=1, Z_2=1) \nonumber \\ 
    &= p(X_1,X_2\,|\, Z_1=1) p(X_3,X_4 \,|\,  Z_2=1) \label{eq:proof1_c1} \\
    &= p^\mathcal{D}(X_1,X_2\,|\, Z_1=1) p^\mathcal{D}(X_3,X_4, \,|\,  Z_2=1),\label{eq:proof1_c2}
\end{flalign}
where (\ref{eq:proof1_c1}) follows from $\{X_1,X_2\}\independent Z_2 \,|\, Z_1$ and $\{X_3,X_4\}\independent \{X_1,X_2,Z_1\} \,|\, Z_2$. Eq. (\ref{eq:proof1_c2}) follows from $\{X_1,X_2\}\independent S \,|\, Z_1$, $\{X_3,X_4\}\independent S \,|\, Z_2$, and Assumption \ref{eq:assumption_given_specifications_toy_1}.
\end{proof}

\subsection{Proof of Proposition \ref{proposition:non_identifiability_distribution_toy}}\label{app:proof_proposition_non_identifiability_distribution_toy}
\PropositionNonIdentifiabilityDistributionToy*
\begin{proof}
We begin with deriving the data distribution following the novel specification, $p(\mathbf{X}\,|\,Z_1=1, Z_2=1)$.

\begin{align}
    & p(\mathbf{X}\,|\,Z_1 = 1, Z_2 = 1) \\ 
    = &  p(X_1, X_2\,|\,Z_1 = 1, Z_2 = 1)p(X_3\,|\, X_1, X_2 ,Z_1 = 1, Z_2 = 1) \\ 
    = & p(X_2\,|\,Z_1 = 1, Z_2 = 1)p(X_1\,|\,X_2, Z_1 = 1, Z_2 = 1)p(X_3\,|\, X_1, X_2 ,Z_1 = 1, Z_2 = 1) \\ \label{Eq:ind}
    = & p(X_2\,|\,Z_1 = 1, Z_2 = 1)p(X_1\,|\,X_2, Z_1 = 1, )p(X_3\,|\, X_2,  Z_2 = 1) \\ 
    = & p(X_2\,|\,Z_1 = 1, Z_2 = 1) p(X_1\,|\,X_2, Z_1 = 1, S=1)p(X_3\,|\, X_2,  Z_2 = 1, S= 1) \\ \label{Eq:ind2}
    = & p(X_2\,|\,Z_1 = 1, Z_2 = 1) p^\mathcal{D}(X_1\,|\,X_2, Z_1 = 1)p^\mathcal{D}(X_3\,|\, X_2,  Z_2 = 1),
\end{align}
Eq. (\ref{Eq:ind}) holds because $X_1 \independent Z_2 \,|\, \{X_2, Z_1\}$ and $X_3 \independent \{Z_1, X_1\}  \,|\, \{X_2, Z_2\}$. Eq. (\ref{Eq:ind2}) is the case because $X_1 \independent S \,|\, \{X_2, Z_1\}$ and $X_3 \independent S \,|\, \{X_2, Z_2\}$. Conditioning on $S=1$ means that they are distributions implies by the given data.  Note that although $X_2 \independent S \,|\, \{Z_1, Z_2\}$ implies
$$p(X_2\,|\,Z_1 = 1, Z_2 = 1) = p(X_2\,|\,Z_1 = 1, Z_2 = 1, S=1),  $$
the quantity on the right-hand side is not defined since for the conditioning variables, $p(Z_1 = 1, Z_2 = 1, S=1) = p(Z_1 = 1, Z_2 = 1 \,|\, S=1) p(S = 1) = 0$, as the given data do not satisfy $Z_1 = 1, Z_2 = 1$.

Deriving $p(\mathbf{X} \,|\, Z_1 = 1, Z_2 =1)$ then reduces to the estimation of $p(X_2 \,|\, Z_1 = 1, Z_2 = 1)$.   
\begin{align}
    & p(X_2\,|\,Z_1 = 1, Z_2 = 1) \\ 
    = & p(Z_1 = 1, Z_2 = 1 \,|\, X_2) \cdot \frac{p(X_2)}{P(Z_1 = 1, Z_2 = 1)} \\ 
    \propto & p(Z_1 = 1, Z_2 = 1 \,|\, X_2) p(X_2) \\ \label{Eq4}
    = & p(Z_1 = 1\,|\, X_2) p(Z_2 = 1\,|\, X_2)p(X_2) \\
    \propto & p(X_2\,|\,Z_1 = 1)p(X_2\,|\,Z_2 = 1)\cdot\frac{1}{p(X_2)}.
    \label{eq:derived_X2_given_Z1_Z2}
\end{align}
The problem is that none of the three involved quantities, $p(X_2\,|\,Z_1 = 1)$, $p(X_2\,|\,Z_1 = 1)$, and $p(X_2)$ is known since we only have access to the distribution corresponding to the given data, for which $S = 1$. In fact,
\begin{equation} \label{Eq3}
p(X_2\,|\,Z_1 = 1) = p(X_2\,|\,Z_1 = 1, S=1)P(S=1\,|\,Z_1=1) + p(X_2\,|\,Z_1 = 1, S=0)P(S=0\,|\,Z_1=1).
\end{equation}
We do not have information of the data with $S=0$, which are not in the given dataset. As a consequence, neither of $p(X_2\,|\,Z_1 = 1, S=0)$ and $P(S=0\,|\,Z_1=1)$ is known.  Hence, $p(X_2\,|\,Z_1 = 1)$ is 
not identifiable from the given data, and it also happens to $p(X_2\,|\,Z_1 = 1)$ and ${p(X_2)}$. Therefore, $p(X_2\,|\,Z_1 = 1,  Z_2 = 1)$ is {\it not identifiable}, and hence according to (\ref{Eq:ind2}), $p(\mathbf{X}\,|\,Z_1 = 1, Z_2 = 1)$ is not identifiable given the distribution of the observed data.

\end{proof}

\subsection{Proof of Proposition \ref{proposition:extrapolation_sample_toy}}\label{app:proof_proposition_extrapolation_sample_toy}
\PropositionExtrapolationSampleToy*
\begin{proof}
First, note that $p(S=1 \,|\,Z_1 =1) > 0$ as for the given data, $S=1$, and bear in mind that
\[
p(X_2 = \tilde{x}_2\,|\, Z_1 =1, S=1) > 0,
\]
according to Assumption \ref{assumption:exist_sample_toy}. Then according to (\ref{Eq3}), we have
\[
p(X_2 =\tilde{x}_2 \,|\, Z_1 = 1)>0.
\]
Similarly, we have
\[
p(X_2 = \tilde{x}_2 \,|\, Z_2 = 1) > 0.
\]
Then according to (\ref{Eq4}), $p(X_2 = \tilde{x}_2 \,|\, Z_1=1, Z_2=1) > 0$. We further know that there exist $\tilde{x}_1$, as values of $X_1$, such  that
\[
p^\mathcal{D}(X_1 = \tilde{x}_1\,|\,X_2 = \tilde{x}_2, Z_1 = 1) > 0;
\]
this is because otherwise, $p^\mathcal{D}(X_1 = \tilde{x}_1\,|\,X_2 = \tilde{x}_2, Z_1=1) = 0$ implies $p^\mathcal{D}(X_1, X_2 = \tilde{x}_2 \,|\, Z_1=1) = 0$ and hence $p^\mathcal{D}(X_2 = \tilde{x}_2 \,|\, Z_1=1) = 0$, contradicting Assumption \ref{assumption:exist_sample_toy}. Similarly, there exist  values of $X_3$, denoted by $\tilde{x}_3$, such that
\[
p^\mathcal{D}(X_3 = \tilde{x}_3\,|\,X_2 = \tilde{x}_2, Z_2 = 1) > 0.
\]
Finally, according to  (\ref{Eq:ind2}), we obtain
\[
p(X_2 = \tilde{x}_2, X_1 = \tilde{x}_1, X_3 = \tilde{x}_3 \,|\,Z_1 = 1, Z_2=1 ) > 0.
\]
\end{proof}

\subsection{Proof of Proposition \ref{proposition:recover_conservative_distribution_toy}}\label{app:proof_proposition_recover_conservative_distribution_toy}
\PropositionRecoverConservativeDistributionToy*
\begin{proof}
By Assumption \ref{assumption:conservative_distribution_toy}, one can show
\begin{align*}
    p(X_2 \,|\, Z_i = 1) & \approx p^\mathcal{D}(X_2 \,|\, Z_i = 1), \quad i=1,2,\\
    p(X_2) & \approx p^\mathcal{D}(X_2).
\end{align*}
Plugging these into (\ref{Eq:ind2}) and (\ref{eq:derived_X2_given_Z1_Z2}) yields
\begin{align} \nonumber
&p(\mathbf{X}\,|\, Z_1 = 1, Z_2 = 1) \\
\approx &\frac{p^\mathcal{D}(X_2\,|\, Z_1 = 1)p^\mathcal{D}(X_2\,|\, Z_2 = 1)}{p^\mathcal{D}(X_2)}\frac{P(Z_1 = 1)P(Z_2 = 1)}{P(Z_1 = 1, Z_2 = 1)}\nonumber\\ 
  & \quad \cdot p^\mathcal{D}(Z_2=1 \,|\, X_2) p^\mathcal{D}(X_3 \,|\, X_2,Z_2 = 1)\\
  \propto & p^\mathcal{D}(X_1, X_2 \,|\, Z_1 = 1) p^\mathcal{D}(Z_2=1 \,|\, X_2) \cdot p^\mathcal{D}(X_3 \,|\, X_2,Z_2 = 1).\label{eq:sketch_proof4_c1_app}
\end{align}
All the three terms are implied by the given data.
\end{proof}

\subsection{SEDGE with Multiple Specifications} \label{Sec:multiple}
We now extend the results from the previous subsection to show how structural properties can be exploited to enable extrapolated data generation in the general setting with multiple specifications. Analogous to the two-specification setting in Assumption \ref{assumption:given_specifications_toy}, we aim to extrapolate to the novel combination of specifications $\mathbf{Z}=1$ that is unseen in the given data due to selection, i.e., $p(\mathbf{Z}=1 \,|\, S=1)= 0$. We introduce the following basic conditions.
\begin{assumption}[Basic conditions for specifications and selection]\label{assumption:basic_conditions}
We have
\begin{flalign}
P(S=1) &> 0,\label{eq:assumption_basic_conditions_1}\\
P(\mathbf{Z}=\mathbf{1})&>0,\label{eq:assumption_basic_conditions_2}\\
P(Z_i=1 \,|\, S=1)&> 0,\quad i\in [k].\label{eq:assumption_basic_conditions_3}
\end{flalign}
\end{assumption}
Specifically, (\ref{eq:assumption_basic_conditions_1}) ensures that the given dataset is nonempty. Eq. (\ref{eq:assumption_basic_conditions_2}) guarantees that the target specification combination is feasible in the underlying population, while (\ref{eq:assumption_basic_conditions_3}) ensures that each individual specification value appears in the given/training data in some context, providing the necessary marginal coverage to potentially extrapolate to the novel combinations.

\subsubsection{Identifiable Case: Specifications Do Not Share Common Features} 
Recall that Fig. \ref{fig:setting_toy_two_specifications}(a) and Proposition \ref{proposition:recover_conservative_distribution_toy} consider the setting in which specifications do not share common features, by making use of the independence between $Z_1$ and $Z_2$, i.e., $Z_1 \independent Z_2$. Building on this intuition, we now extend it to multiple specifications. We present the following result, which shows that when specifications do not share common features, the distribution corresponding to novel specifications, $p(\mathbf{X}\,|\, \mathbf{Z}=\mathbf{1})$, is identifiable from the observed data.
\begin{restatable}{theorem}{TheoremRecoverDistributionNoCommonFeatures}\label{theorem:recover_distribution_no_common_features}
Suppose that there exists a partition of the specifications $\mathbf{Z}$ into disjoint specification subsets $\mathbf{Z}_1,\dots,\mathbf{Z}_k$ such that no feature is shared between any two different subsets and that features across different subsets are not adjacent. Suppose further that Assumption \ref{assumption:basic_conditions} holds and
\[
P(\mathbf{Z}_i=1 \,|\, S=1)> 0, \quad i\in [k].
\]
Then, the novel data distribution $p(\mathbf{X}\,|\, \mathbf{Z}=\mathbf{1})$ is identifiable from the given data.
\end{restatable}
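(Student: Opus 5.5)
We generalize the proof of Proposition \ref{proposition:recover_distribution_no_common_features_toy}. For each subset $\mathbf{Z}_i$, let $\mathbf{X}_i$ denote the set of features that are ancestors of some specification in $\mathbf{Z}_i$; any remaining features we group as $\mathbf{X}_0$. We assume that the hypothesis that no feature is shared and that features across subsets are not adjacent makes $\mathbf{X}_0,\mathbf{X}_1,\dots,\mathbf{X}_k$ a partition with no edges between blocks. The first step is then to read off, by d-separation in $\mathcal{G}$, the independences
\[
(\mathbf{X}_i,\mathbf{Z}_i)\independent \{\mathbf{X}_j,\mathbf{Z}_j\}_{j\neq i}\quad\text{and}\quad \mathbf{X}_i\independent S \,|\, \mathbf{Z}_i .
\]
The first independence holds because every path between blocks must pass through $S$, which is an unconditioned collider. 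The second holds because $S$ is adjacent only to $\mathbf{Z}$, so every path from $\mathbf{X}_i$ to $S$ goes either through $\mathbf{Z}_i$ as a non-collider, or through some $\mathbf{Z}_j$ with $j\neq i$. In the latter case the path must reach $\mathbf{Z}_j$ via $S$ or via another block, and it is blocked at the collider $S$. The feature block $\mathbf{X}_0$ is independent of $\mathbf{Z}$ and of $S$ altogether.

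Next we factor the target as in (\ref{eq:proof1_c1}). By the chain rule and the block independence,
\[
p(\mathbf{X}\,|\,\mathbf{Z}=\mathbf{1})=p(\mathbf{X}_0)\prod_{i=1}^k p(\mathbf{X}_i\,|\,\mathbf{Z}_i=\mathbf{1}),
\]
since conditioning on $\mathbf{Z}_j$ with $j\neq i$ carries no information about $(\mathbf{X}_i,\mathbf{Z}_i)$. The well-definedness of each factor follows from Assumption \ref{assumption:basic_conditions}, because $P(\mathbf{Z}=\mathbf{1})>0$ implies $P(\mathbf{Z}_i=\mathbf{1})>0$.

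We then move every factor to the observed distribution. Using $\mathbf{X}_i\independent S\,|\,\mathbf{Z}_i$ together with $P(\mathbf{Z}_i=\mathbf{1},S=1)=P(\mathbf{Z}_i=\mathbf{1}\,|\,S=1)P(S=1)>0$, we obtain $p(\mathbf{X}_i\,|\,\mathbf{Z}_i=\mathbf{1})=p^{\mathcal{D}}(\mathbf{X}_i\,|\,\mathbf{Z}_i=\mathbf{1})$. Likewise $p(\mathbf{X}_0)=p^{\mathcal{D}}(\mathbf{X}_0)$, since $\mathbf{X}_0\independent S$. Each factor is therefore a function of the given-data distribution, which establishes identifiability.

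The main obstacle is the d-separation bookkeeping in the first step. One must make precise what ``no feature shared and features across subsets not adjacent'' means for features that are not direct parents of any specification, for example ancestors of parents. It must also be ensured that the absence of edges between blocks rules out all indirect connections, including paths through common feature ancestors. If needed, we will define each $\mathbf{X}_i$ as a connected component of the feature subgraph attached to $\mathbf{Z}_i$. This choice makes the block-independence claim follow directly from the fact that every cross-block path must pass through the collider $S$.
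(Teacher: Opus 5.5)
Your proposal is correct and follows essentially the same route as the paper. Both factor $p(\mathbf{X}\,|\,\mathbf{Z}=\mathbf{1})$ into per-subset terms conditioned on $\mathbf{Z}_i=\mathbf{1}$ using the graph's independences, then replace each term by $p^{\mathcal{D}}(\cdot\,|\,\mathbf{Z}_i=\mathbf{1})$ via $\mathbf{X}_i\independent S\,|\,\mathbf{Z}_i$ and $P(\mathbf{Z}_i=\mathbf{1},S=1)>0$.

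Your version is somewhat more careful than the paper's proof. The paper works with the parent sets $\mathbf{V}_{\mathbf{Z}_i}$, implicitly assumes they exhaust $\mathbf{X}$, and only asserts the needed independences. You instead derive those independences from the collider at $S$ and account for leftover features through $p^{\mathcal{D}}(\mathbf{X}_0)$.

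Your connected-component reading of the hypothesis is the right one, and it is what the paper tacitly assumes. It is also necessary rather than just convenient: a feature path linking two blocks, e.g.\ through a common non-parent ancestor, would in general bring back the non-identifiability of Proposition~\ref{proposition:non_identifiability_distribution_toy}.
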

The proof is given in \ref{app:proof_theorem_recover_distribution_no_common_features}. It is worth noting that, unlike the setting in Fig. \ref{fig:setting_toy_two_specifications}(a) that requires all specifications not to share any feature, the result above further allows the specifications to be partitioned into subset of specifications. That is, while no features are shared across different subsets, arbitrary feature sharing and dependence relations are permitted within each subset.

\subsubsection{Non-Identifiable Case: Specifications Share Common Features}
Previously, Fig. \ref{fig:setting_toy_two_specifications}(b) and Fig. \ref{proposition:extrapolation_sample_toy} study the setting in which two specifications share common features, by leveraging the conditional independence between $Z_1$ and $Z_2$ given their common child $X_2$, i.e., $Z_1 \independent Z_2 \,|\, X_2$. Analogous to this case, we now consider a more general setting with multiple specifications that share common features. As shown in Proposition \ref{proposition:non_identifiability_distribution_toy}, the distribution satisfying novel specifications is generally not identifiable. In the following, we extend Proposition \ref{proposition:extrapolation_sample_toy} from the two-specification setting to show that generating novel data samples remains possible with multiple specifications, and generalize Proposition \ref{proposition:recover_conservative_distribution_toy} to show how the non-identifiability issue with multiple specifications can be addressed by learning conservative solutions.

\paragraph{Existence of novel data sample.}
The following result shows that, under a generalization of Assumption \ref{assumption:exist_sample_toy} which requires that the common features have a positive density under each set of specifications individually in the given data, one can construct data points satisfying the novel specifications $\mathbf{Z}=\mathbf{1}$ from the given data.
\begin{restatable}{theorem}{TheoremExtrapolationSample}\label{theorem:extrapolation_sample}
Suppose that there exists a partition of the specifications $\mathbf{Z}$ into disjoint specification subsets $\mathbf{Z}_1,\dots,\mathbf{Z}_k$ such that they are conditionally independent given a subset of features $\mathbf{X}_{c}\subset \mathbf{X}$. Moreover, suppose that, given $\mathbf{X}_c$, each feature $X_j \notin \mathbf{X}_c$ is conditionally independent of all other features that are not associated with the same specification subset as $X_j$.\footnote{Here, a feature is said to be \emph{associated} with a specification set if it is a parent of at least one specification variable in that set.} Suppose further that Assumption \ref{assumption:basic_conditions} holds, 
\[
P(\mathbf{Z}_i=1 \,|\, S=1)> 0, \quad i\in [k],
\]
and there exists value $\tilde{\mathbf{x}}_c$ of $\mathbf{X}_c$ such that
\[
p^\mathcal{D}(\mathbf{X}_c = \tilde{\mathbf{x}}_c \,|\, \mathbf{Z}_i = 1) > 0, \quad i \in [k].
\]
 Then, there exists a value $\tilde{\mathbf{X} = \mathbf{x}}$ of $\mathbf{X}$ that can be constructed from the given data such that  
\[
p(\mathbf{X}=\tilde{\mathbf{x}} \,|\, \mathbf{Z}=\mathbf{1}) > 0.
\]
\end{restatable}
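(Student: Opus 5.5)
The plan is to generalize the proof of Proposition \ref{proposition:extrapolation_sample_toy}. We factor $p(\mathbf{X}\,|\,\mathbf{Z}=\mathbf{1})$ into a term for the shared features $\mathbf{X}_c$ and terms for the remaining features. Each remaining term will be a conditional implied by the given data. Let $\mathbf{X}^{(i)}$ denote the features outside $\mathbf{X}_c$ that are associated with $\mathbf{Z}_i$; features associated with no specification can be absorbed into any block. By the chain rule,
\begin{equation*}
p(\mathbf{X}\,|\,\mathbf{Z}=\mathbf{1}) = p(\mathbf{X}_c\,|\,\mathbf{Z}=\mathbf{1})\, p(\mathbf{X}^{(1)},\dots,\mathbf{X}^{(k)}\,|\,\mathbf{X}_c,\mathbf{Z}=\mathbf{1}).
\end{equation*}
Using the hypothesis that, given $\mathbf{X}_c$, each $X_j\notin\mathbf{X}_c$ is independent of features tied to other subsets, I would show that the second factor equals $\prod_{i} p(\mathbf{X}^{(i)}\,|\,\mathbf{X}_c,\mathbf{Z}_i=\mathbf{1})$. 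This also uses the Markov property for the $\mathbf{Z}$'s, which are children of features only, together with the conditional independence of the $\mathbf{Z}_i$ given $\mathbf{X}_c$. Since $S$ has only $\mathbf{Z}$ as parents, $\mathbf{X}^{(i)}\independent S\,|\,\{\mathbf{X}_c,\mathbf{Z}_i\}$. Each factor therefore equals $p^\mathcal{D}(\mathbf{X}^{(i)}\,|\,\mathbf{X}_c,\mathbf{Z}_i=\mathbf{1})$, which is well defined because $P(\mathbf{Z}_i=\mathbf{1}\,|\,S=1)>0$. This mirrors Eq.~(\ref{Eq:ind2}).

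For the shared-feature factor, Bayes' rule and the conditional independence of $\mathbf{Z}_1,\dots,\mathbf{Z}_k$ given $\mathbf{X}_c$ give
\begin{equation*}
p(\mathbf{X}_c\,|\,\mathbf{Z}=\mathbf{1}) \propto p(\mathbf{X}_c)\prod_{i=1}^k p(\mathbf{Z}_i=\mathbf{1}\,|\,\mathbf{X}_c) \propto p(\mathbf{X}_c)^{1-k}\prod_{i=1}^k p(\mathbf{X}_c\,|\,\mathbf{Z}_i=\mathbf{1}).
\end{equation*}
Here we need $P(\mathbf{Z}_i=\mathbf{1})>0$, which follows from $P(\mathbf{Z}=\mathbf{1})>0$. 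Next, decompose $p(\mathbf{X}_c\,|\,\mathbf{Z}_i=\mathbf{1})$ over $S$ as in (\ref{Eq3}). The $S=1$ branch has positive weight $P(S=1\,|\,\mathbf{Z}_i=\mathbf{1})>0$, since $P(\mathbf{Z}_i=\mathbf{1},S=1)>0$, and it has positive density $p^\mathcal{D}(\tilde{\mathbf{x}}_c\,|\,\mathbf{Z}_i=\mathbf{1})$ by hypothesis. So $p(\tilde{\mathbf{x}}_c\,|\,\mathbf{Z}_i=\mathbf{1})>0$ for all $i$. Also $p(\tilde{\mathbf{x}}_c)\ge p(\tilde{\mathbf{x}}_c\,|\,\mathbf{Z}_1=\mathbf{1})P(\mathbf{Z}_1=\mathbf{1})>0$, so it is finite and positive, which makes the power $1-k$ harmless. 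It is cleaner to argue positivity through the product form $p(\mathbf{X}_c)\prod_i p(\mathbf{Z}_i=\mathbf{1}\,|\,\mathbf{X}_c)$, each term of which is positive at $\tilde{\mathbf{x}}_c$. Hence $p(\tilde{\mathbf{x}}_c\,|\,\mathbf{Z}=\mathbf{1})>0$.

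Finally, for each $i$ I would pick $\tilde{\mathbf{x}}^{(i)}$ with $p^\mathcal{D}(\tilde{\mathbf{x}}^{(i)}\,|\,\tilde{\mathbf{x}}_c,\mathbf{Z}_i=\mathbf{1})>0$. Such a value exists by the contradiction argument of Proposition \ref{proposition:extrapolation_sample_toy}: if the conditional vanished everywhere, marginalizing would give $p^\mathcal{D}(\tilde{\mathbf{x}}_c\,|\,\mathbf{Z}_i=\mathbf{1})=0$. Concatenating $\tilde{\mathbf{x}}_c$ and the $\tilde{\mathbf{x}}^{(i)}$ gives $\tilde{\mathbf{x}}$, built only from data-implied quantities, and the factorization makes $p(\tilde{\mathbf{x}}\,|\,\mathbf{Z}=\mathbf{1})>0$.

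I expect the main obstacle to be the block factorization in the first step. Dropping $\mathbf{Z}_j$ for $j\neq i$ from the conditional of $\mathbf{X}^{(i)}$ requires $\mathbf{X}^{(i)}\independent \{\mathbf{Z}_j,\mathbf{X}^{(j)}\}_{j\ne i}\,|\,\{\mathbf{X}_c,\mathbf{Z}_i\}$. The stated hypotheses give feature-level independence and independence among the $\mathbf{Z}_i$ given $\mathbf{X}_c$ separately. I would first try a d-separation argument: every path from $\mathbf{X}^{(i)}$ to a $\mathbf{Z}_j$ must pass through $\mathbf{X}_c$ or through a feature of another block. I would also read the hypotheses as implying that the parents of $\mathbf{Z}_j$ lie in $\mathbf{X}_c\cup\mathbf{X}^{(j)}$, and state this interpretation explicitly if it is needed.
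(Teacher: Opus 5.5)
Your proposal is correct and follows essentially the same route as the paper: you factor into $p(\mathbf{X}_c\,|\,\mathbf{Z}=\mathbf{1})$ times per-block terms identified as $p^\mathcal{D}(\cdot\,|\,\mathbf{X}_c,\mathbf{Z}_i=\mathbf{1})$, use the same Bayes-rule and $S$-mixture argument for positivity at $\tilde{\mathbf{x}}_c$, and choose block values with positive conditional density in the same way. Your exponent $p(\mathbf{X}_c)^{1-k}$ corrects the paper's factor $1/p(\mathbf{X}_c)$ (which is immaterial for positivity), and the paper merely asserts the block-factorization step you flag, so your d-separation justification only tightens it; the reading you actually need there is that no feature outside $\mathbf{X}_c$ is associated with two subsets, which graphically is forced by the conditional independence of the $\mathbf{Z}_i$ given $\mathbf{X}_c$.
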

The proof is provided in \ref{app:proof_theorem_extrapolation_sample}. The theorem is rather general and accommodates complex structures, allowing one to exploit conditional independencies implied by the structural properties to generate data points satisfying novel specifications in a broad range of settings.

\paragraph{Conservative solutions for non-identifiable distributions.}
Similar to Proposition \ref{proposition:recover_conservative_distribution_toy} with two specifications, the non-identifiability issue of $p(\mathbf{X}\,|\, \mathbf{Z}=\mathbf{1})$ with multiple specifications can be resolved by assuming that the chance of having novel scenarios in the data generating process, which are not contained in the given data, is low. This is formally stated in the following result, with a proof given in \ref{app:proof_theorem_recover_conservative_distribution}.
\begin{restatable}{theorem}{TheoremRecoverConservativeDistribution}\label{theorem:recover_conservative_distribution}
Suppose that there exists a partition of the specifications $\mathbf{Z}$ into disjoint specification subsets $\mathbf{Z}_1,\dots,\mathbf{Z}_k$ such that they are conditionally independent given a subset of features $\mathbf{X}_{c}\subset \mathbf{X}$. Moreover, suppose that, given $\mathbf{X}_c$, each feature $X_j \notin \mathbf{X}_c$ is conditionally independent of all other features that are not associated with the same specification subset as $X_j$. Suppose further that Assumption \ref{assumption:basic_conditions} holds and
\begin{flalign*}
P(\mathbf{Z}_i=1 \,|\, S=1)&> 0, \quad i\in [k], \\
P(S=0 \,|\, \mathbf{Z}_i=1)&\approx 0, \quad i\in[k], \\
P(S=0)&\approx 0.
\end{flalign*}
Then, the novel data distribution $p(\mathbf{X} \,|\, \mathbf{Z}=\mathbf{1})$ is approximately identifiable from the given data.
\end{restatable}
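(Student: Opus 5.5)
The plan is to mirror the two-specification argument of Proposition \ref{proposition:recover_conservative_distribution_toy}, generalizing the factorization (\ref{Eq:ind2}) and (\ref{eq:derived_X2_given_Z1_Z2}) to $k$ specification subsets. For $i\in[k]$, let $\mathbf{X}^{(i)}$ denote the features outside $\mathbf{X}_c$ that are associated with $\mathbf{Z}_i$; features associated with no subset are grouped into a remainder block $\mathbf{X}^{(0)}$. The proof has three steps.

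\textbf{Step 1: exact factorization.} First I would write
\[
p(\mathbf{X}\,|\,\mathbf{Z}=\mathbf{1}) = p(\mathbf{X}_c\,|\,\mathbf{Z}=\mathbf{1})\, p(\mathbf{X}\setminus\mathbf{X}_c\,|\,\mathbf{X}_c,\mathbf{Z}=\mathbf{1}).
\]
Using the assumed conditional independence of features across subsets given $\mathbf{X}_c$, together with the Markov property (each $\mathbf{Z}_j$ depends only on its parents), I would factor the second term as $\prod_{i} p(\mathbf{X}^{(i)}\,|\,\mathbf{X}_c,\mathbf{Z}_i=\mathbf{1})$, times $p(\mathbf{X}^{(0)}\,|\,\mathbf{X}_c)$.

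The key d-separation facts to check are two:
\begin{itemize}
\item $\mathbf{X}^{(i)} \independent \mathbf{Z}_{-i} \,|\, \{\mathbf{X}_c,\mathbf{Z}_i\}$;
\item $\mathbf{X}^{(i)} \independent S \,|\, \{\mathbf{X}_c,\mathbf{Z}_i\}$.
\end{itemize}
The second holds because $S$ is a child only of $\mathbf{Z}$. Conditioning on $S=1$ and using $P(\mathbf{Z}_i=\mathbf{1}\,|\,S=1)>0$, each factor equals $p^\mathcal{D}(\mathbf{X}^{(i)}\,|\,\mathbf{X}_c,\mathbf{Z}_i=\mathbf{1})$ and is therefore identifiable exactly. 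I expect this to be the main obstacle. The conditional-independence hypotheses in the statement are phrased loosely, so I would state precisely the d-separation consequences being used and verify that conditioning on $\mathbf{Z}_{-i}$ (colliders with parents in $\mathbf{X}_c\cup\mathbf{X}^{(j)}$) does not reopen paths into $\mathbf{X}^{(i)}$. Should it do so, I would strengthen the interpretation of the hypothesis to exactly this needed independence.

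\textbf{Step 2: the shared block.} By Bayes' rule and the conditional independence of the $\mathbf{Z}_i$ given $\mathbf{X}_c$ (once $\mathbf{X}^{(i)}$ is marginalized out, $\mathbf{Z}_i$ depends only on $\mathbf{X}_c$), I would obtain
\[
p(\mathbf{X}_c\,|\,\mathbf{Z}=\mathbf{1}) \propto p(\mathbf{X}_c)\prod_{i=1}^k p(\mathbf{Z}_i=\mathbf{1}\,|\,\mathbf{X}_c) \propto p(\mathbf{X}_c)^{1-k}\prod_{i=1}^k p(\mathbf{X}_c\,|\,\mathbf{Z}_i=\mathbf{1}).
\]
This is the multi-subset analogue of (\ref{eq:derived_X2_given_Z1_Z2}).

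\textbf{Step 3: conservative approximation.} As in (\ref{Eq3}), I would decompose
\[
p(\mathbf{X}_c\,|\,\mathbf{Z}_i=\mathbf{1}) = p^\mathcal{D}(\mathbf{X}_c\,|\,\mathbf{Z}_i=\mathbf{1})\,P(S=1\,|\,\mathbf{Z}_i=\mathbf{1}) + p(\mathbf{X}_c\,|\,\mathbf{Z}_i=\mathbf{1},S=0)\,P(S=0\,|\,\mathbf{Z}_i=\mathbf{1}).
\]
Under $P(S=0\,|\,\mathbf{Z}_i=\mathbf{1})\approx 0$ this gives $p(\mathbf{X}_c\,|\,\mathbf{Z}_i=\mathbf{1})\approx p^\mathcal{D}(\mathbf{X}_c\,|\,\mathbf{Z}_i=\mathbf{1})$. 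Similarly $P(S=0)\approx0$ gives $p(\mathbf{X}_c)\approx p^\mathcal{D}(\mathbf{X}_c)$ and $p(\mathbf{X}^{(0)}\,|\,\mathbf{X}_c)\approx p^\mathcal{D}(\mathbf{X}^{(0)}\,|\,\mathbf{X}_c)$.

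Substituting these into Steps 1 and 2 expresses $p(\mathbf{X}\,|\,\mathbf{Z}=\mathbf{1})$, up to normalization, approximately as
\[
p^\mathcal{D}(\mathbf{X}_c)^{1-k}\prod_i p^\mathcal{D}(\mathbf{X}_c\,|\,\mathbf{Z}_i=\mathbf{1})\, p^\mathcal{D}(\mathbf{X}^{(i)}\,|\,\mathbf{X}_c,\mathbf{Z}_i=\mathbf{1}) \cdot p^\mathcal{D}(\mathbf{X}^{(0)}\,|\,\mathbf{X}_c).
\]
Every term is implied by the given data, and the normalizing constant is then determined by integration. Optionally, this can be rewritten as in (\ref{eq:sketch_proof4_c1_app}) using $p^\mathcal{D}(\mathbf{Z}_i=\mathbf{1}\,|\,\mathbf{X}_c)$.
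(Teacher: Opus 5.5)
Your proposal is correct and follows the paper's route: the factorization (\ref{eq:proof_temp_2}) of the non-shared features, the Bayes-rule reduction (\ref{eq:proof_general_density}) for $p(\mathbf{X}_c\,|\,\mathbf{Z}=\mathbf{1})$, and then substitution of $p(\mathbf{X}_c\,|\,\mathbf{Z}_i=\mathbf{1})\approx p^\mathcal{D}(\mathbf{X}_c\,|\,\mathbf{Z}_i=\mathbf{1})$ and $p(\mathbf{X}_c)\approx p^\mathcal{D}(\mathbf{X}_c)$. It is slightly more careful than the paper in two places: your factor $p(\mathbf{X}_c)^{1-k}$ is the correct one (the paper's $1/p(\mathbf{X}_c)$ holds only for $k=2$), and your remainder block $\mathbf{X}^{(0)}$ covers features associated with no specification subset, which the paper's product over $\mathbf{V}_{\mathbf{Z}_i}\setminus\mathbf{X}_c$ silently drops.
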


\subsection{Proof of Theorem \ref{theorem:recover_distribution_no_common_features}}\label{app:proof_theorem_recover_distribution_no_common_features}
\TheoremRecoverDistributionNoCommonFeatures*
\begin{proof}
With a slight abuse of notation, let $\mathbf{V}_{\mathbf{Z}_i} \subseteq \mathbf{X}$ be the parent set of specification subset $\mathbf{Z}_i$. By assumption, no feature is shared between the specification subsets $\mathbf{Z}_1,\dots,\mathbf{Z}_k$, and features across different subsets are not
adjacent. This implies that (1) $\mathbf{V}_{\mathbf{Z}_1}, \dots, \mathbf{V}_{\mathbf{Z}_k}$ are conditionally independent of each other given $\mathbf{Z}_1,\dots,\mathbf{Z}_k$ and (2) $\mathbf{V}_{\mathbf{Z}_i}\independent \mathbf{Z}_j \,|\, \mathbf{Z}_i$ for $i\neq j$.

By Bayes Rule, we have
\begin{flalign*}
    &p(\mathbf{X}\,|\, \mathbf{Z}=\mathbf{1}) \nonumber \\
    &= \prod_{i=1}^k p(\mathbf{V}_{\mathbf{Z}_i}\,|\, \mathbf{Z}=\mathbf{1}) \nonumber \\ 
    &= \prod_{i=1}^k p(\mathbf{V}_{\mathbf{Z}_i}\,|\, \mathbf{Z}_i=\mathbf{1}) \nonumber \\ 
    &= \prod_{i=1}^k p(\mathbf{V}_{\mathbf{Z}_i}\,|\, \mathbf{Z}_i=\mathbf{1},S=1) \nonumber \\ 
    &= \prod_{i=1}^k p^\mathcal{D}(\mathbf{V}_{\mathbf{Z}_i}\,|\, \mathbf{Z}_i=\mathbf{1}). \nonumber 
\end{flalign*}
Here, the second line follows from the fact that $\mathbf{V}_{\mathbf{Z}_1}, \dots, \mathbf{V}_{\mathbf{Z}_k}$ are conditionally independent of each other given $\mathbf{Z}_1,\dots,\mathbf{Z}_k$. The third line follows from  $\mathbf{V}_{\mathbf{Z}_i}\independent \mathbf{Z}_j \,|\, \mathbf{Z}_i$ for $i\neq j$ and the assumption of $P(\mathbf{Z}_i=1 \,|\, S=1)> 0$. The fourth line follows from $\mathbf{V}_{\mathbf{Z}_i}\independent S \,|\, \mathbf{Z}_i$.
\end{proof}

\subsection{Proof of Theorem \ref{theorem:extrapolation_sample}}\label{app:proof_theorem_extrapolation_sample}
\TheoremExtrapolationSample*
\begin{proof}
With a slight abuse of notation, let $\mathbf{V}_{\mathbf{Z}_i} \subseteq \mathbf{X}$ be the parent set of specification subset $\mathbf{Z}_i$. By assumption, the disjoint specification subsets $\mathbf{Z}_1,\dots,\mathbf{Z}_k$ are conditionally independent given $\mathbf{X}_c$. Also, each feature $X_j \notin \mathbf{X}_c$ is conditionally independent of all other features that are not associated with the same specification subset as $X_j$.

By Bayes rule, We have
\begin{align}
    & p(\mathbf{X}\,|\,\mathbf{Z}=\mathbf{1}) \nonumber\\ 
    =& p(\mathbf{X}_c\,|\,\mathbf{Z}=\mathbf{1})p(\mathbf{X}\setminus\mathbf{X}_c\,|\,\mathbf{X}_c,\mathbf{Z}=\mathbf{1}) \nonumber\\ 
    =& p(\mathbf{X}_c\,|\,\mathbf{Z}=\mathbf{1})\prod_{i=1}^k p(\mathbf{V}_{\mathbf{Z}_i}\setminus\mathbf{X}_c\,|\,\mathbf{X}_c,\mathbf{Z}=\mathbf{1}) \nonumber \\ 
    =& p(\mathbf{X}_c\,|\,\mathbf{Z}=\mathbf{1})\prod_{i=1}^k p(\mathbf{V}_{\mathbf{Z}_i}\setminus\mathbf{X}_c\,|\,\mathbf{X}_c,\mathbf{Z}_i=\mathbf{1}) \nonumber\\ 
    =& p(\mathbf{X}_c\,|\,\mathbf{Z}=\mathbf{1})\prod_{i=1}^k p(\mathbf{V}_{\mathbf{Z}_i}\setminus\mathbf{X}_c\,|\,\mathbf{X}_c,\mathbf{Z}_i=\mathbf{1},S=1) \nonumber\\ 
    =& p(\mathbf{X}_c\,|\,\mathbf{Z}=\mathbf{1})\prod_{i=1}^k p^\mathcal{D}(\mathbf{V}_{\mathbf{Z}_i}\setminus\mathbf{X}_c\,|\,\mathbf{X}_c,\mathbf{Z}_i=\mathbf{1})\label{eq:proof_temp_2}
\end{align}
Here, the third line follows from the fact that, given $\mathbf{X}_c$, each feature $X_j \notin \mathbf{X}_c$ is conditionally independent of all other features that are not associated with the same specification subset as $X_j$. The fourth line follows from the fact that each $X_j\in\mathbf{V}_{\mathbf{Z}_i}\setminus\mathbf{X}_c$ is conditionally independent with $\mathbf{Z}_l,l\neq i$ given $\mathbf{X}_c$. The fifth line follows from the assumption of $P(\mathbf{Z}_i=1 \,|\, S=1)> 0$ and the fact that each $X_j\in\mathbf{V}_{\mathbf{Z}_i}\setminus\mathbf{X}_c$ is conditionally independent with $S$ given $\mathbf{X}_c$.


Deriving $p(\mathbf{X} \,|\, \mathbf{Z}=\mathbf{1})$ then reduces to the estimation of $p(\mathbf{X}_c \,|\, \mathbf{Z}=\mathbf{1})$. We have
\begin{align}
    & p(\mathbf{X}_c \,|\, \mathbf{Z}=\mathbf{1})\nonumber \\ 
    = & p(\mathbf{Z}=\mathbf{1} \,|\, \mathbf{X}_c) \cdot \frac{p(\mathbf{X}_c)}{P(\mathbf{Z}=\mathbf{1})} \nonumber\\ 
    \propto & p(\mathbf{Z}=\mathbf{1} \,|\, \mathbf{X}_c) p(\mathbf{X}_c)\nonumber \\
    = & \prod_{i=1}^k p(\mathbf{Z}_i = \mathbf{1}\,|\, \mathbf{X}_c) p(\mathbf{X}_c) \\
    \propto & \prod_{i=1}^k p(\mathbf{X}_c\,|\,\mathbf{Z}_i = \mathbf{1})\cdot\frac{1}{p(\mathbf{X}_c)}.\label{eq:proof_general_density}
\end{align}
By assumption, there exists value $\tilde{\mathbf{x}}_c$ of $\mathbf{X}_c$ such that
\[
p^\mathcal{D}(\mathbf{X}_c = \tilde{\mathbf{x}}_c \,|\, \mathbf{Z}_i = 1) > 0, \quad i \in [k],
\]
This implies
\[
p(\mathbf{X}_c = \tilde{\mathbf{x}}_c \,|\, \mathbf{Z}_i = 1)>0.
\]
Then according to (\ref{eq:proof_general_density}), $p(\mathbf{X}_c = \tilde{\mathbf{x}}_c \,|\, \mathbf{Z}=\mathbf{1}) > 0$. We further know that there exist $\tilde{\mathbf{v}}_{\mathbf{Z}_i}\setminus\tilde{\mathbf{x}}_c$, as values of $\mathbf{V}_{\mathbf{Z}_i}\setminus\mathbf{X}_c$, such  that
\[
p^\mathcal{D}(\mathbf{V}_{\mathbf{Z}_i}\setminus\mathbf{X}_c=\tilde{\mathbf{v}}_{\mathbf{Z}_i}\setminus\tilde{\mathbf{x}}_c\,|\,\mathbf{X}_c=\mathbf{x}_c,\mathbf{Z}_i=\mathbf{1}) > 0;
\]
Finally, according to  (\ref{eq:proof_temp_2}), we can construct the point such that
\[
p(\mathbf{X} = \tilde{x} \,|\,\mathbf{Z}=\mathbf{1}) > 0.
\]
\end{proof}

\subsection{Proof of Theorem \ref{theorem:recover_conservative_distribution}}\label{app:proof_theorem_recover_conservative_distribution}
\TheoremRecoverConservativeDistribution*
\begin{proof}
By assumption, one can show
\begin{align*}
    p(\mathbf{X}_c \,|\, \mathbf{Z}_i = \mathbf{1}) & \approx p^\mathcal{D}(\mathbf{X}_c \,|\, \mathbf{Z}_i = \mathbf{1}), \quad i\in[k],\\
    p(\mathbf{X}_c) & \approx p^\mathcal{D}(\mathbf{X}_c).
\end{align*}
Plugging these into (\ref{eq:proof_general_density}) and (\ref{eq:proof_temp_2}) yields
\[
p(\mathbf{X}\,|\, \mathbf{Z}=\mathbf{1}) \propto \frac{\prod_{i=1}^k p^\mathcal{D}(\mathbf{X}_c\,|\, \mathbf{Z}_i = 1)}{p^\mathcal{D}(\mathbf{X}_c)}\prod_{i=1}^k p^\mathcal{D}(\mathbf{V}_{\mathbf{Z}_i}\setminus\mathbf{X}_c\,|\,\mathbf{X}_c,\mathbf{Z}_i=\mathbf{1}).
\]
\end{proof}




\section{Discussion of Conditions and Assumptions}
\label{app:discussion}

We clarify the role and intention of the structural assumptions underlying \textsc{SEDGE}.
A point that applies uniformly to all assumptions below is that \textbf{none of them is
strictly required for the validity of our theory and method}.
They are introduced to keep the theoretical analysis clean and tractable, and to clarify
the boundaries of what can be reliably inferred from available data.
Concretely, the assumptions serve two purposes: (1)~they enable clean identifiability
guarantees, and (2)~they act as inductive biases that improve the ease of reliable
extrapolation.
Neither purpose implies that the method fails when an assumption is violated; see
Appendix~\ref{app:robustness} for empirical confirmation that performance degrades
gracefully rather than catastrophically under moderate violations.

\paragraph{No edges from $Z$ to $X$ (direction of causality).}
The direction $X \to Z$ (features generate specifications) is not a restriction of the
framework, but rather the meaningful setting for nontrivial extrapolation.
If $Z \to X$ holds instead, reliable extrapolation for novel specifications is generally
not possible without additional assumptions when specifications share features; if no
shared features exist, $p(X \mid Z)$ factorises and generation is straightforward in any
case.
The $X \to Z$ direction is also more plausible for the data-generating process in most
practical settings: features such as visual attributes are primitive properties of the
world that give rise to observable concept combinations, rather than the reverse.
Empirically, reversing the direction to $Z \to X$ with shared features leads to noticeable
but not catastrophic degradation, while $Z \to X$ without shared features performs
comparably to the original setting (see Appendix~\ref{app:robust_direction}).

\paragraph{No edges among specification variables $Z$ (conditional independence).}
The assumption that $Z_i$ are mutually conditionally independent is best understood as a
\emph{canonical form} rather than a claim about the true data-generating process.
When $Z$ variables are not conditionally independent --- for example, when $Z_1 \to Z_2$
holds --- the model can always be reduced to this form: we rewrite
$Z_2 = f(Z_1, \varepsilon_2)$ for independent noise $\varepsilon_2$, yielding equivalent
independent representations $Z_1$ and $\varepsilon_2$ to which the same results apply.
Regarding latent confounding: if a hidden variable is specified by observable features $X$,
it is not truly hidden in our setting; if it is not, then the relevant specifications
cannot be fully determined from the image, which is outside our controllable generation
framework.
Introducing direct $Z_1 \to Z_2$ dependencies does degrade performance, and conditional
independence is the more critical of our two inductive biases, but neither its removal nor
that of sparsity causes failure (see Appendix~\ref{app:robust_condindep}).

\paragraph{Sparsity of feature--specification links ($X$--$Z$ links).}
Sparsity of $X$--$Z$ links is an inductive bias that makes reliable extrapolation easier,
not a prerequisite for the theory's correctness.
If $Z_i$ share no parent features, reliable extrapolation is straightforward regardless of
sparsity.
If they do share parent features, extrapolation requires positive density of the shared
parent given each $Z_i$ (Assumption~2); even with fully connected $X$--$Z$ links such
samples exist, but sparsity makes them easier to locate.
Ablating the sparsity constraint causes only minor degradation in practice
(see Appendix~\ref{app:robust_sparsity}).

\paragraph{Selection variable $S$ depends only on $Z$.}
This assumption is natural since selection defines which specification combinations are
absent from training.
If $S$ depends only on $X$, then $Z \perp S \mid X$ and $p(Z \mid X)$ can be modeled
directly, reducing to an even simpler case.
If $S$ depends on both $X$ and $Z$, the key assumptions (Assumptions~2 and~3) remain
unchanged and the main results still hold.
This assumption can therefore be relaxed without undermining the framework.

\paragraph{The conservative approximation (Assumption~3).}
Assumption~3 requires that $P(S=0)$ and $P(S=0 \mid Z_i=1)$ are both small.
This is an \emph{identifiability condition}, not an operational requirement: it
characterises the regime in which the target joint distribution under novel specification
combinations can be recovered from observed data, not the regime in which the method
produces useful samples.
When the assumption is violated, the formal identifiability guarantee lapses, but the
method continues to generate admissible samples satisfying target specifications, with
gradual rather than abrupt degradation as the degree of violation increases
(see Appendix~\ref{app:robust_conservative}).
This assumption should therefore be read as delineating the boundary of what can be
\emph{guaranteed} from data, not the boundary of \emph{applicability} of the method.

\section{Identifiability of Features and Specifications} \label{Sec:identi_X_Z}
We focus on the case where the features and the specifications are neither observed nor given. Recall that in Section~\ref{sec:formulation} we introduced the structural assumptions, including the conditional independence of the specifications given the features. Here, we want to clarify that the violation of this assumption does not forbid the application of our proposal, while it potentially hurts how precise the results would be, as the identifiability of the latent variables would be weaker.  

\label{app: identify feat and specs}

\subsection{Canonical Representation of the Data Generating Process}
\label{app: canonical representation}

\paragraph{Canonical representation of the data generation process.}
We first introduce a canonical representation of the data generation process, and then demonstrate its generality. 
\begin{definition}(Canonical Representation of the Data Generation Process)
    The features $\mathbf{X} = \{X_1, \cdots, X_n\}$ are unconditionally independent, and $\mathbf{X}\sim p_{\mathbf{X}}$.
    Each specification $Z_i \in \mathbf{Z}$, for $i=1,\cdots,d$, follows the structural equation $Z_i = h_i(\mathbf{V}_i, e_i)$, where $e_i$ is the noise term, $\mathbf{V}i := \mathrm{PA}_{Z_i}$ denotes the parent set of $Z_i$, and $\mathbf{V}_i \subseteq \mathbf{X}$.
    The observed modality data corresponding to $\mathbf{X}$ and $\mathbf{Z}$ are denoted by $\mathbf{Y_X}$ and $\mathbf{Y_Z}$, respectively. They are generated by two mixing procedures: $\mathbf{Y_X} = g_X(\mathbf{X})$, $\mathbf{Y_Z} = g_Z(\mathbf{Z})$, where $g_X : \mathcal{X}\mapsto \mathcal{Y_X}$ and $g_Z : \mathcal{Z}\mapsto \mathcal{Y_Z}$ are both invertible. The overall observed data domain is defined as $\mathcal{D} := {\mathcal{Y_X}, \mathcal{Y_Z}}$.
    \label{segde_app_def_data_generation}
\end{definition}

This canonical representation is general, in the sense that the following cases can be reduced to it. First, consider cases where there are dependencies among $\mathbf{X}$. These are not excluded by the structural assumptions in Sec. 3.1, as they do not affect the identifiability of the posterior distribution (since component-wise identifiability of $\mathbf{X}$ is not required). Suppose there is an edge $X_1\rightarrow X_3$ in the structure of Fig. 2(b). In this case, $X_3$ is a function of $X_1$ and $\epsilon_3$, assuming that $\epsilon_3$ is the noise term for $X_3$. Then, we can replace $X_3$ with $\epsilon_3$ to render the features unconditionally independent, while introducing an additional edge $X_1 \rightarrow Z_2$ as a consequence of the original edge $X_1 \rightarrow X_3$. More generally, if there are additional edges within $\mathbf{X}$, the same transformation can be applied, converting them into (possibly additional) edges from $\mathbf{X}$ to $\mathbf{Z}$, thereby reducing the system to the canonical representation. Second, consider cases where there are edges among $\mathbf{Z}$, which are explicitly excluded in Sec. 3.1. This is because dependencies among $\mathbf{Z}$ would make the identifiability of the posterior distribution of features under new specification combinations significantly more difficult to analyze. Nevertheless, we illustrate how such cases can also be reduced to the canonical representation. Suppose that there is an edge $Z_1 \rightarrow Z_2$, then $Z_2$ can be seen as a function of $Z_1$ and $e_2$, where $e_2$ is the noise term for $Z_2$. If we substitute $Z_2$ with $e_2$ to satisfy the conditional independence constraint, then there will be an additional link $X_3\rightarrow Z_1$. Similarly, in more general settings, the same transformation can be applied and transform the edges between $\mathbf{Z}$ to possibly extra edges from $\mathbf{X}$ to $\mathbf{Z}$, to reduce to the canonical representation as defined in Definition~\ref{segde_app_def_data_generation}. The canonical representation enables a clearer analysis of identifiability for both the latent variables and the posterior distributions.

\paragraph{Existing related identifiability results.}\cite{conceptaligner} gives conditions for identifiability for both the features $\mathbf{X}$ and the specifications $\mathbf{Z}$. In our work, we share the part where the specifications are required to be identifiable in order to decompose the constraint of a novel combination of specifications into individual ones, and the individual constraints are already defined as seen in the data (recall that there is no novel value for any specification $Z_i$). The component-wise identifiability of $\mathbf{Z}$ is mostly built on the conditional independence of each $Z_i$ given $\mathbf{X}$ (see Condition 4.2 and Theorem 4.4 in their work). However, Condition 4.3 for deriving component-wise identifiability of $\mathbf{X}$ is rather strong in our setting, and our extrapolated data generation does not require each $X_i$ to be identifiable. Instead, to guarantee a sufficient and clean estimate of the conditional distribution $p(Z_i \mid \mathbf{V}_i)$, we need to locate the subspace of $\mathbf{V}_i$. Such a property should be facilitated by the sparsity constraint of the functions from $\mathbf{X}$ to $Z_i$. At the same time, features that connect to multiple specifications should be easier to identify.

Formally, we derive the identifiability of both specifications $\mathbf{Z}$ and subspace identifiability of the features $\mathbf{X}$.

\paragraph{Logic of proving identifiability in this paper.}
The high-level idea for establishing identifiability of the latent variables is as follows. Assume the observed joint distribution $p(\mathbf{Y_X}, \mathbf{Y_Z})$ is generated following the canonical representation of the data generation process in Definition~\ref{segde_app_def_data_generation}, and we learn a model $(\hat{g}_X, \hat{g}_Z, p_{\hat{\mathbf{X}}}, p_{\hat{\mathbf{Z}}\mid\hat{\mathbf{X}}})$ that assumes the same process as in Definition~\ref{segde_app_def_data_generation} and matches the true joint distribution, i.e.,
\begin{align*}
    p_{\mathbf{Y_X,Y_Z}}(\mathbf{y_x},\mathbf{y_z}) = p_{\hat{\mathbf{Y}}_X,\hat{\mathbf{Y}}_Z}(\mathbf{y_x},\mathbf{y_z}), \forall y_x, y_z\in \mathcal{D},
\end{align*}
where the random variables $\mathbf{Y_X,Y_Z}$ are generated from the true process $(g_X, g_Z, p_{{\mathbf{X}}}, p_{{\mathbf{Z}}\mid{\mathbf{X}}})$, and $\hat{\mathbf{Y}}_X,\hat{\mathbf{Y}}_Z$ are generated from the estimated process $(\hat{g}_X, \hat{g}_Z, p_{\hat{\mathbf{X}}}, p_{\hat{\mathbf{Z}}\mid\hat{\mathbf{X}}})$. In the following, leveraging this distributional equivalence along with additional assumptions, we establish relationships between $\hat{\mathbf{Z}}$ and $\mathbf{Z}$, and between $\hat{\mathbf{X}}$ and $\mathbf{X}$. 

\subsection{Component-wise Identifiability of Specifications $\mathbf{Z}$ via Conditional Independence}
The following theorem is adapted from ConceptAligner~\cite{conceptaligner} (Step 1 of Theorem 4.4), where the key idea is to exploit conditional independence to identify the latent variables.
\begin{theorem}(Component-wise Identifiability of Specifications)
If the following conditions C1-3 hold for the canonical data generation process, then the specification variables $\mathbf{Z}$ are component-wise identifiable.
\begin{itemize}
    \item C1 (Third-order differentiable and positive). For every $\mathbf{x} \in \mathbb{R}^n$, the conditional density $p_{\mathbf{Z} \mid \mathbf{X}}(\mathbf{z} \mid \mathbf{x})$ is strictly positive and third-order continuously differentiable with respect to $\mathbf{z} \in \mathbb{R}^d$ and $\mathbf{x} \in \mathbb{R}^n$.
    \item C2 (Conditional independence). The specification variables $\mathbf{Z}={Z_1,\cdots,Z_d}$ are conditionally independent given the features $\mathbf{X}$.   
    \item C3 (Sufficient variability on $\mathbf{Z}$). For each fixed value of $\mathbf{Z}$, there exist $2d+1$ distinct values of $\mathbf{X}$, denoted by $\mathbf{x}^{(0)}, \mathbf{x}^{(1)}, \cdots, \mathbf{x}^{(2d)}$, such that the $2d$ vectors $l(\mathbf{z},\mathbf{x}^{(m)}) = w(\mathbf{z},\mathbf{x}^{(m)}) - w(\mathbf{z},\mathbf{x}^{(0)}) \in \mathbb{R}^{2d}, m=1, \cdots, 2d$, where 
    \begin{align*}
        w(\mathbf{z}, \mathbf{x}^{(m)}) = (\frac{\partial  \log p(\mathbf{Z}_i\mid\mathbf{X})}{\partial Z_i}, \frac{\partial^2 \log p(\mathbf{Z}_i\mid\mathbf{X})}{\partial Z_i^2})_{i\in[d]} \mid_{\mathbf{X}=\mathbf{x}^{(m)}}.
    \end{align*}
\end{itemize}
\label{thm: Z_component_wise_identifiability}
\end{theorem}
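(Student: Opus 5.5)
The plan follows the standard conditional-independence identifiability argument, as in Step 1 of Theorem 4.4 of ConceptAligner. The first step is to use distributional equivalence together with invertibility of $g_Z$, $\hat g_Z$ and of $g_X$, $\hat g_X$. This gives a smooth invertible map $h:=\hat g_Z^{-1}\circ g_Z$ with $\hat{\mathbf{Z}}=h(\mathbf{Z})$, and similarly $\hat{\mathbf{X}}=\phi(\mathbf{X})$ for an invertible $\phi$. Since $\mathbf{Y_X}$ determines $\mathbf{X}$ and $\hat{\mathbf{X}}$ alike, conditioning on $\mathbf{Y_X}$ is the same as conditioning on either. We then get, for every $\mathbf{x}$,
\[
p_{\mathbf{Z}\mid\mathbf{X}}(\mathbf{z}\mid\mathbf{x})=p_{\hat{\mathbf{Z}}\mid\hat{\mathbf{X}}}(h(\mathbf{z})\mid\phi(\mathbf{x}))\,|\det J_h(\mathbf{z})|.
\]

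Next, take logs and apply C2 on both sides, i.e.\ the true and estimated models each factorize over components given the features. This yields
\[
\sum_{i=1}^d \log p(z_i\mid\mathbf{x})=\sum_{j=1}^d \log \hat p(h_j(\mathbf{z})\mid\phi(\mathbf{x}))+\log|\det J_h(\mathbf{z})|.
\]
Fix a pair $k\neq l$ and differentiate with respect to $z_k$ and then $z_l$; C1 guarantees the needed smoothness. The left side vanishes because each summand depends on a single $z_i$. The right side becomes
\[
\sum_j \Big[\hat\eta_j''\,\partial_k h_j\,\partial_l h_j+\hat\eta_j'\,\partial_{kl}h_j\Big]+\partial_{kl}\log|\det J_h|,
\]
where $\hat\eta_j'$ and $\hat\eta_j''$ denote the first and second derivatives of $\log\hat p(\cdot\mid\phi(\mathbf{x}))$ in its $j$-th argument.

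Now use the variability in $\mathbf{x}$. Evaluate this identity at the $2d+1$ points $\mathbf{x}^{(0)},\dots,\mathbf{x}^{(2d)}$ from C3 and subtract the $\mathbf{x}^{(0)}$ equation from the others. The Jacobian term depends only on $\mathbf{z}$, so it cancels. What remains is a homogeneous linear system in the $2d$ unknowns $(\partial_k h_j\partial_l h_j,\ \partial_{kl}h_j)_{j}$. Its coefficient vectors are the differences $l(\mathbf{z},\mathbf{x}^{(m)})$, which must be taken for the estimated model's score vector $w$. C3, which I read as requiring these $2d$ vectors to be linearly independent, forces the unknowns to be zero, so $\partial_k h_j\,\partial_l h_j=0$ for all $j$ and all $k\neq l$. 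Each row of $J_h$ therefore has at most one nonzero entry. Since $J_h$ is invertible, $J_h$ is a generalized permutation matrix at every $\mathbf{z}$. By continuity and connectedness of the domain the permutation is constant, so each $\hat Z_j$ is an invertible function of a single $Z_{\pi(j)}$. That is component-wise identifiability.

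The main obstacle is the change-of-variables step. Transferring C3 from the true model to the estimated one needs care: C3 is stated for the true $p(Z_i\mid\mathbf{X})$, but the linear system involves the derivatives of the estimated conditionals. I would resolve this as in ConceptAligner, by either imposing the condition on the learned model class or swapping the roles of the two models through $h^{-1}$. A second point needs justifying too: the reparametrization $\phi$ of $\mathbf{X}$ does not interfere, and this holds because conditioning is on the same sigma-algebra. The final step, from "generalized permutation pointwise" to "a single global permutation", is routine.
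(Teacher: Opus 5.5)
Your argument is essentially the paper's proof. The paper takes your ``swap via $h^{-1}$'' route from the outset: it differentiates the factorized $\log p(\hat{\mathbf{Z}}\mid\hat{\mathbf{X}})$ in $\hat Z_k,\hat Z_l$ and expands through the true conditional, so the coefficient vectors are exactly the true-model $l(\mathbf{z},\mathbf{x}^{(m)})$ of C3, giving $\partial^2 Z_i/\partial\hat Z_k\partial\hat Z_l=0$ and $(\partial Z_i/\partial\hat Z_k)(\partial Z_i/\partial\hat Z_l)=0$ directly. That swap, rather than imposing C3 on the learned class (which would change the hypothesis), is the resolution to adopt, and your continuity/connectedness step for a single global permutation is a detail the paper only asserts.
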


\begin{proof}

To establish identifiability, we consider an alternative parameterization $(\hat{g}_X, \hat{g}_Z, p_{\hat{\mathbf{X}}}, p_{\hat{\mathbf{Z}} \mid \hat{\mathbf{X}}})$ that induces the same observed distribution as the true model $(g_X, g_Z, p_{\mathbf{X}}, p_{\mathbf{Z} \mid \mathbf{X}})$, i.e.,
\begin{align*}
p_{\mathbf{Y_X},\mathbf{Y_Z}}(\mathbf{y_x},\mathbf{y_z}) = p_{\hat{\mathbf{Y}}_X,\hat{\mathbf{Y}}Z}(\mathbf{y_x},\mathbf{y_z}), \quad \forall \mathbf{y_x}, \mathbf{y_z} \in \mathcal{D}.
\end{align*}
Unless otherwise stated, we write $p_{\mathbf{A}}(\mathbf{a})$ as $p(\mathbf{A})$, and similarly for joint and conditional densities.
We will show that this implies that $\hat{\mathbf{Z}}$ and $\mathbf{Z}$ coincide up to a component-wise transformation and permutation.

Since $\mathbf{Y_X} = g_X(\mathbf{X}), \mathbf{Y_Z} = g_Z(\mathbf{Z})$ and since $g_X$ and $g_Z$ are invertible, we have the following equality via the change-of-variables formula,
\begin{align*}
    p(\mathbf{Y_X}) = p(\mathbf{X}) |\mathrm{det} J_{{g_X}^{-1}}\mid, p(\mathbf{Y_Z}) = p(\mathbf{Z}) |\mathrm{det} J_{{g_Z}^{-1}}\mid \Rightarrow p(\mathbf{Y_X},\mathbf{Y_Z}) = p(\mathbf{X}, \mathbf{Z}) |\mathrm{det} J_{{g_X}^{-1}}| | \mathrm{det} J_{{g_Z}^{-1}}\mid,
\end{align*}
where $J_{{g_X}^{-1}}$ and $J_{{g_Z}^{-1}}$ are the Jacobian matrices of the inverse of $g_X$ and $g_Z$. Similarly, 
\begin{align*}
    p(\mathbf{\hat{Y}_X}) = p(\mathbf{\hat{X}}) |\mathrm{det} J_{{\hat{g}_X}^{-1}}\mid, p(\mathbf{\hat{Y}_Z}) = p(\mathbf{\hat{Z}}) |\mathrm{det} J_{{\hat{g}_Z}^{-1}}\mid \Rightarrow p(\mathbf{\hat{Y}_X},\mathbf{\hat{Y}_Z}) = p(\mathbf{\hat{X}}, \mathbf{\hat{Z}}) |\mathrm{det} J_{{\hat{g}_X}^{-1}}| | \mathrm{det} J_{{\hat{g}_Z}^{-1}}\mid.
\end{align*}
Due to the observational equivalence of the joint distributions $p(\mathbf{\hat{Y}_X},\mathbf{\hat{Y}_Z}) = p(\mathbf{Y_X},\mathbf{Y_Z})$ which implies observational equivalence of the marginal distributions:
\begin{align*}
    p(\mathbf{\hat{X}}) = p(\mathbf{X}) |\mathrm{det} J_{{\hat{g}_X}^{-1}\circ{g}_X}\mid ,
    p(\mathbf{\hat{Z}}) = p(\mathbf{Z}) |\mathrm{det} J_{{\hat{g}_Z}^{-1}\circ{g}_Z}\mid ,
    p(\mathbf{\hat{X}},\mathbf{\hat{Z}}) = p(\mathbf{X}, \mathbf{Z})|\mathrm{det} J_{{\hat{g}_X}^{-1}\circ{g}_X}| |\mathrm{det} J_{{\hat{g}_Z}^{-1}\circ{g}_Z}\mid. 
\end{align*}
Therefore, 
\begin{align}
    p(\mathbf{\hat{Z}}\mid\mathbf{\hat{X}}) = \frac{p(\mathbf{\hat{X}},\mathbf{\hat{Z}})}{p(\mathbf{\hat{X}})} = \frac{p(\mathbf{X}, \mathbf{Z})|\mathrm{det} J_{{\hat{g}_X}^{-1}\circ{g}_X}| |\mathrm{det} J_{{\hat{g}_Z}^{-1}\circ{g}_Z}|}{p(\mathbf{X}) |\mathrm{det} J_{{\hat{g}_X}^{-1}\circ{g}_X}|} = p(\mathbf{Z}\mid\mathbf{X})|\mathrm{det} J_{{\hat{g}_Z}^{-1}\circ{g}_Z}|.
    \label{app_eq:sedge_conditional_density_equivalence}
\end{align}

Since, in the data-generating process, $\mathbf{Z}={Z_1,\cdots, Z_d}$ are assumed to be conditionally independent given $\mathbf{X}$, and the estimated model is assumed the same way, we have, $p(\mathbf{\hat{Z}}\mid\mathbf{\hat{X}}) = \prod_{i\in[d]} p(\hat{Z}_i\mid\hat{\mathbf{X}})$, and the second cross derivative of the log conditional density is zero, i.e, 
\begin{align}
   \frac{\partial^2 \log p(\mathbf{\hat{Z}}\mid\mathbf{\hat{X}})}{\partial \hat{Z}_k \partial \hat{Z}_l} = \frac{\partial^2 \sum_{i\in[d]} \log p(\mathbf{\hat{Z}}_i\mid\mathbf{\hat{X}})}{\partial \hat{Z}_k \partial \hat{Z}_l} \equiv 0, \forall k\neq l, k,l\in[d].
\end{align}
Substituting (\ref{app_eq:sedge_conditional_density_equivalence}) into the second-order cross derivative of any pair $k\neq l, k,l\in[d]$, we have,
\begin{align}
   0 &= \frac{\partial^2 \log p(\mathbf{\hat{Z}}\mid\mathbf{\hat{X}})}{\partial \hat{Z}_k \partial \hat{Z}_l} \\
   &= \sum_{i\in[d]} \frac{\partial  \log p(Z_i\mid\mathbf{X})}{\partial Z_i} \frac{\partial^2 Z_i}{\partial \hat{Z}_k \partial \hat{Z}_l} + \sum_{i\in[d]} \frac{\partial^2  \log p(Z_i\mid\mathbf{X})}{\partial Z_i^2} \frac{\partial Z_i}{\partial \hat{Z}_l} \frac{\partial Z_i}{\partial \hat{Z}_k} + \\
   & \sum_{i,j\in[d], i\neq j} \frac{\partial^2  \log p(Z_i\mid\mathbf{X})}{\partial Z_i Z_j} (\frac{\partial Z_i}{\partial \hat{Z}_l} \frac{\partial Z_i}{\partial \hat{Z}_k} + \frac{\partial Z_j}{\partial \hat{Z}_l} \frac{\partial Z_i}{\partial \hat{Z}_k}) + \frac{\partial^2 |\mathrm{det} J_{{\hat{g}_Z}^{-1}\circ{g}_Z}\mid}{\partial \hat{Z}_k \partial \hat{Z}_l},
   \label{app_eq: sedge_app_linear_system_thm1}
\end{align}
where $\frac{\partial^2  \log p(\mathbf{Z}_i\mid\mathbf{X})}{\partial Z_i Z_j} = 0$ due to conditional independence.  
Now, for any fixed values of $Z_i, i\in[d]$, suppose we have $2d+1$ different values of $\mathbf{X}$, i.e., $\mathbf{x}^{(0)}, \mathbf{x}^{(1)}, \cdots, \mathbf{x}^{(m)}, \cdots, \mathbf{x}^{(2d)}$, so that we can construct $2d$ linearly independent vectors:
\begin{align*}
   l(\mathbf{z},\mathbf{x}^{(m)}) = w(\mathbf{z},\mathbf{x}^{(m)}) - w(\mathbf{z},\mathbf{x}^{(0)}) \in \mathbb{R}^{2d}, 
\end{align*}
where 
\begin{align*}
    w(\mathbf{z}, \mathbf{x}^{(m)}) = (\frac{\partial  \log p(\mathbf{Z}_i\mid\mathbf{X})}{\partial Z_i}, \frac{\partial^2 \log p(\mathbf{Z}_i\mid\mathbf{X})}{\partial Z_i^2})_{i\in[d]} \mid_{\mathbf{X}=\mathbf{x}^{(m)}}. 
\end{align*}

By condition C3, we obtain $2d$ linearly independent vectors $l(\mathbf{z},\mathbf{x}^{(m)})$. This allows us to solve the linear system in (\ref{app_eq: sedge_app_linear_system_thm1}), by subtracting $w(\mathbf{z}, \mathbf{x}^{(0)})$ from each of the $w(\mathbf{z}, \mathbf{x}^{(m)}); m=1,\cdots,2d$ rows. We conclude that, for all $Z_i$, $\frac{\partial^2 Z_i}{\partial \hat{Z}_k \partial \hat{Z}_l} = 0$ and $\frac{\partial Z_i}{\partial \hat{Z}_k} \frac{\partial Z_i}{\partial \hat{Z}_l} = 0$ for any pair $k \neq l$. Therefore, each $Z_i$ can depend on at most one component of $\hat{\mathbf{Z}}$. Therefore, there exists a permutation of $[d]$, i.e., $\pi$, so that for each $Z_i, i=1, \cdots, d$, it is a function solely of $\hat{Z}_{\pi(i)}$. This establishes that $\mathbf{Z}$ is component-wise identifiable.
\end{proof}
 
\subsection{Subspace Identifiability of Features $\mathbf{X}$ via Sparse Structure}

\begin{theorem}(Subspace Identifiability of Features)
If conditions C1–C2 in Theorem~\ref{thm: Z_component_wise_identifiability} hold, together with the following conditions C4–C5, then the feature variables $\mathbf{X}$ are identifiable in the following sense:
\begin{itemize}
    \item C4 (Sufficient variability). For each $Z_k \in \mathbf{Z}$ and each $X_j \in \mathbf{V}_{\pi(k)}$, and for every fixed value of the parent variables $\{X_j : X_j \in \mathbf{V}_{\pi(k)}\}$, there exist $|\mathbf{V}_{\pi(k)}|$ distinct values of $Z_{\pi(k)}$, denoted by $z^{(1)}, \dots, z^{(|\mathbf{V}_{\pi(k)}|)}$, such that the vectors
    \begin{align*}
        u(\mathbf{x}, z_{\pi(k)}) 
        = \left( \frac{\partial \eta'_{\pi(k)}}{\partial X_j} \right)_{X_j \in \mathbf{V}_{\pi(k)}} 
        \Big|_{Z_{\pi(k)} = z^{(m)}}
    \end{align*}
    are linearly independent.
    
    \item C5 (Sparsity). The model $p(\hat{\mathbf{Z}} \mid \hat{\mathbf{X}})$ is estimated under a sparsity constraint on the dependency structure from $\hat{\mathbf{X}}$ to each component of $\hat{\mathbf{Z}}$.
\end{itemize}

Under these conditions, the feature variables $\mathbf{X}$ satisfy the following identifiability properties:
\begin{itemize}
    \item Case 1. If $X_j$ is not a parent of any specification (i.e., $X_j$ is an isolated feature variable), then $X_j$ can be an arbitrary function of $\hat{\mathbf{X}}$, and is therefore not identifiable.
    
    \item Case 2. If $X_j$ is a parent of exactly one specification $Z_{\pi(k)}$, then $X_j$ can depend only on the estimated parent set $\hat{\mathbf{V}}_k$ of $\hat{Z}_k$. Under C5, the size of this admissible subspace matches that of the true parent set, i.e., $|\hat{\mathbf{V}}_k| = |\mathbf{V}_{\pi(k)}|$.
    
    \item Case 3. If $X_j$ is a parent of multiple specifications, then $X_j$ can depend only on the intersection of the estimated parent sets corresponding to those specifications. This intersection is nonempty, since the estimated counterpart of $X_j$ must belong to each such parent set. In particular, if the intersection contains only one element, then $X_j$ is component-wise identifiable.
\end{itemize}
\label{thm: X_identifiability}
\end{theorem}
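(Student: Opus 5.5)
The plan mirrors the proof of Theorem~\ref{thm: Z_component_wise_identifiability}. By that theorem, under C1--C3 we may assume $\hat{\mathbf{Z}}$ and $\mathbf{Z}$ agree up to a permutation $\pi$ and component-wise invertible maps; reindexing, we take $\hat{Z}_k$ to be an invertible function of $Z_{\pi(k)}$ alone. The change-of-variables relation (\ref{app_eq:sedge_conditional_density_equivalence}) then factorizes as
\[
\log p(\hat{Z}_k \mid \hat{\mathbf{X}}) = \log p(Z_{\pi(k)} \mid \mathbf{V}_{\pi(k)}) + \log\Bigl|\frac{\partial Z_{\pi(k)}}{\partial \hat{Z}_k}\Bigr| ,
\]
where $\mathbf{X}$ is viewed as the function $\mathbf{X} = g_X^{-1}\circ\hat{g}_X(\hat{\mathbf{X}})$ of $\hat{\mathbf{X}}$. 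The model imposes that the left-hand side depends on $\hat{\mathbf{X}}$ only through the estimated parent set $\hat{\mathbf{V}}_k$. Denote by $\eta'_{\pi(k)}$ the log conditional density of $Z_{\pi(k)}$ given its parents; this is the object whose derivatives appear in C4.

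First, differentiate both sides of the display with respect to any $\hat{X}_l \notin \hat{\mathbf{V}}_k$. The left side vanishes, and the Jacobian term does not depend on $\hat{\mathbf{X}}$. By the chain rule this gives $\sum_{X_j\in\mathbf{V}_{\pi(k)}} \frac{\partial \eta'_{\pi(k)}}{\partial X_j}\,\frac{\partial X_j}{\partial \hat{X}_l} = 0$ for every value $z$ of $Z_{\pi(k)}$. Holding $\mathbf{x}$ fixed and evaluating at the $|\mathbf{V}_{\pi(k)}|$ values $z^{(m)}$ supplied by C4 yields a square homogeneous linear system whose coefficient vectors $u(\mathbf{x},z^{(m)})$ are linearly independent. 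Hence $\partial X_j/\partial \hat{X}_l = 0$ for all $X_j\in\mathbf{V}_{\pi(k)}$ and all $\hat X_l\notin\hat{\mathbf{V}}_k$. This is the key step: every true parent of $Z_{\pi(k)}$ is a function of $\hat{\mathbf{V}}_k$ only.

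The three cases then follow directly.
\begin{itemize}
\item \emph{Case 2.} Apply the key step to the unique specification containing $X_j$. For the size claim, since $\mathbf{V}_{\pi(k)}$ is a function of $\hat{\mathbf{V}}_k$ and the map $\hat{\mathbf{X}}\mapsto\mathbf{X}$ is a diffeomorphism, the rows of its Jacobian corresponding to $\mathbf{V}_{\pi(k)}$ are linearly independent and supported on the $\hat{\mathbf{V}}_k$ columns, so $|\hat{\mathbf{V}}_k|\ge|\mathbf{V}_{\pi(k)}|$. The true model itself attains equality, so the sparsity constraint C5 forces $|\hat{\mathbf{V}}_k|=|\mathbf{V}_{\pi(k)}|$.
\item \emph{Case 3.} Apply the key step to each specification containing $X_j$; then $X_j$ depends only on the intersection of the corresponding $\hat{\mathbf{V}}_k$. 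This intersection must be nonempty, since otherwise $X_j$ would be constant, contradicting invertibility. If it is a singleton, $X_j$ is a function of one $\hat X$ component, i.e.\ component-wise identified.
\item \emph{Case 1.} No constraint of the above form involves $X_j$, and the independent marginal prior can absorb any reparametrization, so $X_j$ is not identifiable.
\end{itemize}

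The main obstacle is the size argument in Case 2, specifically justifying that C5 yields exact equality rather than a mere inequality. The sparsity constraint minimizes the total size $\sum_k|\hat{\mathbf{V}}_k|$, so it does not by itself pin down each $|\hat{\mathbf{V}}_k|$ individually. The argument has to combine the per-specification lower bounds with the fact that the true structure achieves the minimum total. I would also need to check carefully that the Jacobian term really does not depend on $\hat{\mathbf{X}}$.
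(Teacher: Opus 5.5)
Your proposal is correct and follows the paper's route: component-wise identification of $\mathbf{Z}$, factorized conditionals, differentiation with respect to $\hat{X}_l \notin \hat{\mathbf{V}}_k$, inversion via C4, then the three cases. One caveat is that the paper's $\eta'_{\pi(k)}$ is the score $\partial \log p(Z_{\pi(k)} \mid \mathbf{V}_{\pi(k)})/\partial Z_{\pi(k)}$ rather than the log-density, so to use C4 as the paper means it you should differentiate your identity once more in $z$ (legitimate since $\partial X_j/\partial \hat{X}_l$ does not depend on $z$), which gives exactly the paper's linear system. Your treatment of $Z_{\pi(k)}$ as fixed when varying $\hat{X}_l$ is cleaner than the paper's expansion of $\partial Z_{\pi(k)}/\partial \hat{X}_l$ through the structural equation.

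Your Case 2 argument goes beyond the paper, which only asserts that C5 ``enforces the reverse inequality.'' The per-$k$ Jacobian-rank bounds $|\hat{\mathbf{V}}_k| \ge |\mathbf{V}_{\pi(k)}|$, combined with $\sum_k |\hat{\mathbf{V}}_k| \le \sum_i |\mathbf{V}_i|$ (the true model is an admissible candidate), force termwise equality, so the obstacle you flag is resolved exactly as you suggest. The Jacobian term is indeed free of $\hat{\mathbf{X}}$, because $g_Z$ and $\hat{g}_Z$ act on $\mathbf{Z}$ and $\hat{\mathbf{Z}}$ alone.
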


\begin{proof}
We aim to characterize, for each true feature $X_j$, which estimated features in $\hat{\mathbf{X}}$ it may depend on. Using the component-wise identifiability of $\mathbf{Z}$, we first relate each estimated specification $\hat{Z}_k$ to its true counterpart $Z_{\pi(k)}$, and then use the local parent structure to restrict the possible dependence of $X_j$ on $\hat{\mathbf{X}}$.

Due to the structural assumptions on both the true and estimated models, we have
\[
p(\mathbf{Z}\mid \mathbf{X}) = \prod_{i\in[d]}p(Z_i\mid\mathbf{V}_i), \quad
p(\hat{\mathbf{Z}}\mid \hat{\mathbf{X}}) = \prod_{i\in[d]}p(\hat{Z}_i\mid\hat{\mathbf{V}}_i).
\]
The derivative of $\log p(\hat{\mathbf{Z}}\mid \hat{\mathbf{X}})$ with respect to any $\hat{Z}_k \in \hat{\mathbf{Z}}$ is therefore
\begin{align*}
    \frac{\partial \log p(\hat{\mathbf{Z}}\mid \hat{\mathbf{X}})}{\partial \hat{Z}_k} 
    = \frac{\partial \log p(\hat{Z}_k\mid \hat{\mathbf{V}}_k)}{\partial \hat{Z}_k}.
\end{align*}

Define
\begin{align*}
    \hat{\eta}^{\prime}_k 
    := \frac{\partial \log p(\hat{\mathbf{Z}}\mid \hat{\mathbf{X}})}{\partial \hat{Z}_k} 
    = \frac{\partial \log p(\hat{Z}_k\mid \hat{\mathbf{V}}_k)}{\partial \hat{Z}_k}.
\end{align*}
By making use of the component-wise identifiability of $\mathbf{Z}$, we have
\begin{align*}
    \hat{\eta}^{\prime}_k 
    &= \frac{\partial \log p(\mathbf{Z}\mid\mathbf{X})}{\partial \hat{Z}_k} 
    + \frac{\partial \log |\mathrm{det} J_{{\hat{g}_Z}^{-1}\circ g_Z}|}{\partial \hat{Z}_k} \\
    &= \sum_{i\in[d]} \eta_{i}^{\prime} \frac{\partial Z_i}{\partial \hat{Z}_k} 
    + \frac{\partial \log |\mathrm{det} J_{{\hat{g}_Z}^{-1}\circ g_Z}|}{\partial \hat{Z}_k}.
\end{align*}
Because $\mathbf{Z}$ is component-wise identifiable, only when $i=\pi(k)$ do we have $\frac{\partial Z_i}{\partial \hat{Z}_k} \neq 0$. Therefore,
\begin{align*}
    \hat{\eta}^{\prime}_k 
    = \eta_{\pi(k)}^{\prime} \frac{\partial Z_{\pi(k)}}{\partial \hat{Z}_k} 
    + \frac{\partial \log |\mathrm{det} J_{{\hat{g}_Z}^{-1}\circ g_Z}|}{\partial \hat{Z}_k}.
\end{align*}

For any $\hat{X}_l \notin \hat{\mathbf{V}}_k$, the derivative of $\hat{\eta}^{\prime}_k$ with respect to $\hat{X}_l$ is zero, that is,
\begin{align*}
    0 
    &\equiv \frac{\partial \hat{\eta}^{\prime}_k}{\partial \hat{X}_l} \\
    &= \eta_{\pi(k)}^{\prime} \frac{\partial^2 Z_{\pi(k)}}{\partial \hat{Z}_k \partial \hat{X}_l} +
    \frac{\partial \eta^{\prime}_{\pi(k)}}{\partial Z_{\pi(k)}} \frac{{\partial Z_{\pi(k)}}}{\partial \hat{X}_l}\frac{\partial Z_{\pi(k)}}{\partial \hat{Z}_k}
    + \sum_{j: X_j \in \mathbf{V}_{\pi(k)}} 
    \frac{\partial \eta_{\pi(k)}^{\prime}}{\partial X_j} 
    \frac{\partial X_j}{\partial \hat{X}_l} 
    \frac{\partial Z_{\pi(k)}}{\partial \hat{Z}_k} + \frac{\partial^2 \log |\mathrm{det} J_{{\hat{g}_Z}^{-1}\circ g_Z}|}{\partial \hat{Z}_k \partial \hat{X}_l},
\end{align*}
where $\frac{\partial^2 Z_{\pi(k)}}{\partial \hat{Z}_k \partial \hat{X}_l} = 0$, $\quad 
\frac{\partial^2 \log |\mathrm{det} J_{{\hat{g}_Z}^{-1}\circ g_Z}|}{\partial \hat{Z}_k \partial \hat{X}_l} = 0$, $\frac{\partial \eta_{\pi(k)}^{\prime}}{\partial X_j} \neq 0$, and $\frac{\partial Z_{\pi(k)}}{\partial \hat{Z}_k} \neq 0$. Besides, we derive:
\begin{align*}
    \frac{\partial Z_{\pi(k)}}{\partial \hat{X}_l} = \frac{\partial Z_{\pi(k)}}{\partial \hat{Z}_k}\frac{\partial \hat{Z}_k}{\partial \hat{X}_l} + \sum_{j:X_j\in \mathbf{V}_{\pi(k)}} \frac{\partial Z_{\pi(k)}}{\partial X_j}\frac{\partial X_j}{\partial \hat{X}_l} = \sum_{j:X_j\in \mathbf{V}_{\pi(k)}} \frac{\partial Z_{\pi(k)}}{\partial X_j}\frac{\partial X_j}{\partial \hat{X}_l}.
\end{align*}

Therefore, the above equation simplifies to
\begin{align*}
    0 
    &= \sum_{j: X_j \in \mathbf{V}_{\pi(k)}} 
    \frac{\partial \eta_{\pi(k)}^{\prime}}{\partial X_j} 
    \frac{\partial X_j}{\partial \hat{X}_l} 
    \frac{\partial Z_{\pi(k)}}{\partial \hat{Z}_k} + \sum_{j:X_j\in \mathbf{V}_{\pi(k)}} \frac{\partial\eta^{\prime}_{\pi(k)}}{\partial Z_{\pi(k)}}\frac{\partial Z_{\pi(k)}}{\partial X_j}\frac{\partial X_j}{\partial \hat{X}_l}\frac{\partial Z_{\pi(k)}}{\partial \hat{Z}_k}\\
    &= 2 \sum_{j: X_j \in \mathbf{V}_{\pi(k)}} 
    \frac{\partial \eta_{\pi(k)}^{\prime}}{\partial X_j} 
    \frac{\partial X_j}{\partial \hat{X}_l} 
    \frac{\partial Z_{\pi(k)}}{\partial \hat{Z}_k}.
\end{align*}

To solve this linear system and recover the relationship between $\mathbf{X}$ and $\hat{\mathbf{X}}$, we require sufficient variability in $Z_{\pi(k)}$. This is formalized in Condition C4. That is, for every $Z_k \in \mathbf{Z}$ and for every fixed value of the parent variables $\{X_j : X_j \in \mathbf{V}_{\pi(k)}\}$, there exist $|\mathbf{V}_{\pi(k)}|$ distinct values of $Z_{\pi(k)}$, i.e., $z^{(1)}, \cdots, z^{(|\mathbf{V}_{\pi(k)}|)}$, such that we obtain $|\mathbf{V}_{\pi(k)}|$ linearly independent vectors $u(\mathbf{x}, z_{\pi(k)})$, where
\begin{align*}
    u(\mathbf{x}, z_{\pi(k)}) 
    = \left( \frac{\partial \eta_{\pi(k)}^{\prime}}{\partial X_j} \right)_{X_j \in \mathbf{V}_{\pi(k)}} 
    \Big|_{Z_{\pi(k)} = z^{(m)}}.
\end{align*}

Under Condition C4, the linear system admits a unique solution, implying that
\[
\frac{\partial X_j}{\partial \hat{X}_l} = 0, \quad \forall \hat{X}_l \notin \hat{\mathbf{V}}_k.
\]
This means that for any $X_j \in \mathbf{V}_{\pi(k)}$, it cannot depend on any $\hat{X}_l \notin \hat{\mathbf{V}}_k$. Equivalently, $X_j$ can only be a function of the intersection of the parent sets of $\hat{Z}_k$ corresponding to its children.

Without the sparsity constraint, we may have $|\hat{\mathbf{V}}_k| \geq |\mathbf{V}_{\pi(k)}|$. Condition C5 enforces the reverse inequality, and therefore
\[
|\hat{\mathbf{V}}_k| = |\mathbf{V}_{\pi(k)}|.
\]

Together, these conclusions imply the three cases stated in the theorem:
\begin{itemize}
    \item Case 1. When $X_j$ is not a parent of any specification (i.e., $X_j$ is an isolated feature variable), $X_j$ can be an arbitrary function of $\hat{\mathbf{X}}$, and is therefore not identifiable. 
    \item Case 2. When $X_j$ is the parent of only one specification, i.e., $Z_{\pi(k)}$, it is a function of the parent set $\hat{\mathbf{V}}_k$ of $\hat{Z}_k$. Condition C5 further restricts the possible mixing by enforcing $|\hat{\mathbf{V}}_k| = |\mathbf{V}_{\pi(k)}|$.
    \item Case 3. When $X_j$ is the parent of multiple specifications, it is a function of the intersection of the parent sets of the corresponding estimated specifications. This intersection is nonempty, since the corresponding representation of $X_j$ must belong to each such parent set. In particular, when the intersection contains only one element, $X_j$ is component-wise identifiable.
\end{itemize}

Finally, we illustrate the result using the example in Fig. 1(b). The theorem can be interpreted as imposing structural constraints on how each true feature $X_j$ depends on the estimated features $\hat{\mathbf{X}}$. In particular, it restricts which components of $\hat{\mathbf{X}}$ can represent each $X_j$, as determined by the parent structure of the corresponding specifications. Since $\mathbf{X}$ and $\hat{\mathbf{X}}$ are related by an invertible transformation, these restrictions on $\hat{\mathbf{X}}$ equivalently constrain how each $X_j$ can be represented.

In this example, $X_1$ and $X_3$ each have exactly one child specification and therefore fall into Case 2. The theorem implies that each of them can depend only on the estimated features in the corresponding parent set $\hat{\mathbf{V}}_k$, meaning that their dependence on $\hat{\mathbf{X}}$ is restricted to the corresponding subsets of estimated features. Consequently, their representations are identifiable only up to the subspaces spanned by those features. Concretely, $X_1$ is identifiable up to the subspace of $\hat{\mathbf{X}}$ spanned by the components corresponding (under the learned mapping) to $\{X_1, X_2\}$, and similarly $X_3$ is identifiable up to the subspace corresponding to $\{X_2, X_3\}$.

In contrast, $X_2$ is a parent of multiple specifications and falls into Case 3. The theorem implies that $X_2$ can depend only on the intersection of the estimated parent sets corresponding to its children, meaning that its dependence on $\hat{\mathbf{X}}$ is restricted to this intersection. In this example, the intersection contains a single element, and therefore the representation of $X_2$ depends on only one component of $\hat{\mathbf{X}}$. Consequently, $X_2$ is component-wise identifiable.
\end{proof}

\section{Appendix of Structured Extrapolated Data Generation Algorithms}
\label{app: sedge algorithm}

\begin{algorithm}[H]
\caption{Structure-Informed Extrapolated Sampling (Optimization-Based)}
\label{alg:sedge-opt}
\textbf{Input:}
target specifications $\mathbf{z} \in \mathbb{R}^{n}$;
forward models $\{h_i: \mathbf{V}_i \to Z_i\}_{i=1}^n$ where $\mathbf{V}_i \subseteq \mathbf{X}$ and $h_i$ consist $h_i^{\mu}$ and $h_i^{\sigma}$;
prior diffusion model $p_{\mathbf{X}}$ trained on $\mathbf{X}$;
normalization statistics $\mu_{\mathbf{X}}, \sigma_{\mathbf{X}}, \mu_{\mathbf{Z}}, \sigma_{\mathbf{Z}}$;
learning rate $\eta > 0$ and optimization step $T_{\mathrm{opt}}$.

\textbf{Output:}
extrapolated sample $\mathbf{x} \in \mathbb{R}^{d}$.

\begin{enumerate}
    \item \textbf{Preprocessing:}
     Normalize target: $\hat{\mathbf{z}} \leftarrow (\mathbf{z} - \mu_{\mathbf{Z}}) / \sigma_{\mathbf{Z}}$.

    \item \textbf{Method 1: Optimization-based posterior sampling}
    \begin{enumerate}
        \item Initialize from training data: $\mathbf{x}^{(0)} \gets \mathbf{X}_{\text{train}}[\text{random\_idx}]$ (normalized).
        \item For $t = 1, \ldots, T_{\text{opt}}$ (e.g., $T_{\text{opt}} = 1000$):
        \begin{enumerate}
            \item Compute forward predictions: each dimension of $\hat{\mathbf{z}}$ is computed by $\hat{\mathbf{z}}_i^{(t)} \leftarrow f_i(\mathbf{V}_i^{(t)})$ with gradients enabled.
                \item Compute likelihood loss normalized space: $\mathcal{L}_{g}^{(t)} = \left\| \hat{\mathbf{z}}^{(t)}  - \mathbf{z}
                \right\|_2^2$ (batch iteration is ignored).
            \item Compute loss in the normalized space: $\mathcal{L}_{g}^{(t)} = \frac{1}{n} \sum_{i=1}^{n} \left\|\hat{\mathbf{z}}_i^{(t)} - \mathbf{z}\right\|_2^2$.
            \item Compute gradient: $\mathbf{g}^{(t)} \leftarrow \nabla_{\mathbf{x}} -\mathcal{L}_{g}^{(t)}$.
            \item Update with Adam optimizer: $\mathbf{x}^{(t)} \leftarrow \text{Adam}(\mathbf{x}^{(t-1)}, \mathbf{g}^{(t)}, \eta)$.
        \end{enumerate}
        \item Denormalize: $\mathbf{x} \leftarrow \sigma_{\mathbf{X}} \cdot \mathbf{x}^{(T_{\text{opt}})} + \mu_{\mathbf{X}}$.
    \end{enumerate}

\end{enumerate}

\end{algorithm}

\begin{algorithm}[H]
\caption{Structure-Informed Extrapolated Sampling (Diffusion Posterior Sampling)~\cite{chung2022dps}}
\label{alg:sedge-dps}
\textbf{Input:}
target specifications $\mathbf{z} \in \mathbb{R}^{n}$;
forward models $\{h_i: \mathbf{V}_i \to Z_i\}_{i=1}^n$ where $\mathbf{V}_i \subseteq \mathbf{X}$;
diffusion model $p_{\mathbf{X}}$ trained on $\mathbf{X}$;
normalization statistics $\mu_{\mathbf{X}}, \sigma_{\mathbf{X}}, \mu_{\mathbf{Z}}, \sigma_{\mathbf{Z}}$;
guidance scale $\lambda > 0$ and  diffusion steps $T$ (DPS).

\textbf{Output:}
extrapolated sample $\mathbf{x} \in \mathbb{R}^{d}$.

\begin{enumerate}
    \item \textbf{Preprocessing:}
    Normalize target: $\hat{\mathbf{z}} \leftarrow (\mathbf{z} - \mu_{\mathbf{Z}}) / \sigma_{\mathbf{Z}}$.
    
    \item \textbf{Method 2: Diffusion posterior sampling (DPS)}
    \begin{enumerate}
        \item Initialize from noise: $\mathbf{x}^{(T)} \sim \mathcal{N}(\mathbf{0}, \mathbf{I})$.
        \item For $t = T, T-1, \ldots, 1$:
        \begin{enumerate}
            \item \textbf{Prior denoising step:}
            \begin{enumerate}
                \item Predict noise: $\hat{\boldsymbol{\epsilon}}^{(t)} \leftarrow \epsilon_\theta(\mathbf{x}^{(t)}, t)$ using prior model.
                \item Compute predicted mean:
               $ \boldsymbol{\mu}^{(t)} \leftarrow \frac{1}{\sqrt{\alpha_t}} \left(
                \mathbf{x}^{(t)} - \frac{\beta_t}{\sqrt{1 - \bar{\alpha}_t}} \hat{\boldsymbol{\epsilon}}^{(t)}
                \right)$
            \end{enumerate}
            \item \textbf{Likelihood guidance:}
            \begin{enumerate}
                \item Compute forward predictions: each dimension of $\hat{\mathbf{z}}$ is computed by $\hat{\mathbf{z}}_i^{(t)} \leftarrow h_i(\mathbf{V}_i^{(t)})$ with gradients enabled.
                \item Compute likelihood loss normalized space: $\mathcal{L}_{g}^{(t)} = \left\| \hat{\mathbf{z}}^{(t)}  - \mathbf{z}
                \right\|_2^2$ (batch iteration is ignored).
                \item Compute guidance gradient: $\mathbf{g}_{\text{like}}^{(t)} \leftarrow \nabla_{\mathbf{x}} \mathcal{L}_{\text{like}}^{(t)}$.
                \item Apply guidance: $\boldsymbol{\mu}^{(t)} \leftarrow \boldsymbol{\mu}^{(t)} - \lambda \cdot \mathbf{g}_{\text{like}}^{(t)}$.
            \end{enumerate}
            \item \textbf{Add noise and update:}
            \[
            \mathbf{x}^{(t-1)} \leftarrow 
            \begin{cases}
            \boldsymbol{\mu}^{(t)} + \sqrt{\beta_t} \cdot \boldsymbol{\epsilon}, & \text{if } t > 1, \quad \boldsymbol{\epsilon} \sim \mathcal{N}(\mathbf{0}, \mathbf{I}) \\
            \boldsymbol{\mu}^{(t)}, & \text{if } t = 1
            \end{cases}
            \]
        \end{enumerate}
        \item Denormalize: $\mathbf{x} \leftarrow \sigma_{\mathbf{X}} \cdot \mathbf{x}^{(0)} + \mu_{\mathbf{X}}$.
    \end{enumerate}
\end{enumerate}
\end{algorithm}

\section{Appendix on Experiment Details and Additional Results}
\subsection{Synthetic Experiments}
\label{Sec:F1}
The split of the data set given the distribution of $\mathbf{Z}$ is given in Fig.~\ref{fig:app data split}, where the marginal distribution of each $Z_i$ is complete, but only some parts of the joint distribution are unseen in the given dataset. And out data generation setting ensures that to satisfy such novel combinations of specifications, the ability to do extrapolated generation is necessary, as shown in Fig~\ref{fig:synthetic} (a-b). 

\begin{figure}
    \centering
    \includegraphics[width=0.8\linewidth]{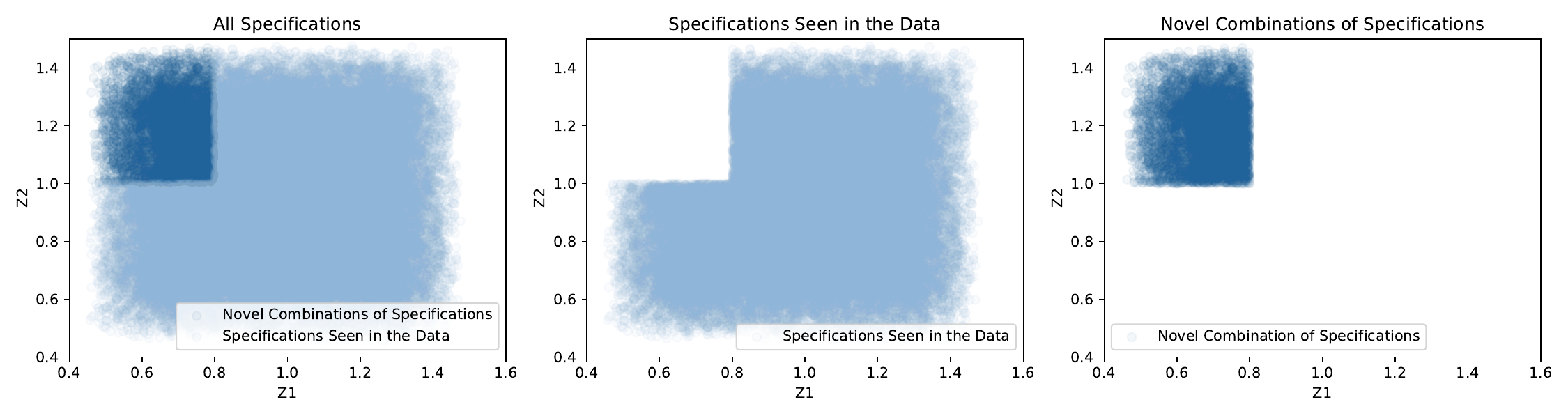}
    \caption{Data Split Based on Specifications}
    \label{fig:app data split}
\end{figure}

\subsubsection{More Results on Given Features and Specifications}
Fig.~\ref{fig: app complete given 2d optimization} and Fig.~\ref{fig: app complete given 2d dps} show more 2-d views of the data points generated by our method (model A) and other baselines. From both diagrams, it is clear to see that the samples generated by our method are the most close to the oracle $\mathbf{X}$ and achieves extrapolation on novel combination of specifications.   
\label{app: synthetic given 2d}
\begin{figure}
    \centering
    \includegraphics[width=0.7\linewidth]{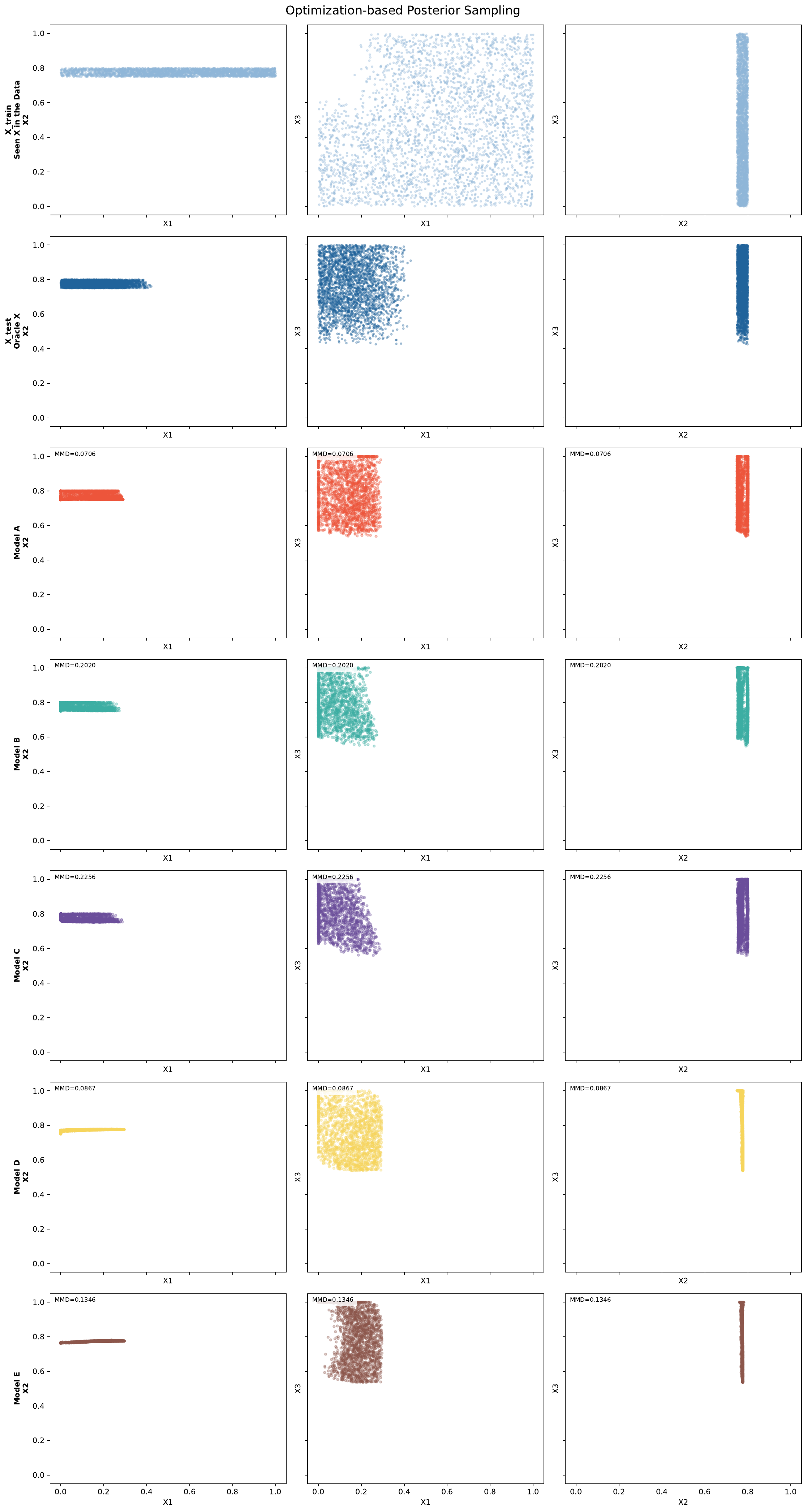}
    \caption{Complete 2-d view of the extrapolated data points compared with baselines (model B, C, D, E) on optimization-based generation method}
    \label{fig: app complete given 2d optimization}
\end{figure}

\begin{figure}
    \centering
    \includegraphics[width=0.7\linewidth]{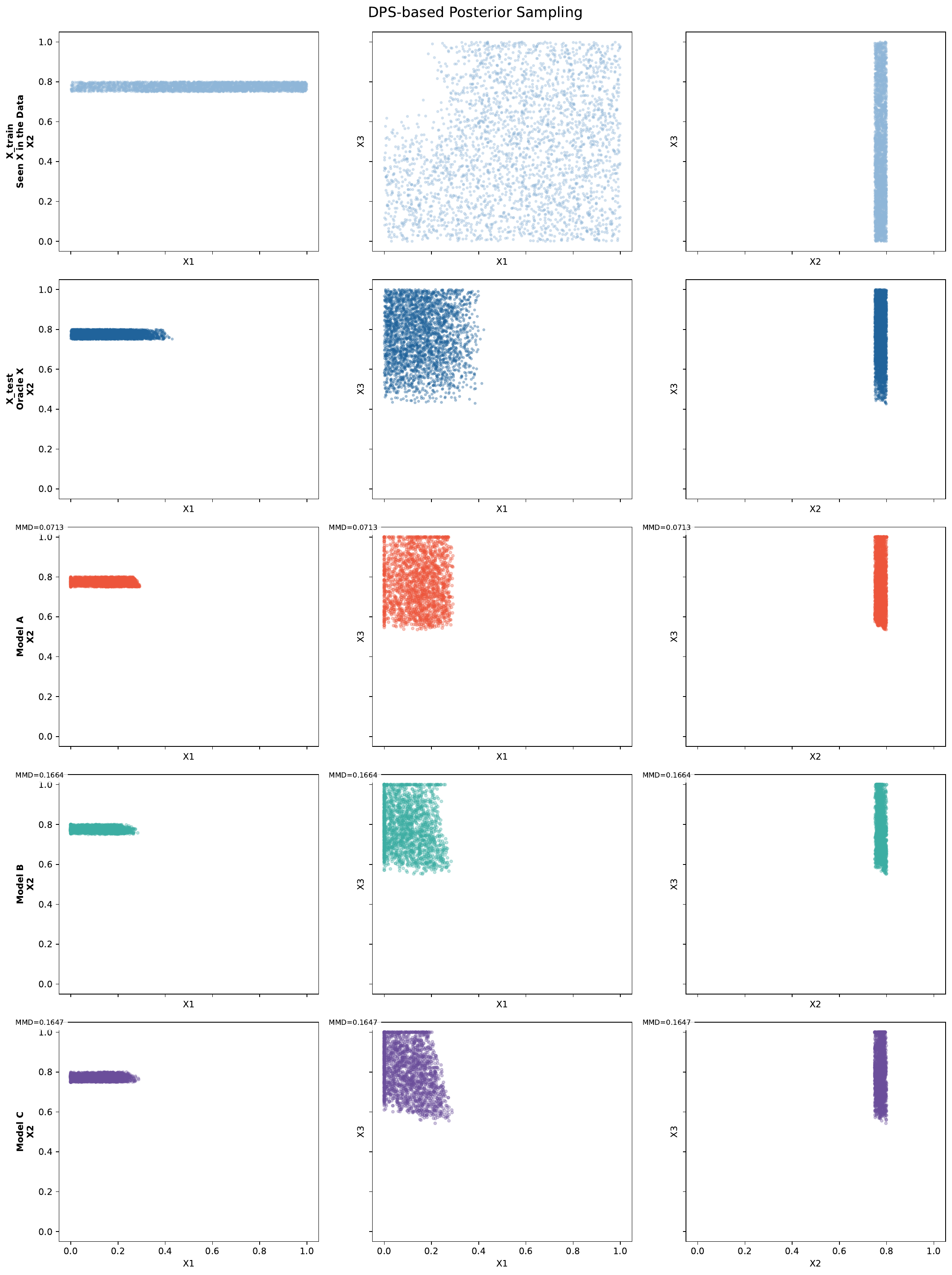}
    \caption{Complete 2-d view of the extrapolated data points compared with baselines (model B, C) on diffusion posterior sampling method}
    \label{fig: app complete given 2d dps}
\end{figure}

\subsubsection{Identification Results of Features and Specifications}

We provide preliminary results on the identification of $\mathbf{X}$ and $\mathbf{Z}$ with sparsity constraints. The experimental setting is straightforward: we follow the same structure as Fig.~\ref{fig:setting_toy_two_specifications}(b) to generate $\mathbf{X}$ and $\mathbf{Z}$, assuming linear gaussian model for the latent $\mathbf{X}$ and $\mathbf{Z}$. Then, we apply linear mixture to the generated $\mathbf{X}$ and $\mathbf{Z}$ to produce the actual observations $\mathbf{Y_X}$ and $\mathbf{Y_Z}$, respectively. The parameters for the latent model and the data split are the same as what we used in the synthetic experiment in the main content. To generate the generated data $\mathbf{Y}_X$ and $\mathbf{Y}_Z$, of which both the dimensions are set to $20$, the linear mixing transformations for is uniformly sampled from $\mathcal{U}(0,1)$. Finally, we follow the estimation procedure in Section~\ref{sec: method} and obtain the estimated $\mathbf{X}$ and $\mathbf{Z}$. We demonstrate the results by enforcing a sparsity strength of $\beta = 5e^{-4}$ and a likelihood constraint with weight $\alpha = 0.05$. See the scatter plots in Figure~\ref{fig: app Z identify} and Figure~\ref{fig: app X identify}. 

\begin{figure}[h]
    \centering
    \includegraphics[width=0.5\linewidth]{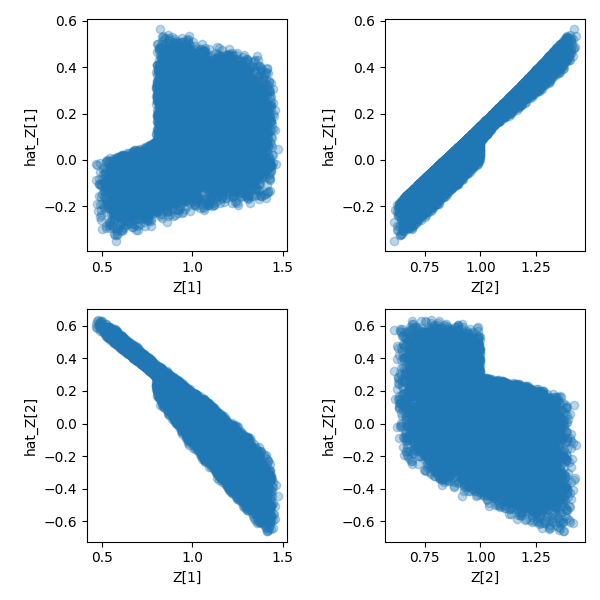}
    \caption{Identification of Specifications ($Z[0]$ and $Z[1]$ corresponds to $Z1$ and $Z2$, respectively.)}
    \label{fig: app Z identify}
\end{figure}

\begin{figure}[h]
    \centering
    \includegraphics[width=0.5\linewidth]{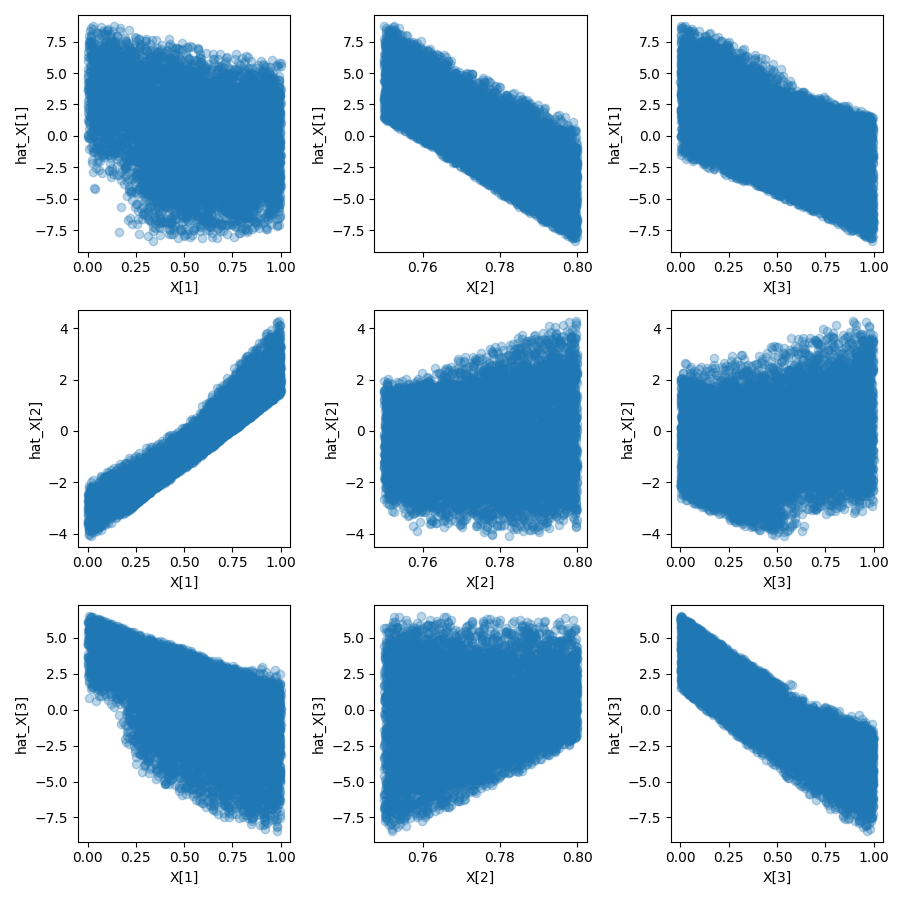}
    \caption{Identification of Features ($X[0]$, $X[1]$, and $X[2]$ correspond to $X1$, $X2$, and $X3$, respectively.)}
    \label{fig: app X identify}
\end{figure}

The identifiability of both features and specifications is more universally verified by averaging the Mean Correlation Coefficient (MCC) between the estimated variables and the ground truth variables on multiple runs. In Table~\ref{tab: app mcc}, we collect the averaged MCC scores for $Z_i;i=1,2$ and $X_i;i=1,2,3$ from 10 runs.  

\begin{table}[h]
\centering
\caption{MCC scores of features and specifications for verification of identifiability}
\begin{tabular}{ccccc}
\hline
$X_1$ & $X_2$ & $X_3$ & $Z_1$ & $Z_2$ \\
\hline
0.4346 & 0.9017 & 0.7097 & 0.9088 & 0.9393 \\
\hline
\end{tabular}
\label{tab: app mcc}  
\end{table}

Additionally, in Table~\ref{tab: app mmd}, we compare the distribution of generated data using the recovered features and specifications with that based on the oracle features in terms of the Maximum Mean Discrepancy (MMD)~\cite{NIPS2006_e9fb2eda} score. The lower the MMD, the closer the two distributions. One can see the distribution of extrapolated data with the recovered features is comparable to that with the oracle features.

\begin{table}[h]
\centering
\caption{MMD scores of extrapolated generation by the two proposed methods}
\begin{tabular}{cc}
\hline
MMD (OPT) & MMD (DPS) \\
\hline
$0.0332 \pm 0.0024$ & $0.0346 \pm 0.0029$ \\
\hline
\end{tabular}
\label{tab: app mmd}  
\end{table}

\subsubsection{Regularization on Optimization-based Sampling}

In practice, we regularize the optimization by aligning the marginal distributions of generated features with those observed in the training data. Although constrained optimization can sometimes produce implausible samples, imposing a joint prior may instead harm extrapolation performance. Marginal constraints therefore offer a practical and effective compromise. 

For Diffusion Posterior Sampling (DPS), however, such regularization is unnecessary. At each step, the sampling procedure incorporates prior guidance that effectively acts as a regularizer, while still allowing flexibility through the likelihood score. As a result, it avoids overly strict adherence to the joint prior distribution.


\subsubsection{Robustness Experiments Under Assumption Violations}
\label{app:robustness}

This section summarizes the simulation studies evaluating the robustness of \textsc{SEDGE}
when the structural assumptions discussed in Appendix~\ref{app:discussion} are violated.
All results are reported in terms of the MMD between the oracle extrapolated data distribution and the actual one, where lower values indicate better performance.
The two generation strategies compared throughout are 
\textbf{OPT} (structure-informed likelihood optimisation) and
\textbf{DPS} (diffusion posterior sampling, guidance $= 0.5$).

\paragraph{Robustness to the direction of causality (\texorpdfstring{$X \to Z$}{X→Z}
            vs.\ \texorpdfstring{$Z \to X$}{Z→X}).}
\label{app:robust_direction}

We constructed four synthetic settings to evaluate the effect of reversing the causal
direction between features and specifications:
\textbf{(A)}~Original setting ($X \to Z$);
\textbf{(B)}~$Z \to X$ with no shared features;
\textbf{(C)}~$Z \to X$ with shared features. The extrapolated generation performance, measured by MMD, is reported in Table~\ref{tab:direction_violation}. 
(In addition, the setting \textbf{(D)}~$Z_1 \to Z_2$ will  be considered separately below.)

\begin{table}[h]
  \centering
  \caption{%
    MMD under causal-direction violations.
    (A)~is the original setting; (B) and (C) reverse the direction to $Z \to X$,
    without and with shared features, respectively.
  }
  \label{tab:direction_violation}
  \small
  \begin{tabular}{lccc}
    \toprule
    \textbf{Method}
      & \textbf{(A) Original ($X \to Z$)}
      & \textbf{(B) $Z \to X$, no shared}
      & \textbf{(C) $Z \to X$, shared} \\
    \midrule
    OPT & $0.0394 \pm 0.0068$ & $0.0389 \pm 0.0090$ & $0.0588 \pm 0.0106$ \\
    DPS & $0.0362 \pm 0.0024$ & $0.0405 \pm 0.0122$ & $0.0856 \pm 0.0247$ \\
    \bottomrule
  \end{tabular}
\end{table}

When there are no shared features, the performance under $Z \to X$ remains close to the original
setting~(A).
When shared features are present~(C), performance degrades more noticeably, consistent
with the theoretical prediction that reliable extrapolation under $Z \to X$ with shared
features is fundamentally harder.
Critically, the method does not fail catastrophically even in this more challenging case,
illustrating graceful degradation.

\paragraph{Robustness to violations of conditional independence among
            \texorpdfstring{$Z$}{Z}.}
\label{app:robust_condindep}

\textit{\underline{Direct violation: introducing \texorpdfstring{$Z_1 \to Z_2$}{Z1→Z2} edges.}}
We tested the performance when a direct link $Z_1 \to Z_2$ is added to the
data-generating process, as well as when a fully dense graph among $Z$ variables is used. The comparison result is reported in Table~\ref{tab:condindep_violation}.

\begin{table}[h]
  \centering
  \caption{%
    MMD under violations of conditional independence among $Z$.
    ``Dense'' refers to a fully connected graph among $Z$ variables.
  }
  \label{tab:condindep_violation}
  \small
  \begin{tabular}{lccc}
    \toprule
    \textbf{Method}
      & \textbf{Original}
      & \textbf{$Z_1 \to Z_2$ added}
      & \textbf{Dense graph} \\
    \midrule
    OPT & $0.0394 \pm 0.0068$ & $0.1033 \pm 0.0150$ & $0.0595 \pm 0.0132$ \\
    DPS & $0.0362 \pm 0.0024$ & $0.0973 \pm 0.0002$ & $0.0738 \pm 0.0243$ \\
    \bottomrule
  \end{tabular}
\end{table}

Violating conditional independence causes noticeable degradation, but both methods still
produce reasonable samples and remain stable even in the fully dense setting.

\paragraph{Ablation stduies: removing conditional independence and sparsity objectives from the model.}
To isolate the contribution of each structural property, we evaluate four model variants, with the result given in Table \ref{tab:ablation_variants}.  One can see that removing sparsity only causes minor degradation, while removing conditional independence
hurts more substantially.
This confirms that the factorised conditional structure is the more critical of the two
inductive biases, though neither causes failure when removed.

\begin{table}[h]
  \centering
  \caption{%
    Ablation study: DPS MMD for four model variants removing conditional independence
    (Cond.\ Indep.) and/or sparsity from the estimated model.
  }
  \label{tab:ablation_variants}
  \small
  \begin{tabular}{lc}
    \toprule
    \textbf{Model variant} & \textbf{DPS MMD} \\
    \midrule
    Full model                         & $0.0468 \pm 0.0133$ \\
    Without sparsity                   & $0.0472 \pm 0.0151$ \\
    Without cond.\ independence        & $0.0690 \pm 0.0237$ \\
    Without sparsity or cond.\ indep.\ & $0.0699 \pm 0.0240$ \\
    \bottomrule
  \end{tabular}
\end{table}

\paragraph{Robustness to violations of the conservative approximation (Assumption~3).}
\label{app:robust_conservative}

We constructed five synthetic settings with increasing degrees of violation of the
conservative assumption (Assumption~3):
\textbf{Test~0}~(baseline): conservative setting where all marginals and joint support are
well covered;
\textbf{Test~1}: expanded joint support beyond the observed region, while keeping all
marginals covered;
\textbf{Test~2}: partial missing support in marginal $p(Z_1)$;
\textbf{Test~3}: partial missing support in marginal $p(Z_2)$;
\textbf{Test~4}: partial missing support in both marginals $p(Z_1)$ and $p(Z_2)$. The comparison results are presented in Table \ref{tab:conservative_violation}. 

\begin{table}[h]
  \centering
  \caption{%
    MMD across five settings of increasing violation of the conservative assumption
    (Assumption~3).
    Tests 0--4 correspond to increasing severity of violation.
  }
  \label{tab:conservative_violation}
  \small
  \begin{tabular}{lccccc}
    \toprule
    \textbf{Method}
      & \textbf{Test 0}
      & \textbf{Test 1}
      & \textbf{Test 2}
      & \textbf{Test 3}
      & \textbf{Test 4} \\
    \midrule
    OPT
      & $0.0554 \pm 0.0269$
      & $0.0490 \pm 0.0230$
      & $0.0510 \pm 0.0169$
      & $0.0551 \pm 0.0253$
      & $0.0425 \pm 0.0164$ \\
    DPS
      & $0.0431 \pm 0.0121$
      & $0.0578 \pm 0.0207$
      & $0.0672 \pm 0.0207$
      & $0.0570 \pm 0.0204$
      & $0.0668 \pm 0.0178$ \\
    \bottomrule
  \end{tabular}
\end{table}

They suggest that the performance degrades gradually rather than abruptly as violations become stronger.
OPT is generally more robust under stronger violations, likely because it explores the
solution space more freely, whereas DPS is partially constrained by the prior learned from
the training joint distribution.
These results confirm that the method retains practical robustness even outside the
identifiable regime guaranteed by Assumption~3.

\paragraph{Robustness to incorrect structure estimation (misspecified
            \texorpdfstring{$X$--$Z$}{X-Z} relations).}
\label{app:robust_structure}

In practice, the parent sets are unknown and must be estimated.
We evaluated robustness to incorrect structure estimation by testing four misspecified
$X$--$Z$ structures:
(i)~removing a true edge ($X_1 \to Z_1$);
(ii)~adding a spurious edge ($X_1 \to Z_2$);
(iii)~both simultaneously; and
(iv)~the correct structure (original). The comparison results are given in Table \ref{tab:misspecified_structure}.

\begin{table}[h]
  \centering
  \caption{%
    MMD under misspecified $X$--$Z$ graph structures.
  }
  \label{tab:misspecified_structure}
  \small
  \begin{tabular}{lcccc}
    \toprule
    \textbf{Method}
      & \textbf{Original}
      & \textbf{Remove $X_1 \!\to\! Z_1$}
      & \textbf{Add $X_1 \!\to\! Z_2$}
      & \textbf{Remove + Add} \\
    \midrule
    OPT & $0.0346 \pm 0.0090$ & $0.0734 \pm 0.0150$ & $0.0385 \pm 0.0096$ & $0.0505 \pm 0.0133$ \\
    DPS & $0.0374 \pm 0.0048$ & $0.0380 \pm 0.0040$ & $0.0370 \pm 0.0049$ & $0.0381 \pm 0.0045$ \\
    \bottomrule
  \end{tabular}
\end{table}

One can see that the method is quite robust to extra (spurious) edges: even with a denser likelihood model,
the estimator can down-weight irrelevant inputs internally.
In contrast, missing true edges is more harmful, as it limits the model's ability to
represent underlying dependencies.
Despite imperfect structure recovery, MMD remains low in most settings.




\paragraph{Sensitivity to concept-extractor pipeline and prior model choices.}
\label{app:robust_latent}

The theoretical results assume access to the true feature and specification variables $X$
and $Z$.
In practice these are estimated using VAEs and normalising flows.
We evaluate robustness to errors in latent representations by comparing different extractors, including the deterministic AE, frozen extractor (pretrained VAE extractor kept fixed during likelihood training), and linear extractor,
and prior choices, including the DDPM prior and  Flow Matching prior. Table \ref{tab:extractor_prior} reports the MMD by the two methods, OPT and DPS.

\begin{table}[h]
  \centering
  \caption{%
    The MMD by OPT and DPS for different extractor and prior choices.
    ``Frozen extractor'' denotes a pretrained VAE extractor kept fixed during
    likelihood training.
  }
  \label{tab:extractor_prior}
  \small
  \begin{tabular}{lcc}
    \toprule
    \textbf{Model configuration} & \textbf{OPT MMD} & \textbf{DPS MMD} \\
    \midrule
    Deterministic AE              & $0.0295 \pm 0.0037$ & $0.0345 \pm 0.0031$ \\
    Frozen extractor (pretrained VAE) & $0.0281 \pm 0.0032$ & $0.0316 \pm 0.0073$ \\
    Linear extractor              & $0.0806 \pm 0.0048$ & $0.0557 \pm 0.0078$ \\
    \midrule
    DDPM prior                    & $0.0307 \pm 0.0040$ & $0.0351 \pm 0.0037$ \\
    Flow Matching prior           & $0.0307 \pm 0.0040$ & $0.0310 \pm 0.0026$ \\
    \bottomrule
  \end{tabular}
\end{table}

The results suggest that the method is robust across extractors and prior choices as long as these components are sufficiently expressive. Performance degrades with the linear extractor, which is insufficiently expressive to capture the latent structure, but all nonlinear variants yield low MMD. The framework is agnostic to the specific concept-extraction pipeline and prior model; these components mainly affect latent quality rather than the structural extrapolation mechanism itself.

\subsection{Extrapolated Text-to-Image Generation}
\label{app:experiment_results}


In this section, we first describe the experimental setup, including implementation details and datasets. 

\subsubsection{Experiment Setup}

As mentioend in the main paper, we use Diffusion-based generation (i.e. diffusion posterior sampling, DPS) here since images (even image concepts) are of high dimensions. Therefore, we need three components: a likelihood model, a causal mask, and a prior model.

To give an initial implementation of SEDGE method in text-to-image generation task, we use a simple model for this qualitative experiment.
We want to do extrapolation for image, so we define the image concepts ($z^I$) as the feature variables and text concepts ($z^T$) as the specification ones. We assume the concepts learned by Aligner~\citep{conceptaligner} are the ground-truth concepts, so the task fall into Scenario 1 in Synthetic experiment, i.e. specifications and features are given. However, we do not have the true causal graph from features to specifications, since the original causal relations learned by Aligner is inverse to our setting. So we design a model to train a likelihood predictor while leanring the causal relations from image concepts to text concepts. We use $N_I$ and $N_T$ as the number of image and text concepts, respectively.

\paragraph{Likelihood Predictor.} 
To model the conditional dependency between image and text concepts, we use a concept-wise conditional flow model to estimate $p^\mathcal{D}(\mathbf{Z} \,|\, \mathbf{X})=p(\mathbf{z}^T\mid \mathbf{z}^I).$ For each target text concept $z^T_i$, the model employs a learnable soft-mask $\mathbf{M}_i\in[0,1]^{N_I}$ to aggregate its parent image concepts $z^I$ (denoted as $\mathbf{V}_i$ in the main paper). This masked context is then encoded via an MLP to parameterize a conditional normalizing flow, which computes the log-likelihood of $z^T_i$, 
$$\log p(z^T_i\mid z^I) = \log p_u(f_i(z^T_i;c_i)) + \log \left\vert \det \frac{\partial f_i}{\partial z^T_i}\right\vert,$$
where $c_i=\operatorname{MLP}_i(z^I\odot \mathbf{M}_i)$, $p_u$ represents a standard Gaussian distribution $\mathcal{N}(0, \mathbf{I})$.
The entire predictor is optimized by minimizing the total negative log-likelihood (NLL) across all $N_T$ text concepts.

\paragraph{Causal Mask.}
As mentioned above, we construct a learnable mask $\mathbf{M}\in\mathbb{R}^{N_T\times N_I}$, where $\mathbf{M}_{ij}=\sigma(w_{ij})$, $\sigma(\cdot)$ means sigmoid function. We apply the sparsity constraint to learn this mask, i.e. $\mathcal{L}_s = \sum_{ij}\Vert \mathbf{M}_{ij}\Vert_1$.

\paragraph{Prior Model.}
We employ an unconditional Diffusion Transformer (DiT) to model the prior distribution of image concepts $p^\mathcal{D}(z^I)$. This model treats the concept set $\mathbf{Z}^I\in\mathbb{R}^{N_I\times D_I}$ as a sequence of tokens ($D_I$ is image concepts dimension). 

\paragraph{Backbone.}
For the generation model from image concepts $z^I$ to true images, we implement our method based on SANA~\citep{xie2024sana}, similar to Aligner~\citep{conceptaligner}. 

\paragraph{Datasets.} We use the dataset similar to Aligner, since we want to keep the leanred latnet concepts meaningful. We exclude the EMU-Edit dataset.

\subsubsection{More Experiment Results}

We show some more results of our SEDGE method on image generation, compared to SANA and Aligner, as shown in Fig.~\ref{app_fig:more_qual}. Also, to show the generation consistency of our model, we also include Fig.~\ref{app_fig:camera-aligner-vs-ours} for analysis.

\begin{figure}[t]
  \centering
  \setlength{\tabcolsep}{3pt}      
  \renewcommand{\arraystretch}{1}  

  \newcommand{\imgw}{0.18\linewidth}

  \begin{tabular}{@{}c c c c c @{}}
    & \shortstack{\scriptsize\texttt{a bird is}\\\scriptsize\texttt{eating ice cream}}
     & \shortstack{\scriptsize\texttt{a crocodile with }\\\scriptsize\texttt{a lion's mane}}
     & \shortstack{\scriptsize\texttt{a dolphin with}\\\scriptsize\texttt{ eagle wings}}
     & \shortstack{\scriptsize\texttt{a shopping cart }\\\scriptsize\texttt{with tank treads}}
 \\
    \addlinespace[2pt]

    \raisebox{1.1\height}{\rotatebox{90}{\textbf{SANA}}} &
    \includegraphics[width=\imgw]{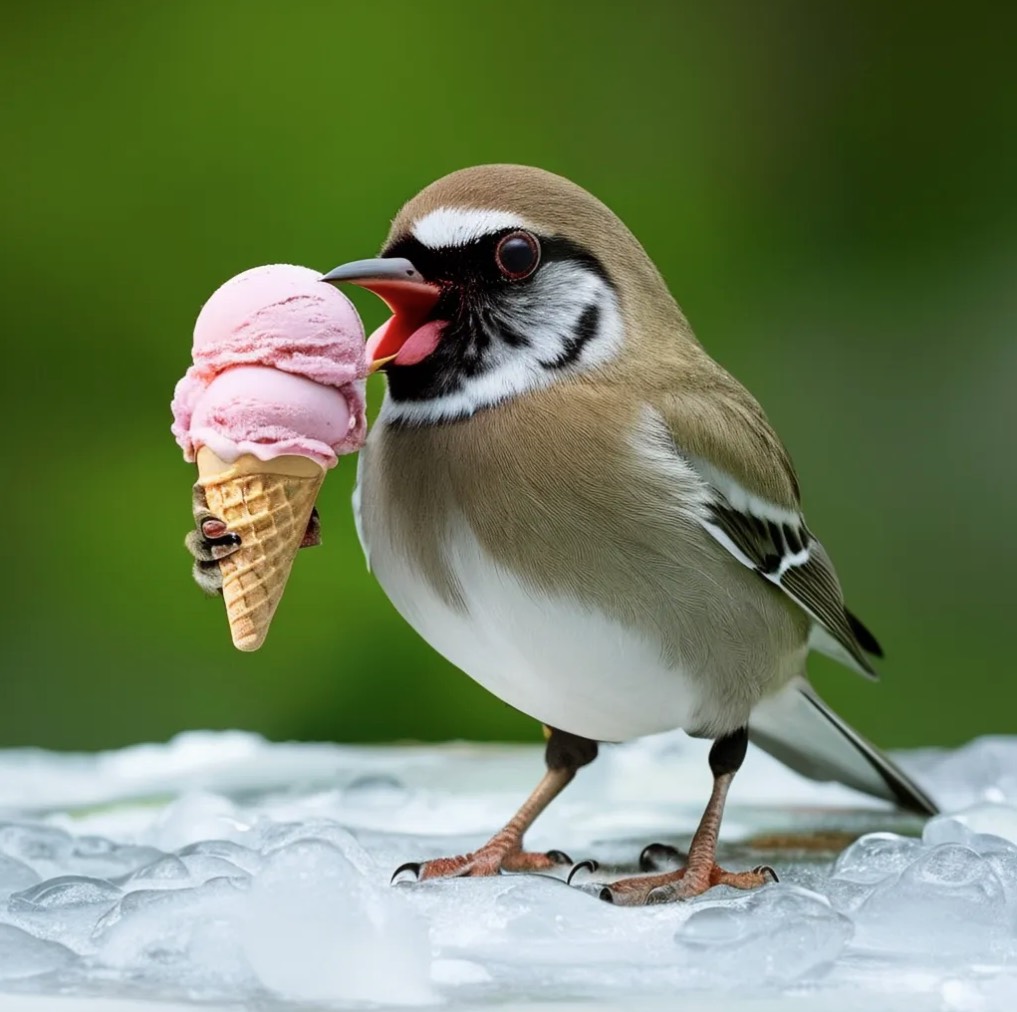} &
    \includegraphics[width=\imgw]{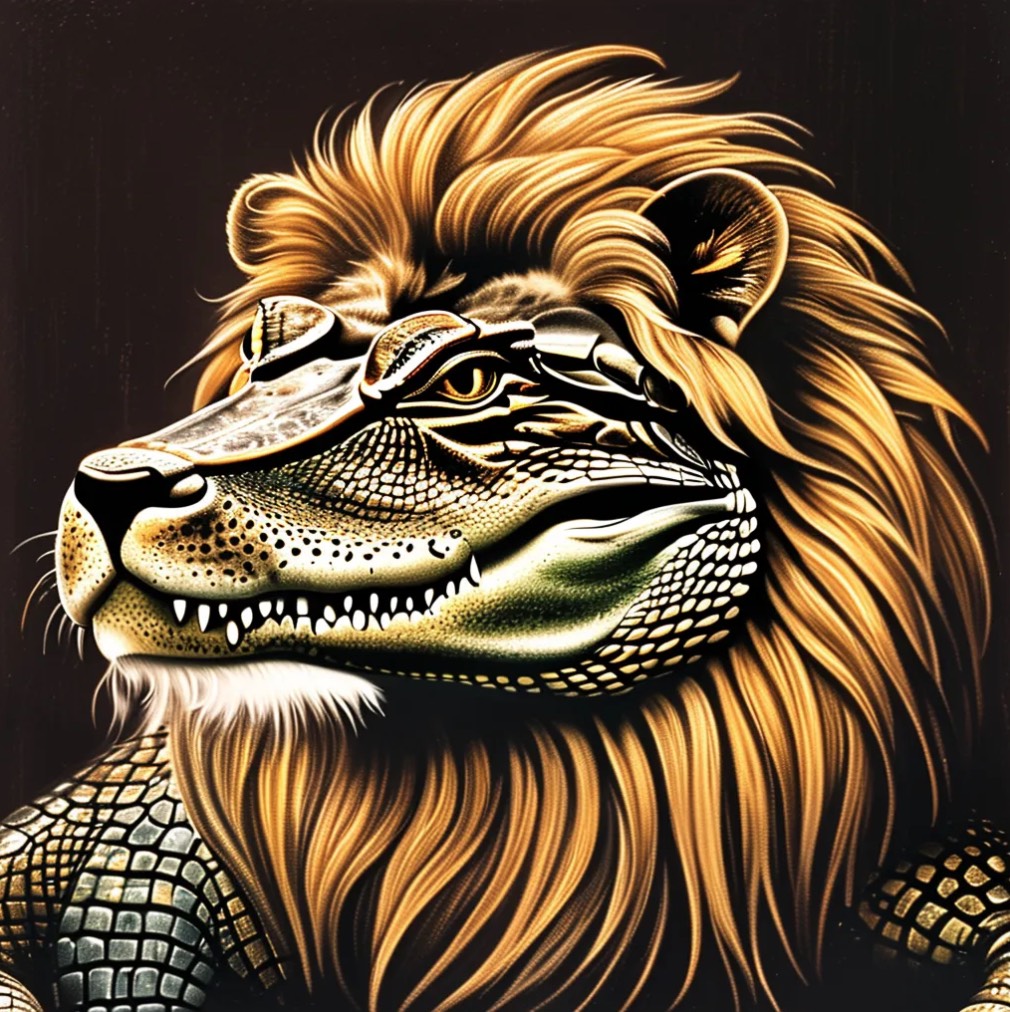} &
    \includegraphics[width=\imgw]{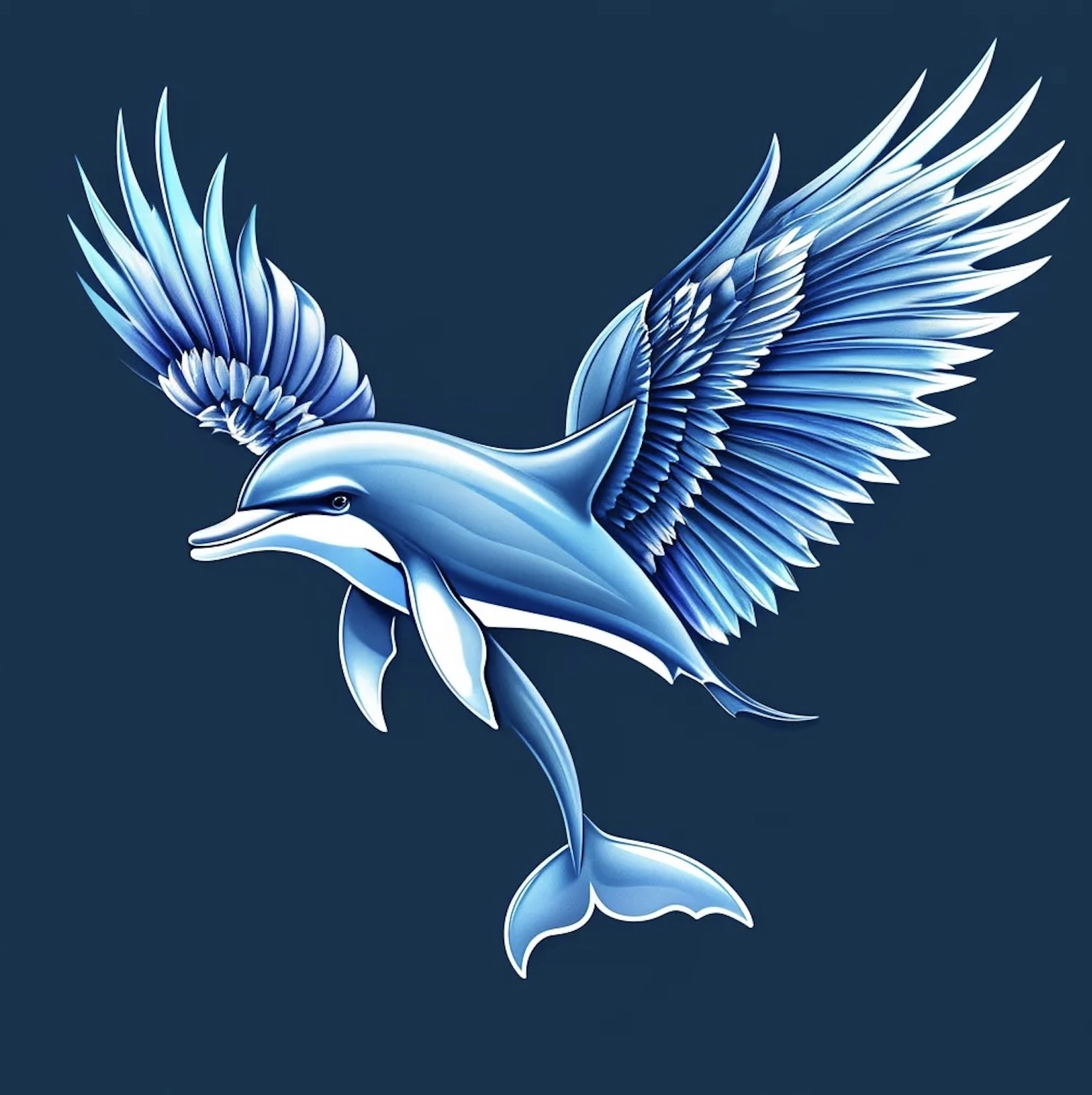} & \includegraphics[width=\imgw]{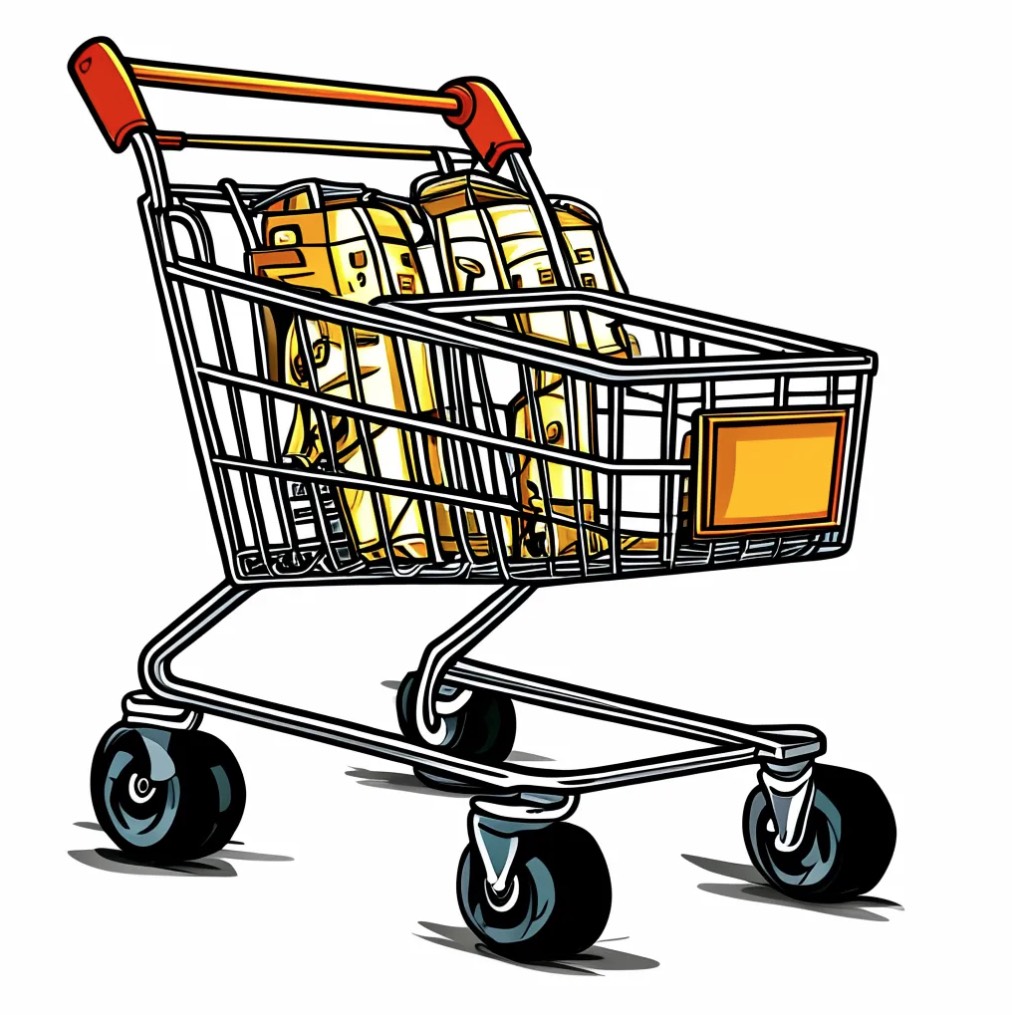} 
    \\

    \raisebox{0.8\height}{\rotatebox{90}{\textbf{Aligner}}} &
    \includegraphics[width=\imgw]{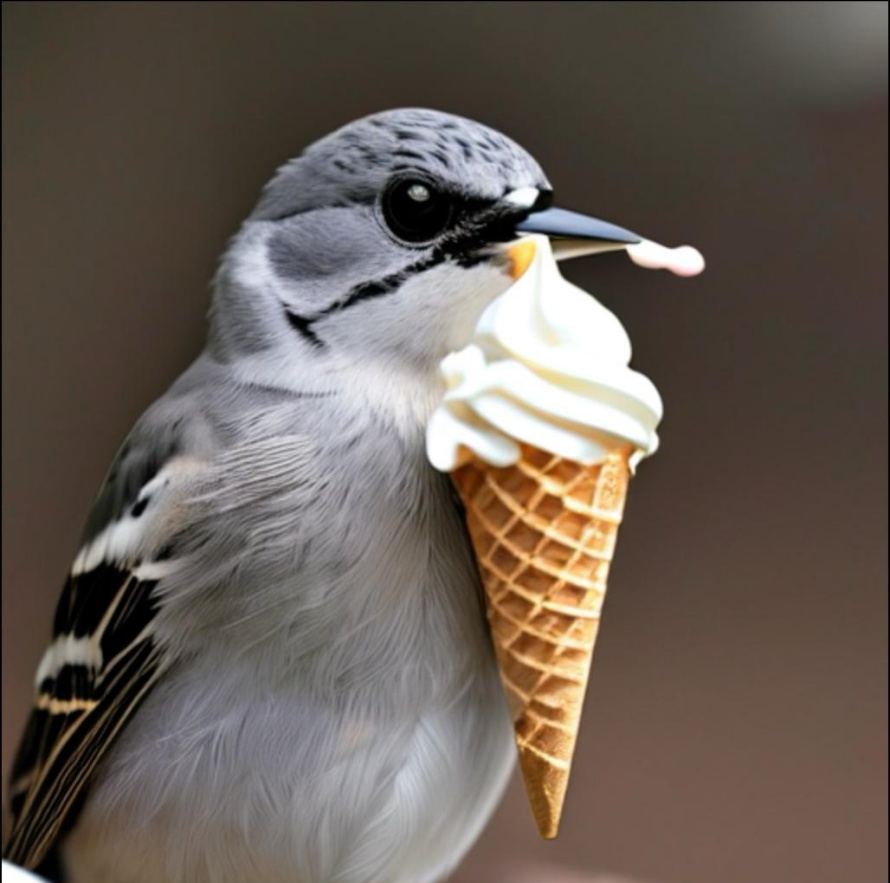} &
    \includegraphics[width=\imgw]{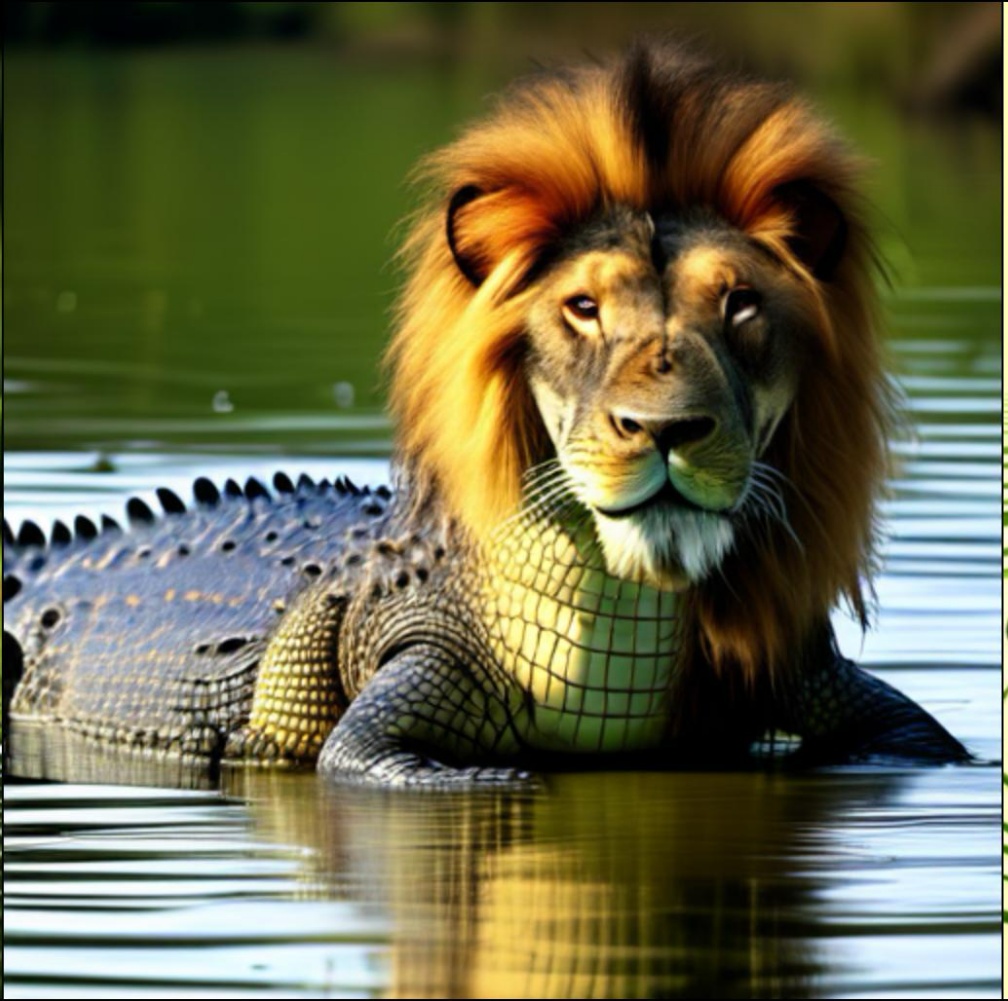} &
    \includegraphics[width=\imgw]{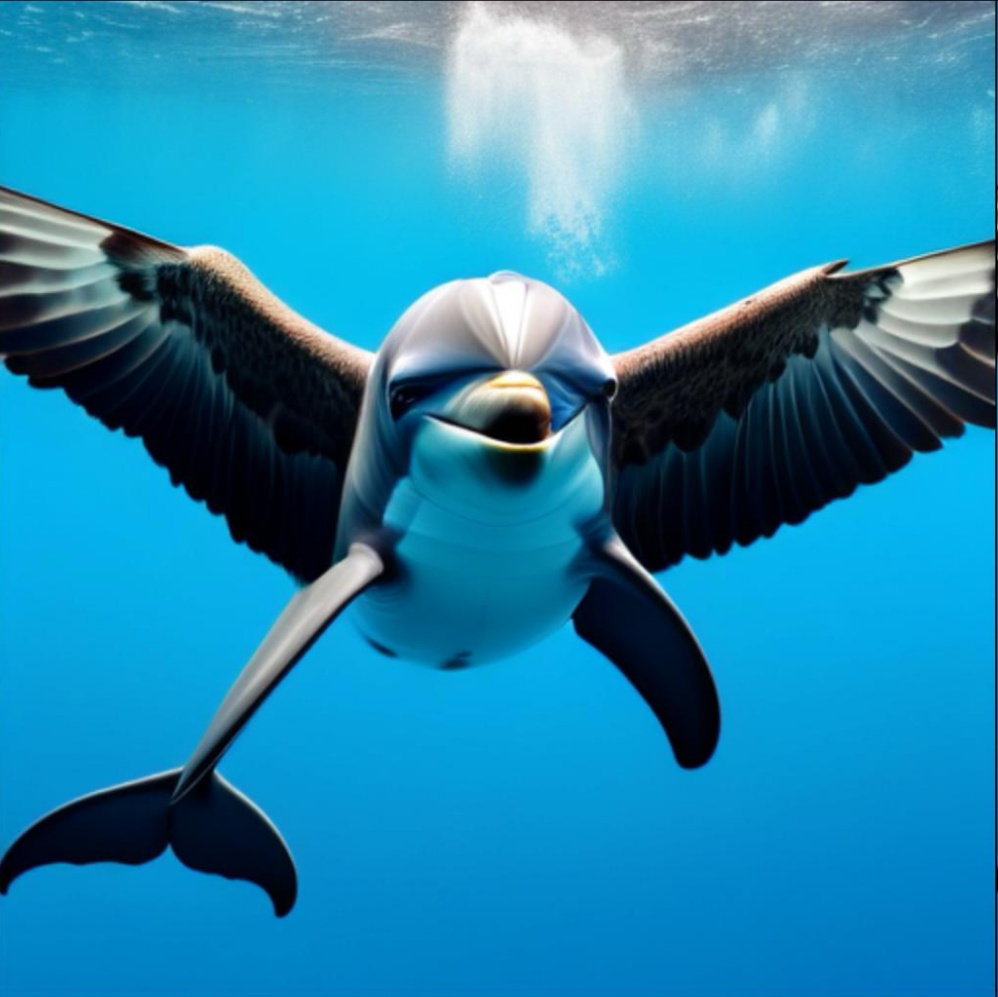} & \includegraphics[width=\imgw]{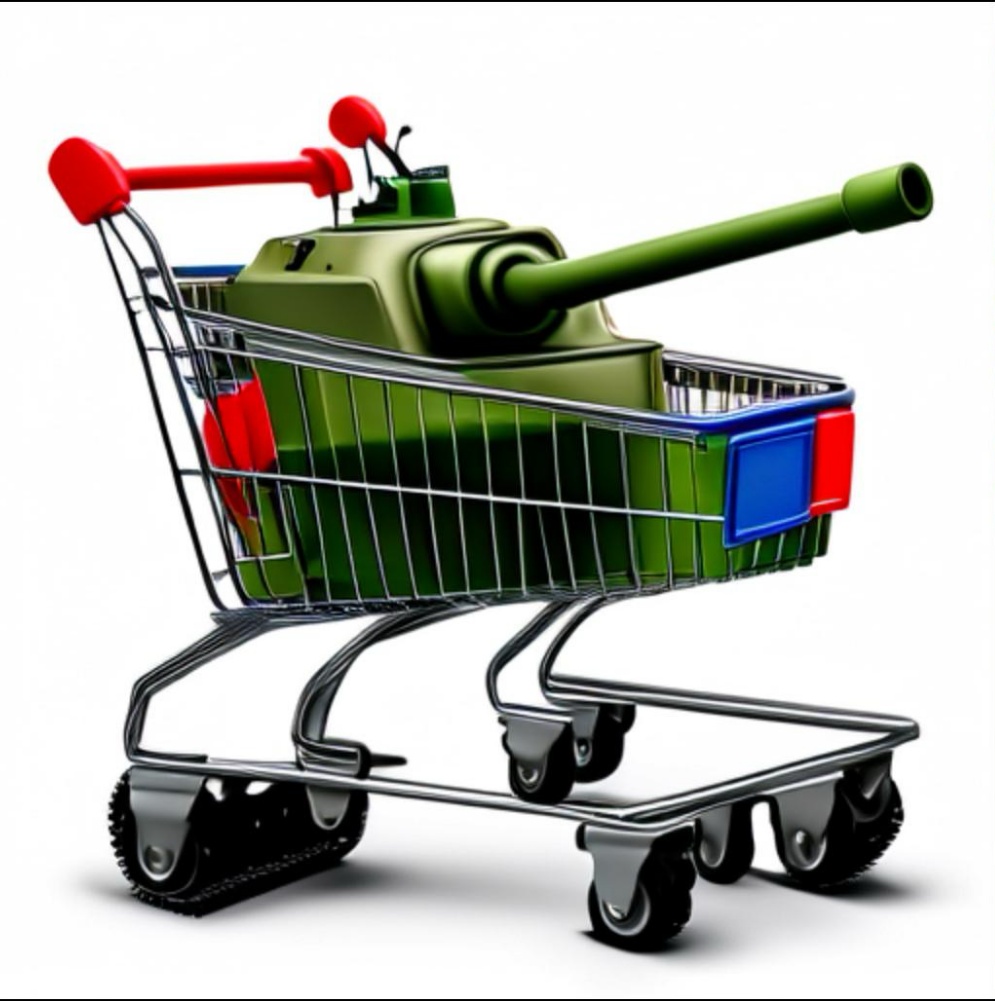} 
    \\

    \raisebox{0.8\height}{\rotatebox{90}{\textbf{SEDGE}}} &
    \includegraphics[width=\imgw]{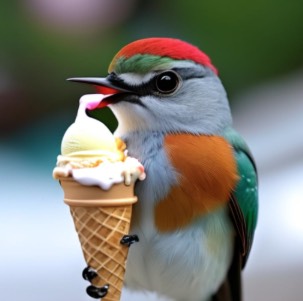} &
    \includegraphics[width=\imgw]{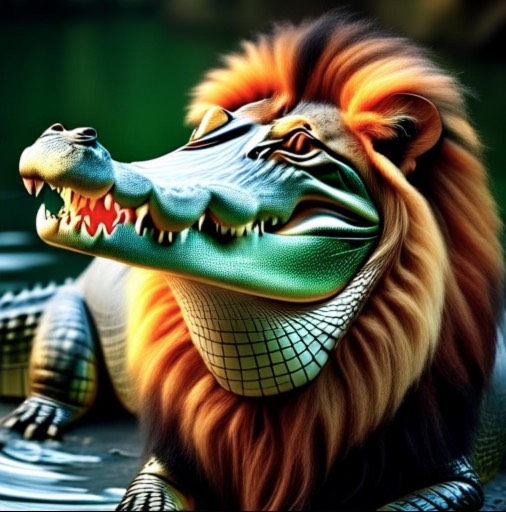} &
    \includegraphics[width=\imgw]{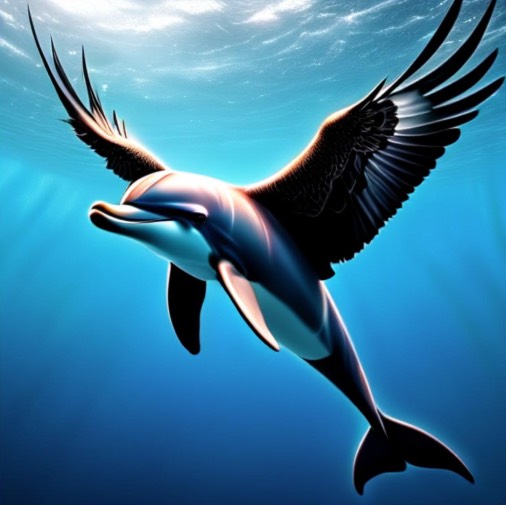} & \includegraphics[width=\imgw]{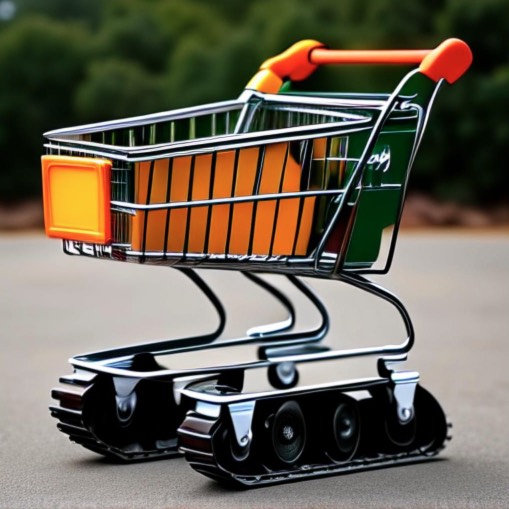} 
    \\
  \end{tabular}

  \caption{More qualitative comparisons under different prompts. }
  \label{app_fig:more_qual}
  \vskip -0.1in
\end{figure}

\begin{figure}[ht]
  \vskip 0.2in
  \centering

  \setlength{\tabcolsep}{8pt}   
  \renewcommand{\arraystretch}{1}

  \begin{tabular}{c c}
    {\bfseries Aligner} & {\bfseries Our results across different guidance values ($\eta\in\{0.1, 0.3, 0.5, 0.7, 1.0, 1.5\}$)} \\
    \addlinespace[3pt]

    \includegraphics[width=0.118\columnwidth]{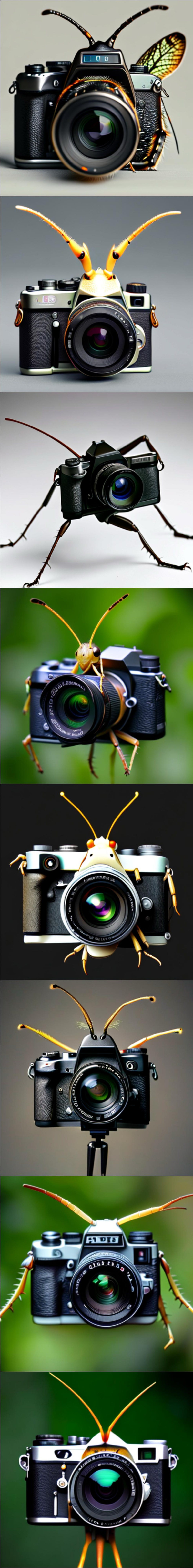}
    &
    \includegraphics[width=0.72\columnwidth]{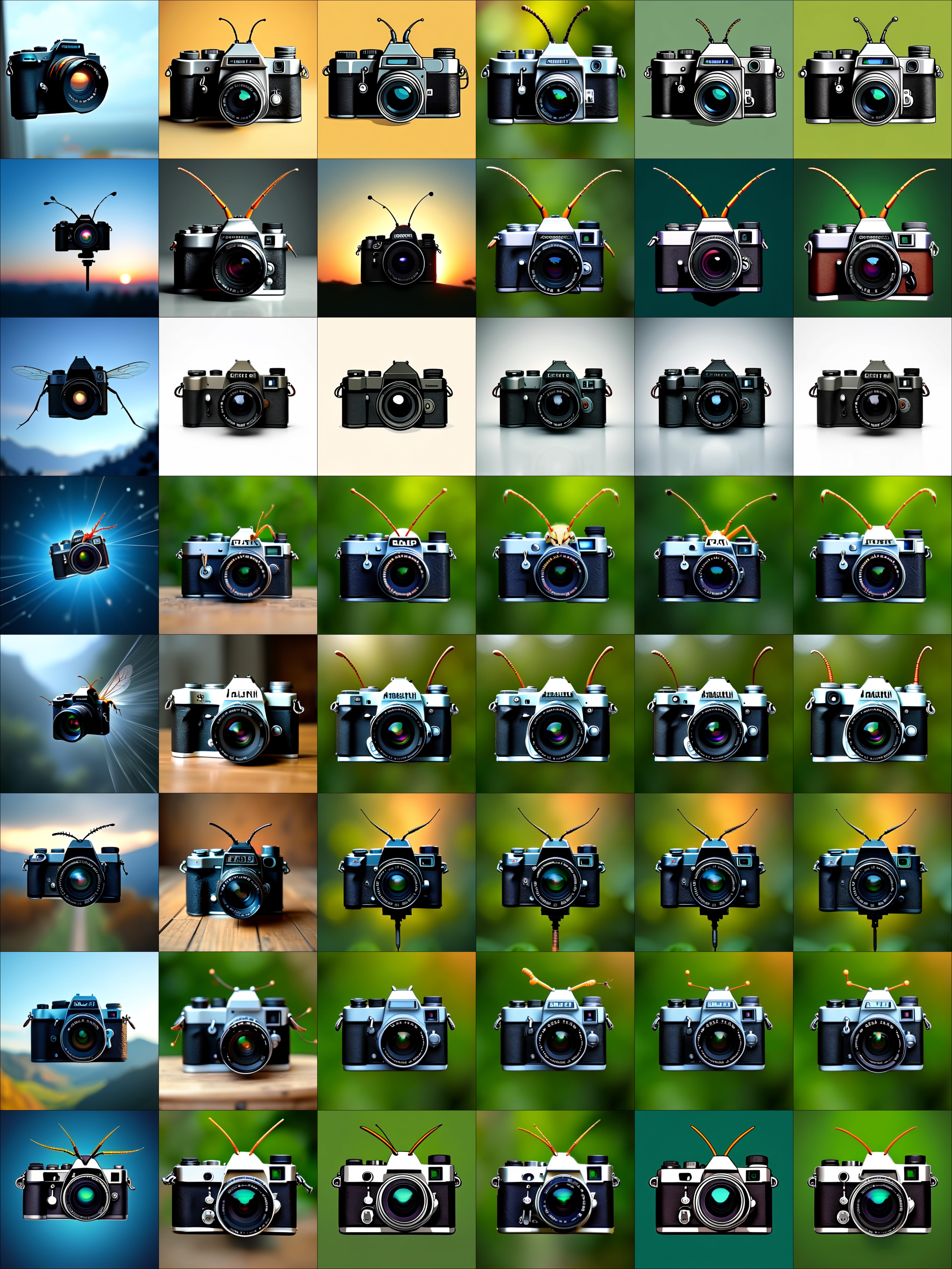}
  \end{tabular}

  \caption{Consistency of our generation. Given prompt \texttt{a camera with insect atennae}, we generated figures using both ours and concept aligner models acorss $8$ different seeds. As we can see in the figure, the generation of concept aligner has mixed up concepts like legs and bodies. Our generation results (especially with larger guidance values) makes consistent generation.}
  \label{app_fig:camera-aligner-vs-ours}
\end{figure}

\subsubsection{Quantitative Evaluation Results}
\label{app: T2I quantitative evalaution}
We constructed a new 75-prompt benchmark for extrapolative T2I for quantitative evaluation, and verified there is no significant train-test overlap via prompt similarity: full-Composition Coverage is 0 and Composition Novelty Score is 1, confirming genuine extrapolation.
We then compared SEDGE with SANA and a retrained Aligner using GPT-4o alignment scoring (higher is better). 

\begin{table}[h]
\centering
\caption{Model alignment comparison}
\begin{tabular}{lccc}
\hline
\textbf{Models} & \textbf{SANA} & \textbf{Aligner} & \textbf{Ours} \\
\hline
GPT-4o alignment & 51.9\% & 50.0\% & 53.0\% \\
\hline
\end{tabular}

\label{tab:alignment}
\end{table}



\end{document}